%% file: main.tex
\documentclass{article}

\usepackage{PRIMEarxiv}

\usepackage[utf8]{inputenc} 
\usepackage[T1]{fontenc}    
\usepackage{url}            
\usepackage{booktabs,longtable,array,ragged2e,xcolor,colortbl}
\usepackage{amsfonts}       
\usepackage{nicefrac}       
\usepackage{microtype}      
\usepackage{lipsum}
\usepackage{amsmath}
\usepackage{amsthm}
\usepackage{amssymb}
\usepackage{empheq}
\usepackage{graphicx}
\usepackage{wrapfig}
\usepackage{subcaption} 
\usepackage{algorithm}
\usepackage{algpseudocode}

\usepackage{siunitx }
\usepackage{bigints}
\usepackage{bm}
\usepackage{diagbox}
\usepackage{makecell}
\usepackage{multirow}
\usepackage{caption}
\usepackage{subcaption}
\usepackage{graphicx}

\include{pythonlisting}
\usepackage{xcolor}
\usepackage{mathtools}
\usepackage{enumitem}
\usepackage[most]{tcolorbox}
\usepackage{hyperref}
\hypersetup{
    colorlinks=true,
    citecolor=black,
    linkcolor=black,
    filecolor=black,
    urlcolor=black,
}

\usepackage{pifont}

\newtheorem{theorem}{Theorem}[section]

\newtheorem{remark}[theorem]{Remark}

\numberwithin{equation}{section}

\title{When PINNs Go Wrong: Pseudo-Time Stepping Against Spurious Solutions
}

\author{
  Sifan Wang$^{1}$ \quad
  Shawn Koohy$^{2}$ \quad
  Yiping Lu$^{3}$ \quad
  Paris Perdikaris$^{2}$ \\[0.5em]
  $^{1}$Institute for Foundations of Data Science, Yale University \\
  [0.3em ]
  $^{2}$Department of Mechanical Engineering and Applied Mechanics, University of Pennsylvania \\[0.3em]
    $^{3}$Industrial Engineering \& Management Science, Northwestern University \\
  \texttt{sifan.wang@yale.edu} \quad
\texttt{yiping.lu@northwestern.edu} \quad
  \texttt{\{skoohy, pgp\}@seas.upenn.edu} 
}

\begin{document}
\maketitle

\begin{abstract}
Physics-informed neural networks (PINNs) provide a promising machine learning framework for solving partial differential equations, but their training often breaks down on challenging problems, sometimes converging to physically incorrect solutions despite achieving small residual losses. This failure, we argue, is not merely an optimization difficulty. Rather, it reflects a fundamental weakness of the empirical PDE residual loss, which can admit trivial or spurious solutions during training. From this perspective, we revisit pseudo-time stepping, a technique that has recently shown strong empirical success in PINNs. We show that its main benefit is not simply to ease optimization; instead, when combined with collocation-point resampling, it helps reveal and avoid spurious solutions. At the same time, we find that the effectiveness of pseudo-time stepping depends critically on the choice of step size, which cannot be tuned reliably from the training loss alone. To overcome this limitation, we propose an adaptive pseudo-time stepping strategy that selects the step size from a finite-difference surrogate of the local residual Jacobian, yielding the largest step permitted by local stability without per-problem tuning. Across a diverse set of PDE benchmarks, the proposed method consistently improves both accuracy and robustness. Together, these findings provide a clearer understanding of why PINNs fail and suggest a practical pathway toward more reliable physics-informed learning.  All code and data accompanying this manuscript are available at \url{https://github.com/sifanexisted/jaxpi2}.
\end{abstract}

\keywords{Physics-informed neural networks \and Partial differential equations \and  Pseudo-time stepping }

\section{Introduction}


Physics-informed neural networks (PINNs) \cite{raissi2019physics,karniadakis2021physics} have emerged as a flexible framework for solving partial differential equations by embedding the governing equations and auxiliary conditions directly into the training objective. Their mesh-free formulation, seamless use of automatic differentiation, and ability to combine physical constraints with observational data make them especially attractive for scientific computing. As a result, PINNs have been used not only for forward simulation of complex dynamical systems \cite{wang2025simulating}, but also for inverse problems such as parameter and coefficient identification \cite{haghighat2021physics,chen2020physics}, data assimilation from sparse and noisy observations \cite{he2020physics}, uncertainty-aware modeling \cite{yang2021b,linka2022bayesian}, and data-driven discovery of governing equations from partial measurements \cite{raissi2018deep,raissi2020hidden,wang2025deep}. Representative applications span fluid mechanics \cite{rao2020physics,cai2021physics}, solid mechanics \cite{zhang2022analyses,diao2023solving}, heat transfer \cite{cai2021physics,xu2023physics,li2025physics}, reaction--diffusion systems \cite{colace2024physics,serebrennikova2024physics}, wave propagation \cite{wang2023physics}, subsurface flow \cite{tartakovsky2020physics,fraces2021physics,haruzi2023modeling}, and biomedical modeling \cite{kissas2020machine,sel2023physics,mukhmetov2023physics}.

Despite their promise, PINNs often remain difficult to train reliably on challenging problems, especially when the target solution exhibits sharp transitions, long-time evolution, multiscale structure, or strong nonlinear and stiff dynamics. To address these limitations, a broad range of improvements has been proposed  recently, including more expressive   architectures \cite{jagtap2020extended,wang2024piratenets,moseley2021finite,zhao2023pinnsformer}, loss weighting strategies \cite{mcclenny2020self,wang2022and,bischof2025multi},  optimization algorithms \cite{rathore2024challenges, muller2023achieving, wang2025gradient}, and more effective collocation-point sampling schemes \cite{wu2023comprehensive,gao2023failure,mao2023physics,daw2022mitigating}. Beyond improving empirical performance, these developments have also led to a clearer picture of why standard PINNs fail in the first place. In particular, recent studies have identified several recurring training pathologies, including spectral bias toward low-frequency solution components \cite{wang2021eigenvector,mustajab2024physics},  gradient conflicts across different loss terms \cite{wang2021understanding,anagnostopoulos2024residual, liu2024config}, optimization difficulties in stiff or multiscale regimes \cite{rathore2024challenges,krishnapriyan2021characterizing}, and failures caused by violating the intrinsic causal structure of time-dependent problems \cite{penwarden2023unified,wang2024respecting,roy2024exact}.

Yet these advances do not fully resolve a more fundamental issue: training can appear stable from the perspective of optimization while the resulting prediction remains physically incorrect. One particularly important manifestation of this failure is the tendency of PINNs to converge to trivial or spurious solutions. Among the many strategies proposed to improve PINN training, pseudo-time stepping has shown especially strong empirical benefits in mitigating this issue. Adapted from pseudo-transient continuation in numerical analysis, it replaces the original PDE residual constraints with a relaxed formulation that can guide training more robustly \cite{kelley1998convergence,cao2023tsonn,zhang2025pseudo}. Existing explanations often attribute its success to improved conditioning or to a better-behaved loss landscape, but here we show empirically that this explanation is insufficient.

In this work, we revisit pseudo-time stepping from the perspective of spurious-solution formation in PINNs. Our goal is to better understand why PINNs can fail even when the empirical residual loss is small, why pseudo-time stepping is often effective in such cases, and how this understanding can be translated into a more practical and robust training strategy. Our main contributions are as follows:
\begin{itemize}[leftmargin=*,itemsep=0.3em,topsep=0.3em]

    \item We identify a fundamental failure mode of the empirical PINN loss -- convergence to spurious solutions that satisfy all training constraints at any finite collocation set -- and prove its existence in a representative homogeneous setting.

    \item We demonstrate, contrary to the prevailing optimization-centric view, that the empirical effectiveness of pseudo-time stepping cannot be explained by improved conditioning alone. 

    \item We show that fixed pseudo-time stepping is highly sensitive to the choice of step size, and that this sensitivity cannot be diagnosed from the training loss -- making a fixed step size unreliable in practical settings where reference solutions are unavailable.
        
    \item We derive an adaptive pseudo-time stepping strategy directly from the theory: a Barzilai–Borwein-style \cite{barzilai1988two} finite-difference surrogate for the inverse local Jacobian magnitude.
    
    \item We validate the proposed method against strong baselines across 10 challenging benchmarks spanning shock formation, chaotic dynamics, reaction-diffusion, and high-Reynolds-number flows, with consistent gains over both the baseline and best-tuned fixed pseudo-time stepping.

\end{itemize}

The rest of the paper is organized as follows. In Section~\ref{sec:method}, we first review the standard PINN formulation in Section~\ref{sec: pinns} and then analyze the spurious-solution failure mode induced by the empirical residual loss in Section~\ref{sec: failure_mode}. We next revisit pseudo-time stepping in PINNs in Section~\ref{sec:pseduo_time}, where we highlight a key paradox: although the method is effective when collocation points are resampled, it can fail on a fixed collocation set even while attaining an apparently smaller training loss. This observation challenges the conventional view that pseudo-time stepping helps mainly by improving conditioning. We then resolve this paradox by developing a new mechanistic interpretation showing that pseudo-time stepping helps expose hidden residual defects in spurious solutions. In Section~\ref{sec:adaptive_pseduo_time}, we further show that conventional pseudo-time stepping is highly sensitive to the choice of step size, which motivates the adaptive pseudo-time stepping strategy introduced there. Section~\ref{sec:results} presents empirical results on a diverse set of PDE benchmarks, and Section~\ref{sec:discussion} concludes with a discussion of the implications and future directions.

\section{Method}
\label{sec:method}

\subsection{Physics-informed neural networks (PINNs)}
\label{sec: pinns}

In this section, we briefly review the standard formulation of physics-informed neural networks (PINNs), following the original framework of Raissi {\em et al.}~\cite{raissi2019physics}. 
Without loss of generality, we consider an abstract parabolic PDE of the form
\begin{equation}
\label{eq:problem}
\left\{
\begin{aligned}
\mathbf{u}_t + \mathcal{D}[\mathbf{u}] &= \mathbf{f}, 
&& (t,\mathbf{x}) \in [0,T]\times\Omega,\\
\mathcal{B}[\mathbf{u}] &= 0, 
&& (t,\mathbf{x}) \in [0,T]\times\partial\Omega, \\
\mathbf{u}(0,\mathbf{x}) &= \mathbf{g}(\mathbf{x}), 
&& \mathbf{x}\in\Omega,
\end{aligned}
\right.
\end{equation}
where \(\Omega\subset\mathbb{R}^d\) is a bounded domain with sufficiently regular boundary \(\partial\Omega\), \(\mathcal{D}[\cdot]\) is a linear or nonlinear differential operator acting on the spatial variables \(\mathbf{x}\), and \(\mathbf{u}(t,\mathbf{x})\) denotes the unknown solution. 
Here, \(\mathbf{f}(t,\mathbf{x})\) on \([0,T]\times\Omega\) and \(\mathbf{g}(\mathbf{x})\) on \(\Omega\) are given functions with sufficient regularity such that the problem is well posed and admits a smooth solution. The operator \(\mathcal{B}[\cdot]\) denotes an abstract boundary operator, which may represent Dirichlet, Neumann, Robin, periodic, or other boundary conditions.

We approximate the solution \(\mathbf{u}(t,\mathbf{x})\) by a neural network \(\mathbf{u}_{\theta}(t,\mathbf{x})\), where \(\theta\) denotes the trainable parameters. 
With sufficiently smooth activation functions, \(\mathbf{u}_{\theta}\) defines a differentiable approximation that can be evaluated at any query point \((t,\mathbf{x})\). 
Moreover, the required derivatives with respect to the input variables and the parameters \(\theta\) can be computed efficiently using automatic differentiation~\cite{griewank2008evaluating}. 
This allows us to define the PDE residual
\begin{align}
\label{eq:pde_residual}
\mathcal{R}_{\mathrm{int}}[\mathbf{u}_{\theta}](t,\mathbf{x})
:=
\partial_t \mathbf{u}_{\theta}(t,\mathbf{x})
+\mathcal{D}[\mathbf{u}_{\theta}](t,\mathbf{x})
-\mathbf{f}(t,\mathbf{x}),
\qquad
(t,\mathbf{x})\in [0,T]\times\Omega,
\end{align}
the boundary residual
\begin{align}
\label{eq:bcc}
\mathcal{R}_{\mathrm{bc}}[\mathbf{u}_{\theta}](t,\mathbf{x})
:=
\mathcal{B}[\mathbf{u}_{\theta}](t,\mathbf{x}),
\qquad
(t,\mathbf{x})\in [0,T]\times\partial\Omega,
\end{align}
and the initial residual
\begin{align}
\label{eq:icc}
\mathcal{R}_{\mathrm{ic}}[\mathbf{u}_{\theta}](\mathbf{x})
:=
\mathbf{u}_{\theta}(0,\mathbf{x})-\mathbf{g}(\mathbf{x}),
\qquad
\mathbf{x}\in\Omega.
\end{align}

Let
\[
X_{\mathrm{int}}=\{(t^i_{\mathrm{int}},\mathbf{x}^i_{\mathrm{int}})\}_{i=1}^{N_{\mathrm{int}}}
\subset [0,T]\times\Omega,
\quad
X_{\mathrm{bc}}=\{(t^i_{\mathrm{bc}},\mathbf{x}^i_{\mathrm{bc}})\}_{i=1}^{N_{\mathrm{bc}}}
\subset [0,T]\times\partial\Omega, \quad
X_{\mathrm{ic}}=\{\mathbf{x}^i_{\mathrm{ic}}\}_{i=1}^{N_{\mathrm{ic}}}
\subset \Omega,
\]
denote the interior, boundary, and initial collocation sets, respectively. 
The corresponding empirical loss terms are defined by
\begin{align}
\mathcal{L}_{\mathrm{int}}(\theta;X_{\mathrm{int}})
&:=
\frac{1}{N_{\mathrm{int}}}
\sum_{i=1}^{N_{\mathrm{int}}}
\left|
\mathcal{R}_{\mathrm{int}}[\mathbf{u}_{\theta}]
(t^i_{\mathrm{int}},\mathbf{x}^i_{\mathrm{int}})
\right|^2,\\
\mathcal{L}_{\mathrm{bc}}(\theta;X_{\mathrm{bc}})
&:=
\frac{1}{N_{\mathrm{bc}}}
\sum_{i=1}^{N_{\mathrm{bc}}}
\left|
\mathcal{R}_{\mathrm{bc}}[\mathbf{u}_{\theta}]
(t^i_{\mathrm{bc}},\mathbf{x}^i_{\mathrm{bc}})
\right|^2,\\
\mathcal{L}_{\mathrm{ic}}(\theta;X_{\mathrm{ic}})
&:=
\frac{1}{N_{\mathrm{ic}}}
\sum_{i=1}^{N_{\mathrm{ic}}}
\left|
\mathcal{R}_{\mathrm{ic}}[\mathbf{u}_{\theta}]
(\mathbf{x}^i_{\mathrm{ic}})
\right|^2.
\end{align}
The standard PINN objective is then
\begin{align}
\label{eq:pinn_loss}
\mathcal{L}(\theta;X_{\mathrm{int}},X_{\mathrm{bc}},X_{\mathrm{ic}})
:=
\lambda_{\mathrm{int}} \mathcal{L}_{\mathrm{int}}(\theta;X_{\mathrm{int}})
+\lambda_{\mathrm{bc}}\,\mathcal{L}_{\mathrm{bc}}(\theta;X_{\mathrm{bc}})
+\lambda_{\mathrm{ic}}\,\mathcal{L}_{\mathrm{ic}}(\theta;X_{\mathrm{ic}}),
\end{align}
where \(\lambda_{\mathrm{int}} , \lambda_{\mathrm{bc}}>0\) and \(\lambda_{\mathrm{ic}}>0\) are weighting hyperparameters.

This loss encourages the neural-network approximation \(\mathbf{u}_{\theta}\) to satisfy the governing PDE together with the initial and boundary conditions in~\eqref{eq:problem}. 
The collocation sets \(X_{\mathrm{int}}, X_{\mathrm{bc}}, X_{\mathrm{ic}}\) may either be fixed throughout training or resampled at each iteration of a gradient-based optimizer. 
Although we focus on parabolic equations for exposition, the same formulation extends naturally to more general elliptic and hyperbolic PDEs, whether linear or nonlinear.

\subsection{A Failure mode of PINNs: Tendency to learn trivial solutions}
\label{sec: failure_mode}

Recent advances in physics-informed neural networks (PINNs) have increasingly focused on understanding and mitigating their training pathologies, such as spectral bias \cite{wang2021eigenvector}, gradient conflicts \cite{wang2021understanding, wang2022and, wang2025gradient,liu2024config}, stiffness of training dynamics \cite{rathore2024challenges}, causality violation \cite{wang2024respecting,es2024modifications}  and optimization failures \cite{krishnapriyan2021characterizing,gao2023failure,xu2025fp64}.

\begin{wrapfigure}{r}{0.5\textwidth}
    \vspace{-2mm}
    \centering
    \begin{subfigure}{\linewidth}
        \centering
        \includegraphics[width=\linewidth]{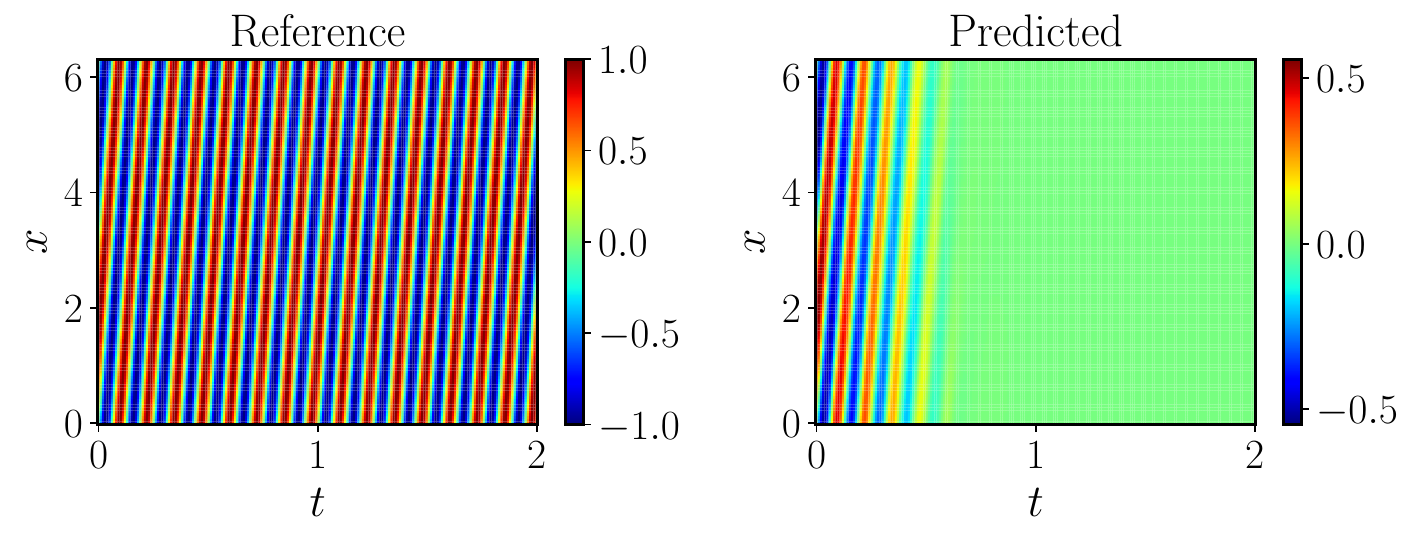}
    \end{subfigure}


    \begin{subfigure}{\linewidth}
        \centering
        \includegraphics[width=\linewidth]{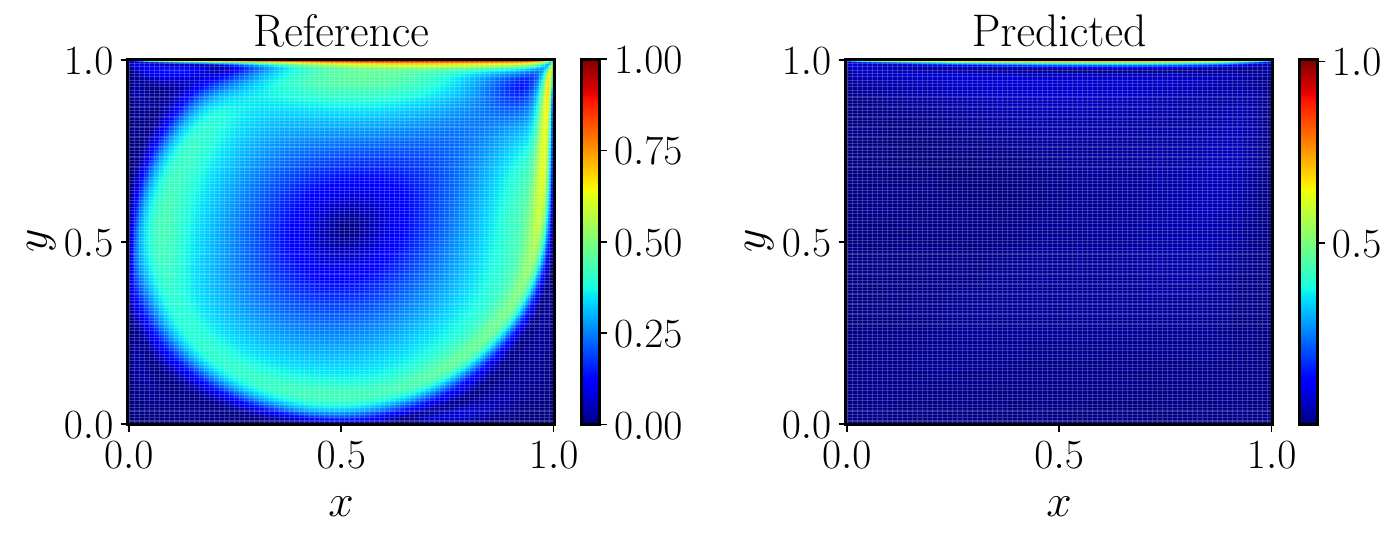}
    \end{subfigure}
  \caption{Failure predictions of PINN models for linear advection (top) and lid-driven cavity flow (bottom).}
    \label{fig:failure_cases}
    \vspace{-12mm}
\end{wrapfigure}
Among these pathologies, an especially important yet still insufficiently understood failure mode is the tendency of PINNs to converge to trivial or spurious solutions. Here, we aim to systematically study this long-observed phenomenon, in which training is attracted to trivial solutions rather than the desired physical ones.
To demonstrate this phenomenon, we consider two representative benchmark problems that are widely used in the PINN literature, covering both time-dependent and boundary-value settings.

\paragraph{Linear advection.}
We consider the one-dimensional linear advection equation
\begin{align}
    u_t + c\, u_x &= 0, \quad (t, x) \in [0, 2] \times [0, 2\pi], \\
    u(0, x)       &= \sin(x), \quad x \in [0, 2\pi],
\end{align}
with periodic boundary conditions, where $u(t,x)$ is the scalar solution
and $c = 50$ is the prescribed advection speed. This benchmark is a
representative time-dependent transport problem commonly used to assess
the ability of PINNs to capture propagating solution structures.

\paragraph{Lid-driven cavity flow.}
As a representative boundary-value problem, we consider the two-dimensional steady incompressible Navier--Stokes equations for lid-driven cavity flow:
\begin{align}
    (\mathbf{u} \cdot \nabla)\,\mathbf{u} + \nabla p - \frac{1}{\text{Re}}\Delta \mathbf{u} &= \mathbf{0}, \quad \mathbf{x} \in (0,1)^2, \\
    \nabla \cdot \mathbf{u} &= 0, \quad \mathbf{x} \in (0,1)^2,
\end{align}
where $\mathbf{u}=(u,v)$ is the velocity field, $p$ is the pressure, and $\text{Re} = 5{,}000$ is the Reynolds number. The boundary conditions are given by no-slip walls on the left, right, and bottom boundaries $\mathbf{u} = (0,0)$,
and a moving lid on the top boundary $\mathbf{u} = (U_{\rm lid},0).$
This problem is a standard benchmark for PINNs in fluid mechanics and provides a prototypical example of a nonlinear boundary-value problem.

The loss formulation for the advection problem follows directly from the problem setup in Section \ref{sec: pinns} and is therefore omitted for brevity. Here, we briefly outline the loss formulation for the lid-driven cavity flow.
We approximate the solution by a neural network  $\mathbf{u}_\theta = (u_\theta(\mathbf{x}),\, v_\theta(\mathbf{x}),\, p_\theta(\mathbf{x}))$, where $\theta$ denotes the trainable parameters. The corresponding PDE residuals are defined as
\begin{align}
    \mathcal{R}_{u}[\mathbf{u}_\theta](\mathbf{x})
    &:=
    u_\theta \, \partial_x u_\theta
    + v_\theta \, \partial_y u_\theta
    + \partial_x p_\theta
    - \frac{1}{\mathrm{Re}}
    \left(\partial_{xx}u_\theta + \partial_{yy}u_\theta\right), \\
    \mathcal{R}_{v}[\mathbf{u}_\theta](\mathbf{x})
    &:=
    u_\theta \, \partial_x v_\theta
    + v_\theta \, \partial_y v_\theta
    + \partial_y p_\theta
    - \frac{1}{\mathrm{Re}}
    \left(\partial_{xx}v_\theta + \partial_{yy}v_\theta\right), \\
    \mathcal{R}_{c}[\mathbf{u}_\theta](\mathbf{x})
    &:=
    \partial_x u_\theta + \partial_y v_\theta .
\end{align}
Let $X_{\mathrm{int}}=\{\mathbf{x}^i_{\mathrm{int}}\}_{i=1}^{N_{\mathrm{int}}}\subset (0,1)^2$
denote the set of interior collocation points. We define the three interior loss terms separately as
\begin{align}
    \mathcal{L}_{ru}(\theta;X_{\mathrm{int}})
    &:=
    \frac{1}{N_{\mathrm{int}}}
    \sum_{i=1}^{N_{\mathrm{int}}}
    \left|
    \mathcal{R}_{u}[\theta](\mathbf{x}^i_{\mathrm{int}})
    \right|^2,\\
    \mathcal{L}_{rv}(\theta;X_{\mathrm{int}})
    &:=
    \frac{1}{N_{\mathrm{int}}}
    \sum_{i=1}^{N_{\mathrm{int}}}
    \left|
    \mathcal{R}_{v}[\theta](\mathbf{x}^i_{\mathrm{int}})
    \right|^2,\\
    \mathcal{L}_{rc}(\theta;X_{\mathrm{int}})
    &:=
    \frac{1}{N_{\mathrm{int}}}
    \sum_{i=1}^{N_{\mathrm{int}}}
    \left|
    \mathcal{R}_{c}[\theta](\mathbf{x}^i_{\mathrm{int}})
    \right|^2.
\end{align}
Similarly, let $X_{\mathrm{wall}}=\{\mathbf{x}^i_{\mathrm{wall}}\}_{i=1}^{N_{\mathrm{wall}}}\subset \partial\Omega$
denote the set of boundary collocation points. We define the boundary loss terms by
\begin{align}
    \mathcal{L}_{u,\mathrm{bc}}(\theta;X_{\mathrm{wall}})
    &:=
    \frac{1}{N_{\mathrm{wall}}}
    \sum_{i=1}^{N_{\mathrm{wall}}}
    \left|
    u_\theta(\mathbf{x}^i_{\mathrm{wall}})
    -
    u_b(\mathbf{x}^i_{\mathrm{wall}})
    \right|^2,\\
    \mathcal{L}_{v,\mathrm{bc}}(\theta;X_{\mathrm{wall}})
    &:=
    \frac{1}{N_{\mathrm{wall}}}
    \sum_{i=1}^{N_{\mathrm{wall}}}
    \left|
    v_\theta(\mathbf{x}^i_{\mathrm{wall}})
    -
    v_b(\mathbf{x}^i_{\mathrm{wall}})
    \right|^2,
\end{align}
where \((u_b(\mathbf{x}),v_b(\mathbf{x}))\) denotes the prescribed boundary velocity, given by
\[
(u_b(\mathbf{x}),v_b(\mathbf{x}))=
\begin{cases}
(0,0), & \mathbf{x}\in \partial\Omega\setminus\{(x,1):x\in[0,1]\},\\
(U_{\mathrm{lid}},0), & \mathbf{x}\in \{(x,1):x\in[0,1]\}.
\end{cases}
\]
The total loss is then given by 
\begin{align}
\mathcal{L}(\theta)
:=
\lambda_{ru}\mathcal{L}_{ru}
+\lambda_{rv}\mathcal{L}_{rv}
+\lambda_{rc}\mathcal{L}_{rc}
+\lambda_{u,\mathrm{bc}}\mathcal{L}_{u,\mathrm{bc}}
+\lambda_{v,\mathrm{bc}}\mathcal{L}_{v,\mathrm{bc}}.
\end{align}

Here, we train PINN models for the respective problems by minimizing the corresponding loss functions using the experimental settings detailed in Section \ref{sec:results}, and the corresponding results are shown in Fig.~\ref{fig:failure_cases}. In both cases, the PINNs completely fail to capture the true solutions.
A key observation is that the PINN often converges to a constant solution over a large interior region away from the boundaries, regardless of whether the problem is an initial-boundary value problem or a boundary value problem.
We further emphasize that the phenomenon observed here appears to be broadly universal and relatively orthogonal to other previously identified PINN training pathologies, in the sense that it cannot be fully resolved by existing techniques. Even when improved network architectures \cite{wang2024piratenets}, loss-weighting strategies \cite{wang2021understanding,mcclenny2023self}, or advanced optimizers \cite{muller2023achieving,wang2025gradient,kiyani2025optimizing} are employed, very similar behavior can still arise for sufficiently challenging problems or parameter regimes, such as flows at high Reynolds numbers.

Motivated by this phenomenon, we will show that the issue is not merely an optimization difficulty, but reflects a more fundamental ill-posedness of the empirical PDE residual loss: for any given set of collocation points, there exists a spurious solution $\mathbf{u}^\dagger$ that exactly satisfies the PDE at those finite collocation points. 
Although the two examples above involve periodic or nonhomogeneous boundary conditions, the same mechanism can already be seen in a simpler homogeneous Dirichlet setting. We therefore state the following theorem in this simplified setting to isolate the core issue.

\begin{theorem}[Existence of spurious solutions for the empirical PINN loss]
\label{thm1}
Consider the homogeneous problem in \eqref{eq:problem} with \(\mathbf{f}=0\) and homogeneous Dirichlet boundary conditions $\mathbf{u}|_{\partial \Omega}=0,$
and let \(\mathbf{u}^\ast\) be its classical solution. Let $X_{\mathrm{int}}=\{(t^i_{\mathrm{int}},\mathbf{x}^i_{\mathrm{int}})\}_{i=1}^{N_{\mathrm{int}}}
\subset [0,T]\times\Omega$
be any finite set of interior collocation points, and define the empirical interior residual loss of \(\mathbf{u}\) by
\begin{align}
\mathcal{L}_{\mathrm{int}}(\mathbf{u};X_{\mathrm{int}})
=
\frac{1}{N_{\mathrm{int}}}
\sum_{i=1}^{N_{\mathrm{int}}}
\left|
\mathcal{R}_{\mathrm{int}}[\mathbf{u}]
(t^i_{\mathrm{int}},\mathbf{x}^i_{\mathrm{int}})
\right|^2 .
\end{align}
Then for any \(t_0>0\), there exists a smooth function \(\mathbf{u}^\dagger\in C^\infty([0,T]\times\Omega)\) such that
\begin{enumerate}
    \item \(\mathbf{u}^\dagger\) satisfies the initial and Dirichlet boundary conditions, namely,
    \begin{align}
         \mathbf{u}^\dagger|_{\partial\Omega} \equiv 0,
    \qquad
    \mathbf{u}^\dagger(0,\mathbf{x})=\mathbf{g}(\mathbf{x}), \quad \forall \mathbf{x} \in \Omega ;
    \end{align}
    \item \(\mathbf{u}^\dagger\) becomes trivial after time \(t_0\), in the sense that
    \begin{align}
         \mathbf{u}^\dagger(t,\mathbf{x})=0,
    \qquad
    \forall\, t\ge t_0,\ \mathbf{x}\in\Omega;
    \end{align}
    \item its empirical interior residual vanishes exactly:
    \begin{align}
        \mathcal{L}_{\mathrm{int}}(\mathbf{u}^\dagger;X_{\mathrm{int}})=0.
    \end{align}

\end{enumerate}
Consequently, for any \(t_0>0\), the spurious solution set
\begin{align}
    \mathcal{S}_{t_0}
:=
\left\{
\mathbf{u}\in C^\infty([0,T]\times\Omega)\; \middle| \;
\begin{array}{l}
\mathbf{u}|_{\partial \Omega} \equiv 0, \\
\mathbf{u}(0,\mathbf{x})=\mathbf{g}(\mathbf{x}), \quad \forall \mathbf{x} \in \Omega, \\
\mathbf{u}(t,\mathbf{x})=0,\quad \forall\, t\ge t_0,\ \mathbf{x}\in\Omega,\\
\mathcal{L}_{\mathrm{int}}(\mathbf{u};X_{\mathrm{int}}) = 0
\end{array}
\right\}
\end{align}
is nonempty.
\end{theorem}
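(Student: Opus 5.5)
The plan is to glue the true solution to zero using a time cutoff, and then repair the residual at the finitely many collocation points with local bump functions. Fix $t_0>0$; we may assume $t_0<T$, since otherwise condition 2 is vacuous and $\mathbf{u}^\ast$ itself works. Choose a smooth cutoff $\chi\in C^\infty(\mathbb{R})$ with $\chi\equiv 1$ on $(-\infty,\tau_1]$ and $\chi\equiv 0$ on $[\tau_2,\infty)$, where $0<\tau_1<\tau_2<t_0$ are to be chosen. Set
\begin{align}
\mathbf{v}(t,\mathbf{x}):=\chi(t)\,\mathbf{u}^\ast(t,\mathbf{x}).
\end{align}
This function inherits the boundary condition $\mathbf{v}|_{\partial\Omega}=0$ and the initial condition $\mathbf{v}(0,\cdot)=\mathbf{g}$, and it vanishes for $t\ge\tau_2$.

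On the strip $\{t<\tau_1\}$, and on the region $\{t>\tau_2\}$ (assuming $\mathcal{D}[0]=0$, which is implicit in calling $0$ the trivial solution), the residual is zero. So only collocation points with $t^i\in[\tau_1,\tau_2]$ can contribute. Since $X_{\mathrm{int}}$ is finite, I would choose $\tau_1<\tau_2$ inside $(0,t_0)$ so that no collocation time lies in $[\tau_1,\tau_2]$. This is possible because the finitely many times $t^i$ leave an open gap in $(0,t_0)$. With this choice $\mathcal{R}_{\mathrm{int}}[\mathbf{v}]$ vanishes at every collocation point. The point here is that the residual is a local operator (a differential operator), so the residual at a point depends only on the germ of $\mathbf{v}$ there.

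Two details need checking. First, for nonlinear $\mathcal{D}$, locality is exactly what makes $\mathcal{R}[\chi\mathbf{u}^\ast]=\mathcal{R}[\mathbf{u}^\ast]=0$ on the open set where $\chi\equiv1$. Second, smoothness: $\mathbf{u}^\dagger:=\mathbf{v}$ lies in $C^\infty([0,T]\times\Omega)$ provided $\mathbf{u}^\ast$ is smooth in the interior, which the standing well-posedness and smoothness assumption provides. If only classical regularity of $\mathbf{u}^\ast$ were available, I would instead note that smoothness is needed only where the cutoff acts. Alternatively, I would replace $\mathbf{u}^\ast$ on $[0,\tau_1]$ by itself; parabolic interior regularity gives smoothness for $t>0$, and the remaining issue is only at $t=0$, which is inherited from $\mathbf{u}^\ast$.

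If one insists on a construction robust to collocation points filling all times, for instance points with times dense near some value, the fallback is different. Pick any gap-free cutoff, then add finitely many compactly supported bumps $\phi_i$ centered at each offending collocation point $(t^i,\mathbf{x}^i)$, supported in small disjoint balls inside $(0,t_0)\times\Omega$. I would choose each correction to kill the pointwise residual, for example a correction $a_i (t-t^i)\phi_i$ whose time derivative at the point equals $a_i$ while its value there is zero. Spatial derivatives of this correction vanish at the point if $\phi_i$ is constant near it, so $a_i=-\mathcal{R}[\mathbf{v}](t^i,\mathbf{x}^i)$ works even for nonlinear $\mathcal{D}$. This local-correction step is the main obstacle to state cleanly in general, but since the set is finite, the simpler gap argument already suffices.

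Having verified conditions 1--3 for $\mathbf{u}^\dagger$, the nonemptiness of $\mathcal{S}_{t_0}$ follows immediately.
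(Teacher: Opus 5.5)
Your proposal is correct and is essentially the paper's argument. In both, $\mathbf{u}^\dagger$ is a smooth time cutoff times $\mathbf{u}^\ast$, chosen to be locally constant ($1$ or $0$) near every collocation time, so that locality of the residual together with $\mathcal{D}[0]=0$ gives zero residual at each point. The only difference is that you place a single transition in a collocation-free gap of $(0,t_0)$, whereas the paper mollifies the indicator of a neighbourhood of $0$ and of the collocation times below $t_0$; your bump-function fallback is not needed.

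One small slip: for $t_0=T$, condition 2 is not vacuous, since it forces $\mathbf{u}^\dagger(T,\cdot)=0$. Your gap construction works verbatim for every $t_0>0$, so you can simply drop that preliminary reduction.
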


The proof is provided in Appendix~\ref{app:proof1} and is based on a smooth mollifier construction. The main idea is to construct a smooth function \(\mathbf{u}^\dagger\) that agrees with the true solution \(\mathbf{u}^\ast\) near all collocation points before \(t_0\), but becomes identically zero for \(t \ge t_0\). More precisely, we choose a smooth time cutoff \(\alpha(t)\) such that
\begin{align}
    \alpha(t)=
\begin{cases}
1, & \text{near } t_{\text{int}}^i,\quad t_{\text{int}}^i<t_0,\\
0, & \text{near } t_{\text{int}}^i,\quad t_{\text{int}}^i\ge t_0,\\
0, & t\ge t_0.
\end{cases}
\end{align}
Then we define
\begin{align}
\mathbf{u}^\dagger(t,\mathbf{x})=\alpha(t)\mathbf{u}^\ast(t,\mathbf{x}).
\end{align}
The resulting function \(\mathbf{u}^\dagger\) remains smooth, satisfies the same initial and boundary conditions, and becomes trivial after \(t_0\). Moreover, its PDE residual still vanishes at every collocation point: before \(t_0\), \(\mathbf{u}^\dagger\) agrees locally with \(\mathbf{u}^\ast\), while after \(t_0\), it agrees locally with the zero solution, which also satisfies the homogeneous PDE. 
Similar constructions can be adapted to boundary-value problems with inhomogeneous data, and we observe that the same pathology arises more broadly. A systematic treatment of such general settings is beyond the scope of this work and is left for future study.

The theorem reveals a fundamental limitation of the conventional empirical PINN loss: when the collocation points are fixed throughout training, the optimization landscape may contain many poor global minima associated with trivial or spurious solutions. Minimizing this objective alone is therefore insufficient to prevent convergence to such undesirable states. Moreover, due to spectral bias, PINNs may preferentially converge to these spurious solutions. Our experiments provide strong evidence for this undesirable behavior.

\begin{remark}[Beyond fixed collocation points]
Theorem~\ref{thm1} is an existence result for a \emph{fixed} finite set of collocation points. In particular, the construction of $\mathbf{u}^\dagger$ exploits the fact that the empirical residual is evaluated only on a prescribed set $X_{\mathrm{int}}$, and therefore does not by itself imply the same exact zero-residual property under random resampling or full space--time integration.

Nevertheless, the same pathology may still persist in practice under random sampling or small-batch training. Indeed, if a spurious solution transitions from the true solution branch to the trivial state within a narrow transition layer of width $h$, then its residual is concentrated in that layer. For a standard mean-squared PDE residual loss, the contribution of such a transition layer is typically of order $O(h^{-1})$ due to the specific construction of the spurious solutions. Therefore, unless the optimization is able to reduce the PDE residual loss below this transition-layer scale, such spurious solutions may not be effectively excluded, even when collocation points are randomly resampled during training.
\end{remark}

\subsection{Pseudo-time stepping} 
\label{sec:pseduo_time}

In this section, we show that the pseudo-time stepping can effectively mitigate the training pathology identified in the previous section. Pseudo-time stepping, or pseudo-transient continuation, has a long history in numerical analysis, where it is widely used to stabilize iterative solution methods \cite{kelley1998convergence,coffey2003pseudotransient}. More recently, related ideas have been adopted in the PINN literature and have shown consistent empirical gains \cite{cao2023tsonn,cao2024surrogate,zhang2025pseudo,chiu2026scale,wei2026bridging}. These benefits are typically motivated from a classical numerical-analysis perspective, which attributes them to improved conditioning of the training dynamics and a better-behaved loss landscape.

\begin{figure}[t]
    \centering
    \includegraphics[width=1.0\linewidth]{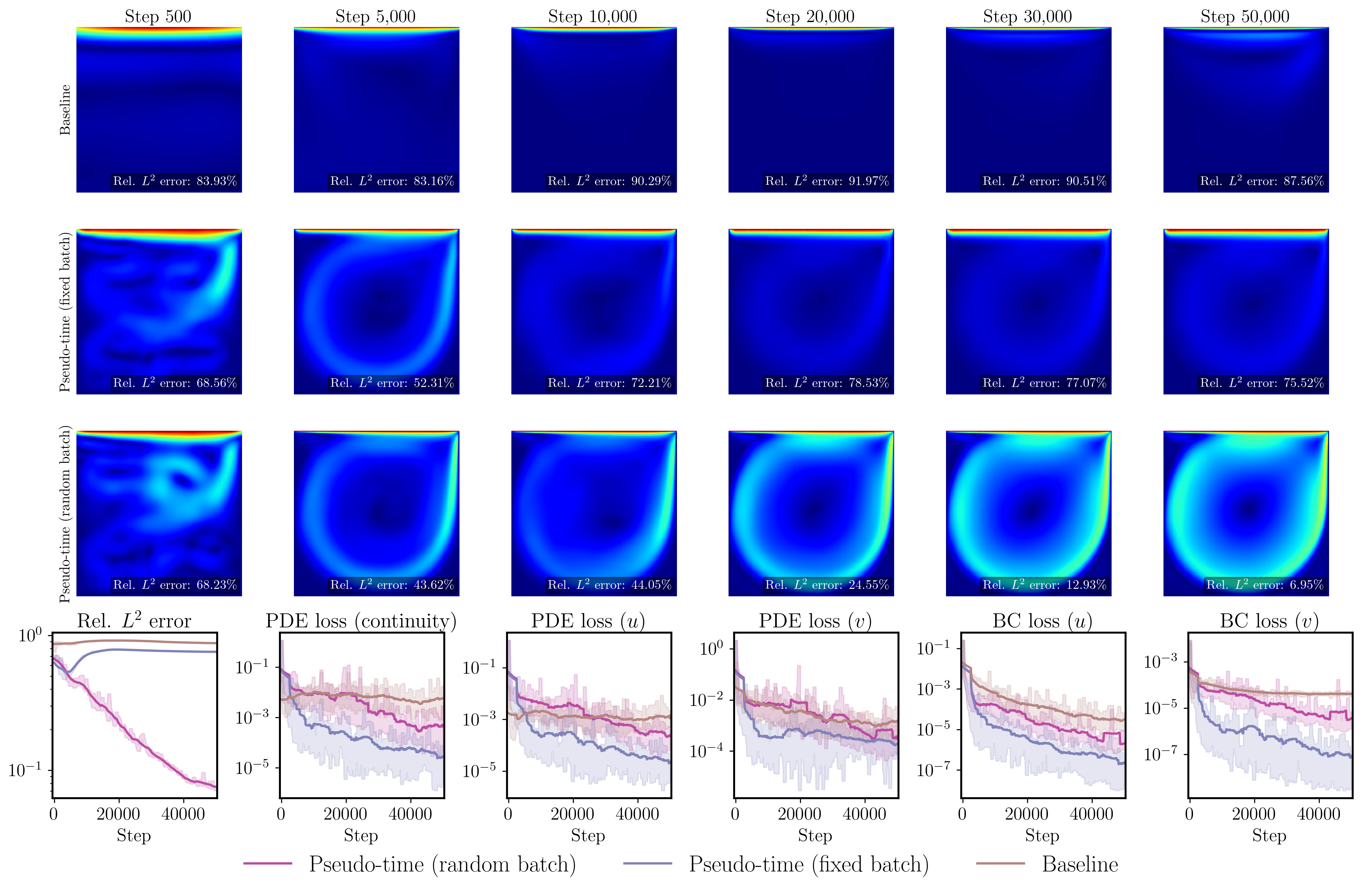}
\caption{{\em Lid-driven cavity.} Predicted velocity fields $U = \sqrt{u^2 + v^2}$ at different steps of gradient descent during training. From top to bottom, the rows correspond to: the baseline PINN model; the baseline PINN model with pseudo-time stepping trained using a fixed-batch of collocation points; and the baseline model with pseudo-time stepping trained using randomly resampled collocation points at each iteration. The last row shows the convergence histories of the relative $L^2$ error and the loss for these three methods.}
    \label{fig:ldc_train}
\end{figure}

\begin{figure}[h]
    \centering
    \includegraphics[width=1.0\linewidth]{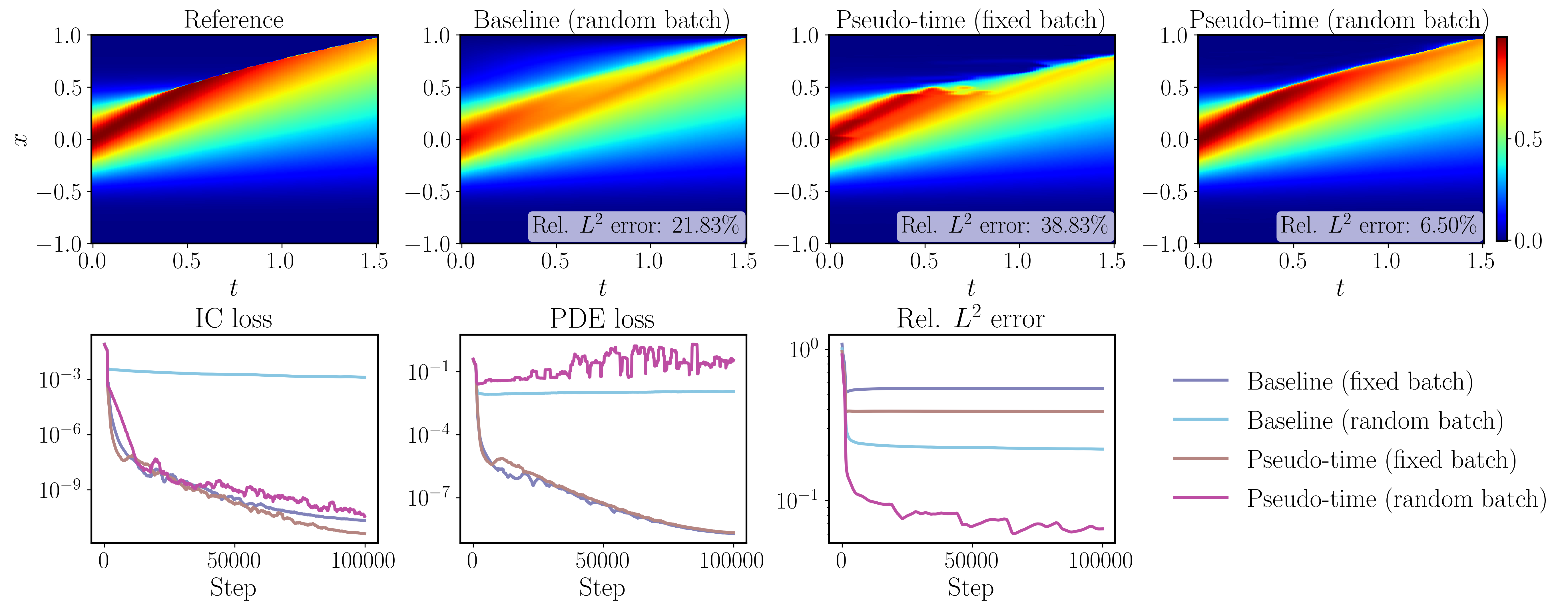}
 \caption{{\em Inviscid Burgers' equation.} Top: Comparison of the predicted solutions obtained by the baseline PINN model and by pseudo-time stepping, with and without random collocation-point resampling at each iteration. Bottom: training loss and relative \(L^2\) error histories.}
    \label{fig:inviscid_burgers_sample_effect}
\end{figure}

Motivated by this observation, we replace the standard PDE residual loss with the pseudo-time-relaxed formulation in Eq.~\eqref{eq:pts_loss} and repeat the same experiment on the lid-driven cavity problem under identical settings. The results are shown in Fig.~\ref{fig:ldc_train}. Although pseudo-time stepping leads to a clear improvement, the underlying reason is not what one might initially expect. In particular, the fixed-batch variant (row 2) consistently attains a lower PDE residual loss than the random-resampling variant (row 3), yet it still converges to an incorrect solution.

To examine whether the same phenomenon also arises in time-dependent problems, we further consider the one-dimensional inviscid Burgers equation. The problem setup is provided in Appendix~\ref{appendix:benchmarks}, and the training protocol is described in Section~\ref{sec:results}. This problem is substantially more challenging than the viscous Burgers equation commonly studied in the PINN literature, because its solution develops discontinuous shocks. Figure~\ref{fig:inviscid_burgers_sample_effect} shows the predicted solutions together with the corresponding training-loss and relative \(L^2\) error histories. We observe the same qualitative behavior: pseudo-time stepping with a fixed collocation-point sets achieves excellent loss convergence, yet yields the worst predictive accuracy among the variants considered. In addition, the PDE residual loss obtained by pseudo-time stepping with randomly sampled collocation points is significantly higher. This behavior is expected for this problem when the PINN attempts to learn the correct physical solution, since the pointwise PDE residual can become arbitrarily large near the shock. 

We emphasize that this phenomenon appears to be quite universal, and it cannot be adequately explained from the traditional optimization perspective alone. If the primary role of pseudo-time stepping were simply to improve conditioning or make the loss landscape easier to optimize, then the fixed-batch variant should perform at least as well as the random-resampling variant, provided that the training loss is properly minimized. However, this is not what we observe. This discrepancy naturally leads to the following question:

\begin{tcolorbox}[
    colback=blue!5!white,
    colframe=blue!50!black,
    boxrule=0.8pt,
    arc=2mm,
    left=2mm,
    right=2mm,
    top=1mm,
    bottom=1mm
]
\centering
\emph{\textbf{Question:} If pseudo-time stepping does not improve PINN training primarily through better conditioning, then what is the mechanism behind its effectiveness?}
\end{tcolorbox}


As we show in the following analysis, the primary benefit of pseudo-time stepping is more fundamental: it modifies the training trajectory to discourage convergence to the spurious solutions identified above. In particular, under collocation-point resampling, pseudo-time stepping makes these spurious solutions unstable by inducing much larger residuals on newly sampled points, thereby driving the optimization away from them.

With this question in mind, we now return to how pseudo-time stepping is formulated for PINNs in boundary value problems. In this setting, instead of directly minimizing the interior PDE residual
\begin{align}
    \mathcal{R}_{\mathrm{int}}[\mathbf{u}_\theta](\mathbf{x}) = 0,
\end{align}
pseudo-time stepping introduces auxiliary \emph{relaxation dynamics} for the network prediction \(\mathbf{u}_\theta\) along an artificial variable \(s\). Here, \(s\) is not a physical variable of the original PDE, but an auxiliary variable used to parameterize the training dynamics of the network parameters under gradient flow,
\begin{align}
    \frac{d\theta(s)}{ds} = - \nabla_\theta \mathcal{L}(\theta(s)).
\end{align}
Motivated by this viewpoint, pseudo-time stepping may be formally interpreted through the relaxed relation
\begin{align}
    \partial_s u_{\theta(s)} +\mathcal{R}_{\mathrm{int}}[u_{\theta(s)}] = 0.
\end{align}
At convergence in the artificial time \(s\), one has \(\partial_s u_{\theta(s)} = 0\), and hence the steady state satisfies the target residual equation
\begin{align}
    \mathcal{R}_{\mathrm{int}}[u_{\theta(s)}] = 0.
\end{align}

Let \(\theta^k\) denote the network parameters at the \(k\)-th training step. Discretizing the artificial relaxation in \(s\) with step size \(\tau>0\) gives
\begin{align}
    \frac{\mathbf{u}^{k} - \mathbf{u}^{k-1}}{\tau} + \mathcal{R}_{\mathrm{int}}[\mathbf{u}^{k}] = 0.
\end{align}
In the PINN setting, this relation is interpreted at the level of the network prediction, namely
\begin{align}
    \mathbf{u}^k \approx \mathbf{u}_{\theta^k}, \qquad \mathbf{u}^{k-1} \approx\mathbf{u}_{\theta^{k-1}},
\end{align}
where \(\theta^k\) and \(\theta^{k-1}\) denote the network parameters at two consecutive training steps.
This motivates the following pseudo-time-relaxed residual loss
\begin{align}
\label{eq:pts_loss}
\mathcal{L}_{\mathrm{pts}}(\theta;\theta^{k-1})
=
\frac{1}{N_\text{int}}\sum_{i=1}^{N_\text{int}}
\left|
    \frac{\mathbf{u}_{\theta}(\mathbf{x}_{\text{int}}^i)- \mathbf{u}_{\theta^{k-1}}(\mathbf{x}_{\text{int}}^i)}{\tau}
+ \mathcal{R}_{\mathrm{int}}[\mathbf{u}_{\theta}](\mathbf{x}_{\text{int}}^i)
\right|^2.
\end{align}
where $\{\mathbf{x}_{\text{int}}^i\}_{i=1}^{N_{\text{int}}}$ is a set of collocation points sampled in the computational domain $\Omega$ of interest.
The next parameter iterate is obtained by optimizing the total PINN loss, in which the original PDE residual loss is replaced by its pseudo-time-relaxed counterpart. In this update, the optimization is performed with respect to the current parameters $\theta^k$, while the previous iterate $\theta^{k-1}$ is treated as fixed and the boundary condition loss is kept unchanged.

Intuitively, the new network prediction \(\mathbf{u}_{\theta^{k}}\) is encouraged not only to reduce the PDE residual, but also to remain close to the previous prediction \(\mathbf{u}_{\theta^{k-1}}\). This leads to a more stable training trajectory, in contrast to directly minimizing the residual in a single shot.

\begin{remark}[Implicit versus explicit pseudo-time stepping]
In this work, we focus on the implicit pseudo-time stepping formulation. Although an explicit pseudo-time stepping variant can also be derived, our experiments indicate that it is generally less robust. This is consistent with the observations reported in TSONN \cite{cao2023tsonn}.
\end{remark}

\begin{remark}[Extension to time-dependent PDEs]
For time-dependent PDEs, the artificial variable \(s\) should not be confused with the physical time variable \(t\). For a time-dependent PDE,
\begin{align}
    \mathbf{u}_t + \mathcal{D}[\mathbf{u}] = 0,
\end{align}
the interior residual becomes
\begin{align}
    \mathcal{R}_{\mathrm{int}}[\mathbf{u}_\theta](t,\mathbf{x})
    =
    \partial_t \mathbf{u}_\theta(t,\mathbf{x}) + \mathcal{D}[\mathbf{u}_\theta](t,\mathbf{x}).
\end{align}
Pseudo-time stepping is then applied exactly as in the boundary value setting, but now to this time-dependent residual, leading to
\begin{align}
\label{eq:time_pts_loss}
\mathcal{L}_{\mathrm{pts}}(\theta;\theta^{k-1})
=
\frac{1}{N_\mathrm{int}}\sum_{i=1}^{N_\mathrm{int}}
\left|
\frac{\mathbf{u}_{\theta}(t_{\mathrm{int}}^i,\mathbf{x}_{\mathrm{int}}^i)-\mathbf{u}_{\theta^{k-1}}(t_{\mathrm{int}}^i,\mathbf{x}_{\mathrm{int}}^i)}{\tau}
+ \mathcal{R}_{\mathrm{int}}[\mathbf{u}_{\theta}](t_{\mathrm{int}}^i,\mathbf{x}_{\mathrm{int}}^i)
\right|^2.
\end{align}
Thus, \(t\) remains the physical time variable in the PDE, while \(s\) only parameterizes the artificial relaxation used during training.
\end{remark}

For the lid-driven cavity problem and the result presented in Fig.~\ref{fig:ldc_train}, we apply pseudo-time stepping to relax the PDE residual loss. This requires access to consecutive network parameters during training. Specifically, at iteration 
$k$, we retain the solution from the previous iterate
$$
\mathbf{u}_{\theta^{k-1}}(\mathbf{x})=\left(u_{\theta^{k-1}}(\mathbf{x}), v_{\theta^{k-1}}(\mathbf{x}), p_{\theta^{k-1}}(\mathbf{x})\right),
$$
where $\theta^{k-1}$ denotes the network parameters from iteration $k-1$. The specific pseudo-time-relaxed interior loss terms are defined as
\begin{align}
    \label{eq:ldc_pts_loss}
    \mathcal{L}_{ru}^{\mathrm{pts}}(\theta;\theta^{k-1},X_{\mathrm{int}})
    &:=
    \frac{1}{N_{\mathrm{int}}}
    \sum_{i=1}^{N_{\mathrm{int}}}
    \left|
    \frac{
    u_{\theta}(\mathbf{x}^i_{\mathrm{int}})
    -
    u_{\theta^{k-1}}(\mathbf{x}^i_{\mathrm{int}})
    }{\tau_u}
    +
    \mathcal{R}_u[\theta](\mathbf{x}^i_{\mathrm{int}})
    \right|^2,\\
    \mathcal{L}_{rv}^{\mathrm{pts}}(\theta;\theta^{k-1},X_{\mathrm{int}})
    &:=
    \frac{1}{N_{\mathrm{int}}}
    \sum_{i=1}^{N_{\mathrm{int}}}
    \left|
    \frac{
    v_{\theta}(\mathbf{x}^i_{\mathrm{int}})
    -
    v_{\theta^{k-1}}(\mathbf{x}^i_{\mathrm{int}})
    }{\tau_v}
    +
    \mathcal{R}_v[\theta](\mathbf{x}^i_{\mathrm{int}})
    \right|^2,\\
    \mathcal{L}_{rc}^{\mathrm{pts}}(\theta;\theta^{k-1},X_{\mathrm{int}})
    &:=
    \frac{1}{N_{\mathrm{int}}}
    \sum_{i=1}^{N_{\mathrm{int}}}
    \left|
    \frac{
    p_{\theta}(\mathbf{x}^i_{\mathrm{int}})
    -
    p_{\theta^{k-1}}(\mathbf{x}^i_{\mathrm{int}})
    }{\tau_p}
    +
    \mathcal{R}_c[\theta](\mathbf{x}^i_{\mathrm{int}})
    \right|^2.
\end{align}
where \(\tau_u,\tau_v,\tau_p>0\) are pseudo-time step sizes. To obtain the results in Fig.~\ref{fig:ldc_train}, we follow Scale-PINNs \cite{chiu2026scale} and set \(\tau=\tau_u=\tau_v=\tau_p=0.1\). 
As will be shown in Fig.~\ref{fig:ldc_tau_ablation}, the performance of pseudo-time stepping is highly sensitive to the choice of these step sizes, and cannot be reliably tuned by loss values, which motivates the development of an adaptive pseudo-time stepping strategy that automatically identifies suitable values during training.

\begin{figure}[h]
    \centering
    \includegraphics[width=1.0\linewidth]{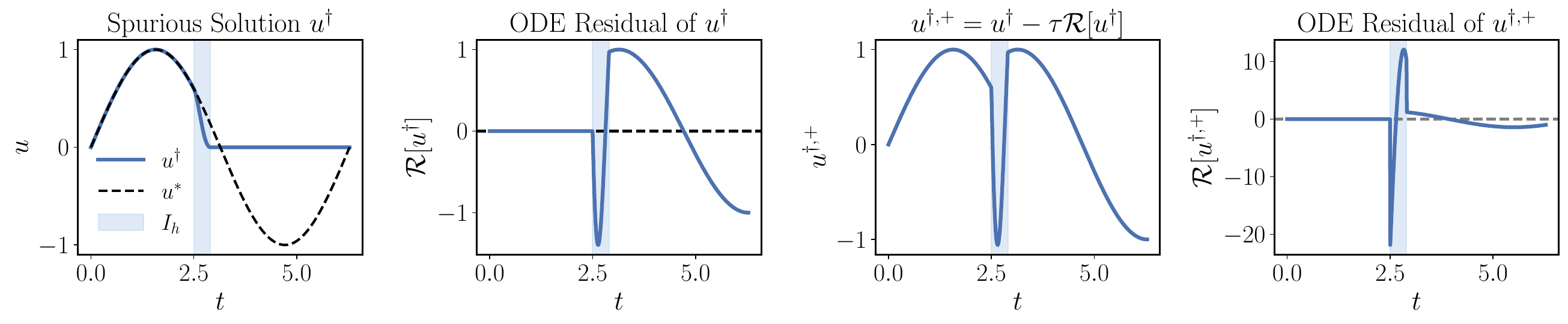}
\caption{{\em Illustration of the pseudo-time stepping mechanism in the PDE residual loss.} From left to right, when the model approaches a spurious solution \(u^\dagger\) with a transition layer centered at time \(t_0\) and width \(h\), a large residual error is produced in the transition region. The pseudo-time update
\(u^{\dagger,+}=u^\dagger - \tau \mathcal{R}[u^\dagger]\)
significantly increases the residual in this region, thus strengthening the loss penalty against such spurious solutions.}
    \label{fig:spurious_solution}
\end{figure}

We now turn to the question raised at the beginning of this section. To build intuition for the underlying mechanism of pseudo-time stepping, we begin with a simple ODE example
\[
u_t=\cos t,
\]
with zero initial condition. For this example, we set \(\tau=1\) and choose a transition-layer width \(h=0.4\). Suppose that at some stage of training, the PINN prediction approaches a spurious profile with a sharp transition layer. As shown in Fig.~\ref{fig:spurious_solution}, the PDE residual is large inside this transition layer, yet such a localized defect may not be adequately captured by a finite set of collocation points. After one pseudo-time stepping update, the solution profile is substantially altered, and the PDE residual of the updated state \(u^{\dagger,+}\) increases by several orders of magnitude within the transition region.

To formalize the intuition above, we prove the following theorem.
\begin{theorem}[Pseudo-time stepping destabilizes spurious solutions under resampling]
\label{thm:unstable}
Consider the initial boundary value problem in \eqref{eq:problem}, and assume that \(\mathcal D\) is a linear differential operator acting only on the spatial variable \(\mathbf x\). Let \(\mathbf{u}^\ast\) be a classical solution.

Fix \(t_0\in(0,T)\) and \(h>0\). We consider the constructed spurious solution
\begin{align}
    \mathbf{u}^\dagger(t,\mathbf x)=\alpha_h(t)\,\mathbf{u}^\ast(t,\mathbf x),
\end{align}
where \(\alpha_h\in C^\infty(\mathbb R)\) is a smooth cutoff satisfying
\begin{align}
\alpha_h(t)=
\begin{cases}
1, & t\le t_0,\\[4pt]
0, & t\ge t_0+h.
\end{cases}
\end{align}
We denote its transition layer by $I_h:=[t_0,t_0+h].$
Next, we define the exact output of one explicit pseudo-time stepping update applied to \(\mathbf{u}^\dagger\) by
\begin{align}
    \mathbf{u}^{\dagger,+}:=\mathbf{u}^\dagger - \tau\,\mathcal R_{\mathrm{int}}[\mathbf{u}^\dagger],
\qquad \tau>0.
\end{align}
Let $\widetilde X_{\mathrm{int}}
=
\{(\tilde t_j,\tilde{\mathbf x}_j)\}_{j=1}^M
\subset (0,T)\times\Omega$
be i.i.d. random interior collocation points on \((0,T) \times \Omega\). We define the empirical residual loss on these newly sampled collocation points by
\begin{align}
    \mathcal L_{\mathrm{int}}^{\mathrm{new}}(\mathbf{v})
:=
\frac{1}{M}\sum_{j=1}^M
\left|
\mathcal R_{\mathrm{int}}[\mathbf{v}](\tilde t_j,\tilde{\mathbf x}_j)
\right|^2.
\end{align}
Then, with expectation taken over the random sample \(\widetilde X_{\mathrm{int}}\),
\begin{align}
    \mathbb E\,\mathcal L_{\mathrm{int}}^{\mathrm{new}}(\mathbf{u}^\dagger)=O(h^{-1}),
\qquad
\mathbb E\,\mathcal L_{\mathrm{int}}^{\mathrm{new}}(\mathbf{u}^{\dagger,+})
=O\!\left(h^{-1}+\tau^2 h^{-3}\right).
\end{align}
Moreover, if \(\mathbf{u}^\ast(t_0,\cdot)\not\equiv 0\), then these orders are sharp:
\begin{align}
    \mathbb E\,\mathcal L_{\mathrm{int}}^{\mathrm{new}}(\mathbf{u}^\dagger)\asymp h^{-1},
\qquad
\mathbb E\,\mathcal L_{\mathrm{int}}^{\mathrm{new}}(\mathbf{u}^{\dagger,+})
\asymp h^{-1}+\tau^2 h^{-3}.
\end{align}

In particular, one pseudo-time stepping update amplifies the empirical residual on newly sampled collocation points from order \(h^{-1}\) to order \(\tau^2 h^{-3}\).
\end{theorem}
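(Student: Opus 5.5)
Since $\mathcal D$ is linear and $\mathbf u^\ast$ solves the PDE, a direct computation gives the exact formulas that drive the whole argument. Writing $\mathcal R:=\mathcal R_{\mathrm{int}}$, we have $\mathcal R[\mathbf u^\dagger]=\alpha_h'\mathbf u^\ast+\alpha_h(\partial_t\mathbf u^\ast+\mathcal D\mathbf u^\ast-\mathbf f)+(\alpha_h-1)\mathbf f$. I will treat the homogeneous case $\mathbf f=0$, as in Theorem~\ref{thm1}; with $\mathbf f\ne0$ the additional term is bounded and only contributes $O(1)$. This gives
\begin{align}
\mathcal R[\mathbf u^\dagger](t,\mathbf x)=\alpha_h'(t)\,\mathbf u^\ast(t,\mathbf x),
\end{align}
which is supported in $I_h$. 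Applying $\mathcal R$ again and using linearity together with $\mathcal D$ acting only in $\mathbf x$, so that $\mathcal D[\alpha_h'\mathbf u^\ast]=\alpha_h'\mathcal D\mathbf u^\ast$, gives
\begin{align}
\mathcal R[\mathcal R[\mathbf u^\dagger]]=\alpha_h''\mathbf u^\ast+\alpha_h'(\partial_t\mathbf u^\ast+\mathcal D\mathbf u^\ast)=\alpha_h''\mathbf u^\ast .
\end{align}
Hence $\mathcal R[\mathbf u^{\dagger,+}]=\mathcal R[\mathbf u^\dagger]-\tau\mathcal R[\mathcal R[\mathbf u^\dagger]]=(\alpha_h'-\tau\alpha_h'')\mathbf u^\ast$. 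Here I use that $\mathcal R[\mathbf v]-\mathcal R[\mathbf w]$ is linear in $\mathbf v-\mathbf w$ because $\mathcal D$ is linear.

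Next, since the points are i.i.d. uniform on $(0,T)\times\Omega$, the expectation of the empirical loss equals the space-time average:
\begin{align}
\mathbb E\,\mathcal L^{\mathrm{new}}_{\mathrm{int}}(\mathbf v)=\frac{1}{T|\Omega|}\int_0^T\!\!\int_\Omega|\mathcal R[\mathbf v]|^2\,d\mathbf x\,dt .
\end{align}
Both residuals vanish outside $I_h$, so only $\int_{I_h}(\cdot)\,dt$ matters. To obtain the scalings I fix the cutoff as a rescaled profile, $\alpha_h(t)=\beta((t-t_0)/h)$, where $\beta$ is a fixed smooth function equal to $1$ on $(-\infty,0]$ and $0$ on $[1,\infty)$. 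I will state this explicitly; without it, the sharpness claims are false, since a cutoff that is sloppy relative to $h$ changes the exponents. Then $\alpha_h'=h^{-1}\beta'$ and $\alpha_h''=h^{-2}\beta''$. Setting $\Phi(t)=\int_\Omega|\mathbf u^\ast(t,\mathbf x)|^2d\mathbf x$, which is continuous and bounded on $[0,T]$, the change of variables $s=(t-t_0)/h$ gives
\begin{align}
\int_{I_h}|\alpha_h'|^2\Phi\,dt=h^{-1}\int_0^1|\beta'(s)|^2\Phi(t_0+hs)\,ds,
\end{align}
and similarly for the second-derivative term with the factor $h^{-3}$. The upper bounds follow from $\|\Phi\|_\infty$. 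For $\mathbf u^{\dagger,+}$, the inequality $(a-b)^2\le2a^2+2b^2$ gives $O(h^{-1}+\tau^2h^{-3})$.

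For sharpness I use $\Phi(t_0)>0$ and continuity. For $h$ small, $\Phi\ge\Phi(t_0)/2$ on $I_h$, so $\mathbb E\mathcal L(\mathbf u^\dagger)\ge c\,h^{-1}\|\beta'\|_{L^2}^2$, and $\beta'\not\equiv0$. The lower bound for the updated residual is the main obstacle, because the cross term $-2\tau\alpha_h'\alpha_h''$ could in principle cancel mass. I will handle it by expanding
\begin{align}
\int|\alpha_h'-\tau\alpha_h''|^2\Phi=\int|\alpha_h'|^2\Phi+\tau^2\int|\alpha_h''|^2\Phi-\tau\int(|\alpha_h'|^2)'\Phi .
\end{align}
Integrating the cross term by parts, with $\alpha_h'$ vanishing at both ends of $I_h$, turns it into $\tau\int|\alpha_h'|^2\Phi'$. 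This is $O(\tau h^{-1})$ when $\mathbf u^\ast$ is $C^1$ in $t$ so that $\Phi'$ is bounded. By AM-GM, $\tau h^{-1}\le \tfrac12(h^{-1}+\tau^2 h^{-1})$, and for small $h$ this is dominated by half of the two positive terms, which are at least $c(h^{-1}+\tau^2h^{-3})$. This yields $\asymp h^{-1}+\tau^2h^{-3}$ for $h\le h_0$. I will state sharpness as holding for $h$ sufficiently small, with constants depending on $\beta$, $\mathbf u^\ast$, $T$ and $|\Omega|$. The final amplification remark follows because $\tau^2h^{-3}/h^{-1}=\tau^2h^{-2}\to\infty$.
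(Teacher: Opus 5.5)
Your argument follows the paper's proof almost step for step. You use the identities $\mathcal R_{\mathrm{int}}[\mathbf u^\dagger]=\alpha_h'\mathbf u^\ast$ and $\mathcal R_{\mathrm{int}}[\mathbf u^{\dagger,+}]=(\alpha_h'-\tau\alpha_h'')\mathbf u^\ast$, the expected loss as a space--time average, a rescaled profile $\alpha_h=\beta((t-t_0)/h)$, and continuity of $\Phi$ at $t_0$. The paper fixes exactly such a profile, and it also silently sets $\mathbf f=0$. Your upper bounds and your lower bound for $\mathbf u^\dagger$ are fine.

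The step that fails is your lower bound for $\mathbf u^{\dagger,+}$. After integrating by parts you have
\[
\int_{I_h}|\alpha_h'-\tau\alpha_h''|^2\Phi\,dt\;\ge\;A\,h^{-1}+B\,\tau^2h^{-3}-C\,\tau h^{-1},
\]
with $A=\tfrac12\Phi(t_0)\|\beta'\|_{L^2}^2$, $B=\tfrac12\Phi(t_0)\|\beta''\|_{L^2}^2$, $C=\|\Phi'\|_\infty\|\beta'\|_{L^2}^2$. Your AM--GM bound $C\tau h^{-1}\le\tfrac C2h^{-1}+\tfrac C2\tau^2h^{-1}$ disposes of the second piece, which is absorbed by $\tfrac B2\tau^2h^{-3}$ for small $h$. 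The first piece, however, has the same order $h^{-1}$ as $A\,h^{-1}$, so shrinking $h$ does not help. It is absorbed only if $C\le A$, i.e.\ $\|\Phi'\|_\infty\le\tfrac12\Phi(t_0)$. This fails in general: for a single heat mode $e^{-k^2t}\sin kx$ one has $|\Phi'|/\Phi=2k^2$. A weighted Young inequality would repair it: $C\tau h^{-1}\le\tfrac B2\tau^2h^{-3}+\tfrac{C^2}{2B}h$, and $\tfrac{C^2}{2B}h\le\tfrac A2h^{-1}$ once $h^2\le AB/C^2$.

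The detour is unnecessary, though, and this is where the paper is simpler. Since the integrand $|\alpha_h'-\tau\alpha_h''|^2$ is nonnegative, first use $\Phi\ge c_0$ on $I_h$ to get $\int_{I_h}|\alpha_h'-\tau\alpha_h''|^2\Phi\,dt\ge c_0\int_{I_h}|\alpha_h'-\tau\alpha_h''|^2\,dt$. In the unweighted integral the cross term vanishes exactly: $\int_{I_h}\alpha_h'\alpha_h''\,dt=\tfrac12\bigl[(\alpha_h')^2\bigr]_{t_0}^{t_0+h}=0$. This gives $c_0\bigl(\|\beta'\|_{L^2}^2h^{-1}+\tau^2\|\beta''\|_{L^2}^2h^{-3}\bigr)$ at once. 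The constants are uniform in $\tau$, and you no longer need your extra assumption that $\Phi'$ is bounded near $t_0$, which requires control of $\partial_t\mathbf u^\ast$ there.

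A minor correction to your remark about the profile: it is the upper bounds that need the rescaled profile. A cutoff dropping over a sublayer of width $h^2$ gives $\int|\alpha_h'|^2\sim h^{-2}$. The lower bounds hold for every admissible cutoff, since Cauchy--Schwarz gives $\int_{I_h}|\alpha_h'|^2\ge h^{-1}$ and $\int_{I_h}|\alpha_h''|^2\ge 4h^{-3}$.
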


The proof of this theorem is provided in Appendix~\ref{app:proof2}. Although the implemented loss corresponds to an implicit pseudo-time formulation, the explicit residual update in the theorem is used only as an analytical proxy to illustrate the residual-amplification mechanism. The theorem shows that, when combined with collocation resampling, pseudo-time stepping can destabilize spurious solutions by exposing and amplifying their hidden residual defects. As a result, the pseudo-time-relaxed residual loss penalizes such solutions more strongly than the conventional residual loss, thereby helping steer training toward the physical solution when the relaxed loss is properly minimized.

It is worth noting that the choice of \(\tau\) is subtle. In the idealized analysis, a larger \(\tau\) leads to stronger amplification of hidden residual defects. In practice, however, we optimize the pseudo-time-relaxed loss in \eqref{eq:pts_loss}, and there is no guarantee that one iteration of gradient descent update will exactly reproduce the corresponding pseudo-time stepping update from the current iterate. Moreover, if \(\tau\) is too large, then the relaxed system
\begin{align}
    \frac{\mathbf{u}^k-\mathbf{u}^{k-1}}{\tau}+\mathcal{R}_{\mathrm{int}}\!\left[\mathbf{u}^k\right]=0
\end{align}
corresponds to a coarse pseudo-time discretization, which introduces a larger discretization error in approximating the underlying continuous pseudo-time dynamics, 
This, in turn, can make the resulting training objective harder to optimize stably.
Therefore, \(\tau\) must be chosen carefully to balance defect amplification and optimization stability. As we discuss in the next section, this tradeoff motivates the need for an adaptive strategy to choose \(\tau\) appropriately during training.

\paragraph{Related work.} TSONN \cite{cao2023tsonn} and Scale-PINN \cite{chiu2026scale} showed empirically that pseudo-time-style losses can substantially improve PINN training, with the pseudo-time step $\tau$ chosen through heuristic search. In contrast, our work provides, to the best of our knowledge, the first theoretical explanation for this phenomenon through the lens of spurious-solution formation, and further leads to a principled adaptive strategy for selecting $\tau$ directly from this analysis.

More broadly, many techniques proposed for PINNs can be viewed, at least intuitively, as ways to suppress spurious solutions. For example, adaptive loss-weighting schemes \cite{mcclenny2020self,wang2021understanding,wang2022and,anagnostopoulos2024residual} often assign larger weights to initial and boundary condition losses, or to regions with large PDE residuals, thereby discouraging convergence to trivial or nonphysical solutions. Likewise, adaptive sampling strategies \cite{daw2022mitigating,wu2023comprehensive} place more collocation points in high-residual regions, enabling the model to better resolve difficult parts of the domain and enforce the PDE constraints more effectively. Causal training \cite{wang2024respecting,penwarden2023unified} follows a similar principle in time-dependent problems by placing greater emphasis on reducing residuals at earlier times.

From this perspective, pseudo-time stepping provides a complementary mechanism. Rather than only reweighting losses or redistributing training points, it directly modifies the residual formulation itself, so that hidden defects associated with spurious solutions are revealed and amplified, making these undesirable solutions much more strongly penalized during training.

\subsection{Adaptive pseudo-time stepping}
\label{sec:adaptive_pseduo_time}

Although we have both theoretically and empirically demonstrated the effectiveness of pseudo-time stepping in PINNs, we also observe a fundamental practical limitation of the current method: its performance is highly sensitive to the choice of $\tau$. More importantly, the training losses obtained with different values of $\tau$ are often very similar, even when the final solution quality differs substantially.

For the illustrative lid-driven cavity example considered here, we conduct an ablation study of pseudo-time stepping under different choices of $\tau$, and the corresponding results are summarized in Fig.~\ref{fig:ldc_tau_ablation}. We observe that the error convergence can differ dramatically across different choices of $\tau$, even when the corresponding loss values remain highly comparable.
Our further experiments show that the optimal value of $\tau$ depends strongly on many aspects of the problem setup, including the PDE being solved, the optimizer, the network architecture, and the loss weighting strategy. As a result, it is difficult to tune $\tau$ by monitoring the training loss alone. This is especially problematic in PINNs for solving forward problems, as the loss is often the only directly accessible metric during training. In addition, extensive hyperparameter tuning over $\tau$ can introduce substantial additional computational cost in practice.

\begin{figure}[h]
    \centering
    \includegraphics[width=1.0\linewidth]{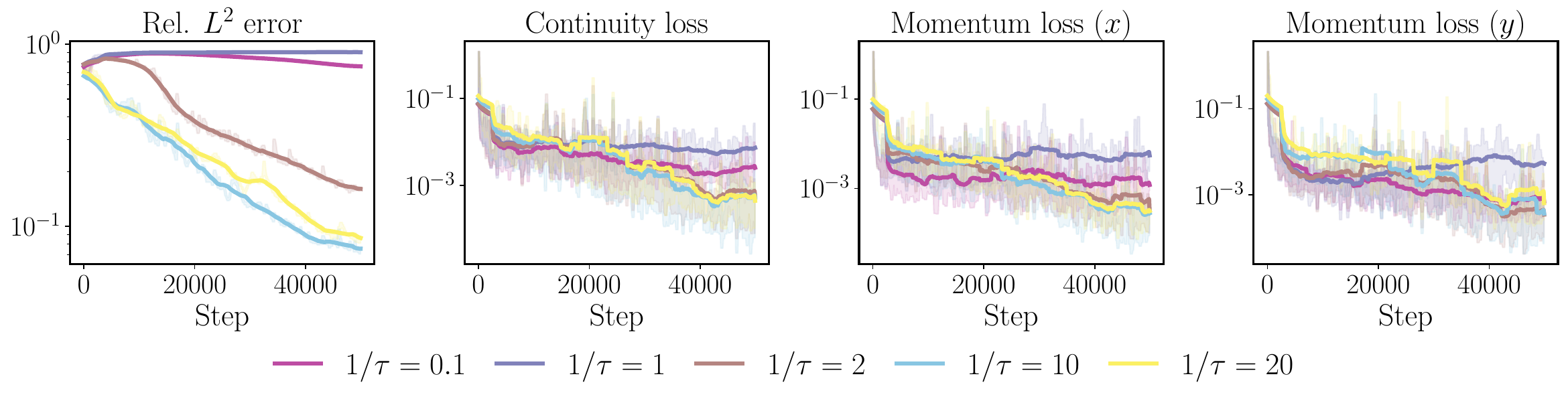}
\caption{{\em Lid-driven cavity.} Histories of the relative \(L^2\) error and PDE residual losses during PINN training with pseudo-time stepping for different choices of \(\tau\).}
    \label{fig:ldc_tau_ablation}
\end{figure}

To address the limitation discussed above, we seek to develop a self-adaptive strategy for selecting 
$\tau$. We begin by analyzing the local behavior of the pseudo-time stepping update
\begin{align}
    \frac{\mathbf{u}^{k}-\mathbf{u}^{k-1}}{\tau} + \mathcal{R}_{\mathrm{int}}[\mathbf{u}^{k}] = 0 .
    \label{eq:pts_update_appendix}
\end{align}
The key idea is that the stability and effectiveness of this iteration depend on the local Jacobian of the residual operator.

Let \(\mathbf{u}^\ast\) denote a solution of the interior residual equation,
\begin{align}
    \mathcal{R}_{\mathrm{int}}[\mathbf{u}^\ast]=0,
\end{align}
and define the error \(\mathbf{e}^k := \mathbf{u}^k-\mathbf{u}^\ast\). Linearizing \(\mathcal{R}_{\mathrm{int}}\) around \(\mathbf{u}^\ast\) gives
\begin{align}
    \mathcal{R}_{\mathrm{int}}[\mathbf{u}^{k}]
    \approx
    J_\ast (\mathbf{u}^{k}-\mathbf{u}^\ast)
    =
    J_\ast \mathbf{e}^{k},
    \qquad
    J_\ast := \frac{\partial \mathcal{R}_{\mathrm{int}}}{\partial u}(\mathbf{u}^\ast).
\end{align}
Substituting this approximation into \eqref{eq:pts_update_appendix} yields
\begin{align}
    \frac{\mathbf{e}^{k}-\mathbf{e}^{k-1}}{\tau} + J_\ast \mathbf{e}^{k} \approx 0,
\end{align}
or equivalently
\begin{align}
    \mathbf{e}^{k} \approx (I+\tau J_\ast)^{-1} \mathbf{e}^{k-1}.
\end{align}
Therefore, if \(\lambda_i\) is an eigenvalue of \(J_\ast\), for the iteration to be locally contractive, we require
\begin{align}
    \rho\!\left((I+\tau J_\ast)^{-1}\right)
    =
    \max_i \left|\frac{1}{1+\tau\lambda_i}\right|
    < 1,
\end{align}
which shows that the choice of \(\tau\) is governed by the local spectrum of \(J_\ast\). In practice, the useful range of \(\tau\) is determined by both the local residual Jacobian and the stability of the neural-network optimization used to enforce the pseudo-time update.

Theorem~\ref{thm:unstable} further clarifies how \(\tau\) should be selected. Since the amplification ratio scales like \(\tau^2 h^{-3}\), \(\tau\) should be chosen as large as possible while still preserving stable training. This observation naturally motivates an adaptive strategy that chooses \(\tau\) according to local Jacobian information.

In practice, however, explicitly estimating the spectrum of the Jacobian during training is prohibitively expensive. We therefore seek a cheap local surrogate that captures the same scaling behavior. To this end, we compare the change in the iterate with the corresponding change in the residual:
\begin{align}
    \tau^k
    :=
    \frac{\|\mathbf{u}^{k}-\mathbf{u}^{k-1}\|}
    {\|\mathcal{R}_{\mathrm{int}}[\mathbf{u}^{k}] - \mathcal{R}_{\mathrm{int}}[\mathbf{u}^{k-1}]\|}.
\end{align}
This quantity can be interpreted as an approximation to the inverse local Jacobian magnitude along the current update direction. Indeed, by a first-order Taylor expansion at \(\mathbf{u}^{k-1}\),
\begin{align}
    \mathcal{R}_{\mathrm{int}}[\mathbf{u}^{k}]
    -
    \mathcal{R}_{\mathrm{int}}[\mathbf{u}^{k-1}]
    \approx
    J_{k-1}(\mathbf{u}^{k}-\mathbf{u}^{k-1}),
    \qquad
    J_{k-1}
    :=
    \frac{\partial \mathcal{R}_{\mathrm{int}}}{\partial u}(\mathbf{u}^{k-1}).
\end{align}
Hence,
\begin{align}
\label{eq:bb}
    \frac{
    \|\mathcal{R}_{\mathrm{int}}[\mathbf{u}^{k}]
    -
    \mathcal{R}_{\mathrm{int}}[\mathbf{u}^{k-1}]\|
    }
    {\|\mathbf{u}^{k}-\mathbf{u}^{k-1}\|}
    \approx
    \|J_{k-1}\mathbf{v}_{k}\|,
    \qquad
    \mathbf{v}_{k}
    :=
    \frac{\mathbf{u}^{k}-\mathbf{u}^{k-1}}
    {\|\mathbf{u}^{k}-\mathbf{u}^{k-1}\|},
\end{align}
so that
\begin{align}
    \tau^k \approx \frac{1}{\|J_{k-1}\mathbf{v}_{k}\|}.
\end{align}
The estimator \eqref{eq:bb} is a finite-difference surrogate for the local Jacobian magnitude, analogous in spirit to the two-point step-size methods of Barzilai and Borwein \cite{barzilai1988two} in unconstrained optimization, but adapted here to operate on the residual operator in solution space rather than on the gradient in parameter space.
Although \(\tau^k\) does not recover the full spectrum of the Jacobian, it provides an inexpensive estimate of the inverse local stiffness in the direction most relevant to the current update. This makes it a practical surrogate for adapting \(\tau\) during training.

\begin{algorithm}[t]
\caption{Training PINNs with adaptive pseudo-time stepping}
\label{alg:adaptive_pts}
\begin{algorithmic}[1]
\Require initial network parameters \(\theta^{0}, \theta^{1}\), initial pseudo-time step \(\tau^{1}>0\), learning rate \(\eta>0\), total training steps \(K\), update frequency \(m \in \mathbb{N}\), smoothing factor \(\beta\in(0,1]\), small constant \(\varepsilon>0\)

\For{$k=1,2,\dots,K$}
    \State Sample random collocation sets \(X_{\mathrm{int}}^{k}\), \(X_{\mathrm{bc}}^{k}\), and \(X_{\mathrm{ic}}^{k}\)

  \State Update the pseudo-time step size every $m$ iterations by
\begin{empheq}[box=\fbox]{align}
    \Delta \mathbf{u}^{k}
    &:=
    \mathbf{u}_{\theta^{k}}(X_{\mathrm{int}}^{k})
    -
    \mathbf{u}_{\theta^{k-1}}(X_{\mathrm{int}}^{k}),
    \\
    \Delta \mathbf{r}^{k}
    &:=
    \mathcal{R}_{\mathrm{int}}[\mathbf{u}_{\theta^{k}}](X_{\mathrm{int}}^{k})
    -
    \mathcal{R}_{\mathrm{int}}[\mathbf{u}_{\theta^{k-1}}](X_{\mathrm{int}}^{k}),
    \\
    \gamma^{k} 
    &:=
    \mathrm{CosineDecay}\!\left(
    \mathcal{L}_{\mathrm{int}}(\theta^{k};X_{\mathrm{int}}^{k}),
    \mathcal{L}_{\mathrm{int}}(\theta^{0};X_{\mathrm{int}}^{0})
    \right),
    \\
    \widehat{\tau}^{k}
    &:=
    \gamma^{k}
    \frac{\|\Delta \mathbf{u}^{k}\|_{2}}{\|\Delta \mathbf{r}^{k}\|_{2}+\varepsilon},
    \\
    \tau^{k}
    &:=
    \operatorname{stopgrad}\!\left(
    \begin{cases}
        \displaystyle
        (1-\beta)\tau^{k-1} + \beta \widehat{\tau}^{k},
        & k \bmod m = 0, \\[2mm]
        \tau^{k-1},
        & \text{otherwise}.
    \end{cases}
    \right).
\end{empheq}

\State Define the pseudo-time stepping PINN loss at iteration \(k\) as a function of the current parameters \(\theta\), with the previous iterate \(\theta^{k-1}\) held fixed:
\begin{align}
    \mathcal{L}^{k}(\theta;\theta^{k-1})
    &:=
    \frac{1}{|X_{\mathrm{int}}^{k}|}
    \sum_{\mathbf{x}\in X_{\mathrm{int}}^{k}}
    \left|
    \frac{\mathbf{u}_{\theta}(\mathbf{x})-\mathbf{u}_{\theta^{k-1}}(\mathbf{x})}{\tau^{k}}
    + \mathcal{R}_{\mathrm{int}}[\mathbf{u}_{\theta}](\mathbf{x})
    \right|^{2}
    + \lambda_{\mathrm{bc}} \mathcal{L}_{\mathrm{bc}}(\theta;X_{\mathrm{bc}}^{k})
    + \lambda_{\mathrm{ic}} \mathcal{L}_{\mathrm{ic}}(\theta;X_{\mathrm{ic}}^{k}).
\end{align}

\State Compute the gradient with respect to the current parameters \(\theta\) evaluated at \(\theta=\theta^k\), and perform one gradient-descent step:
\begin{align}
    g^{k} &\gets \nabla_{\theta}\mathcal{L}^{k}(\theta;\theta^{k-1})\big|_{\theta=\theta^{k}},\\
    \theta^{k+1} &\gets \theta^{k} - \eta g^{k}.
\end{align}
\EndFor
\end{algorithmic}
\end{algorithm}

Based on the discussion above, we now present a practical training procedure for PINNs with adaptive pseudo-time stepping; the full procedure is summarized in Algorithm~\ref{alg:adaptive_pts}. In implementation, it is important to apply a stop-gradient operation when updating the pseudo-time step sizes. This is because the pseudo-time steps are computed from the network predictions and therefore depend on the model parameters $\theta$; however, during training they are intended to act as auxiliary weights rather than optimization variables.

\begin{wrapfigure}{r}{0.4\textwidth}
    \vspace{-4mm}
    \centering
    \includegraphics[width=\linewidth]{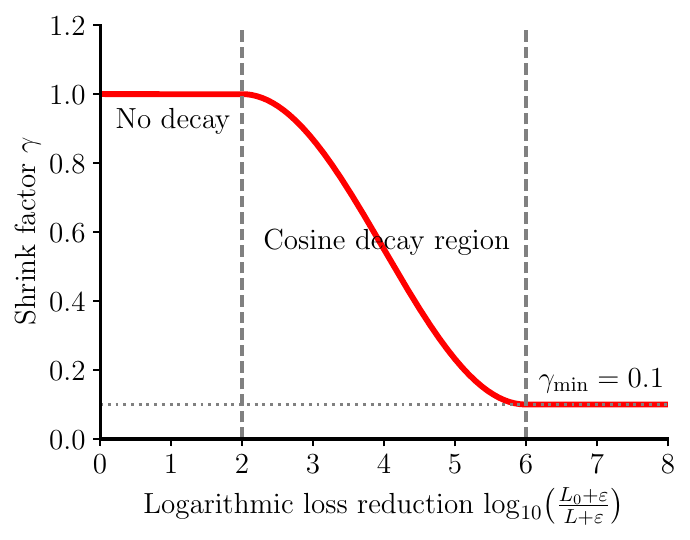}
    \vspace{-6mm}
 \caption{Illustration of the shrink factor used in adaptive pseudo-time stepping with $s_\text{start}=2, s_\text{end}=6$, $\gamma_{\text{min}}=0.1$.  }
 \label{fig:decay}
 \vspace{-10mm}
\end{wrapfigure}
In addition, one may note that we further introduce a shrink factor to modulate the adaptive pseudo-time step \(\tau^k\). Specifically, \(\gamma^k \in [\gamma_{\min},1]\) is defined through a cosine-decay schedule based on the relative reduction of the training PDE residual loss:
\begin{align}
    \gamma^k
    :=
    \gamma_{\min}
    +
    (1-\gamma_{\min})
    \frac{1+\cos(\pi p^k)}{2},
\end{align}
where \(p^k \in [0,1]\) denotes the normalized logarithmic reduction of the interior residual loss \(L_{\mathrm{int}}^k := \mathcal{L}_{\mathrm{int}}(\theta^k;X_{\mathrm{int}}^k)\) relative to its initial value \(L_{\mathrm{int}}^0\):
\begin{align}
    p^k
    :=
    \frac{
    \log_{10}\!\bigl((L_{\mathrm{int}}^0+\varepsilon)/(L_{\mathrm{int}}^k+\varepsilon)\bigr)
    -
    s_{\mathrm{start}}
    }{
    s_{\mathrm{end}}-s_{\mathrm{start}}
    }.
\end{align}
Here, $s_{\text {start }}$ and $s_{\text {end }}$ are two thresholds that control when the cosine decay starts and stops. They are defined in terms of the logarithmic reduction of the interior residual loss. Specifically, $s_{\text {start }}$ is the point at which the shrink factor begins to decrease from 1, and $s_{\text {end }}$ is the point at which it reaches its minimum value. Therefore, the interval $\left[s_{\text {start }}, s_{\text {end }}\right]$ defines the stage of training over which the decay is active, as illustrated in Fig.~\ref{fig:decay}.
This design is motivated by our empirical observation that, in the later stages of training, an excessively large \(\tau^k\) can destabilize the optimization process. As the learning rate becomes small and the training dynamics become increasingly stiff, a single gradient step may no longer be sufficient to accurately realize the desired pseudo-time stepping update. The shrink factor therefore serves as an additional stabilization mechanism by progressively reducing the effective pseudo-time step as training proceeds.

It is worth noting that conventional pseudo-time stepping in PINNs typically uses a single fixed step size for all PDE residual terms. However, when the loss function contains several such terms, it is natural to assign a separate pseudo-time step size to each relaxed equation. For example, in the lid-driven cavity problem, we introduce component-wise pseudo-time step sizes
\[
\boldsymbol{\tau}^k
=
\bigl(\tau_u^k,\tau_v^k,\tau_p^k\bigr),
\]
corresponding to the \(u\)-momentum, \(v\)-momentum, and continuity equations, respectively. Specifically, we define
\begin{align}
{\tau}_u^{k}
&:=
\frac{\left\|u_{\theta^{k}}-u_{\theta^{k-1}}\right\|_{2}}
{\left\|\mathcal{R}_{u}[\mathbf{u}_{\theta^{k}}]-\mathcal{R}_{u}[\mathbf{u}_{\theta^{k-1}}]\right\|_{2}+\varepsilon},\\
{\tau}_v^{k}
&:=
\frac{\left\|v_{\theta^{k}}-v_{\theta^{k-1}}\right\|_{2}}
{\left\|\mathcal{R}_{v}[\mathbf{u}_{\theta^{k}}]-\mathcal{R}_{v}[\mathbf{u}_{\theta^{k-1}}]\right\|_{2}+\varepsilon},\\
{\tau}_p^{k}
&:=
\frac{\left\|p_{\theta^{k}}-p_{\theta^{k-1}}\right\|_{2}}
{\left\|\mathcal{R}_{c}[\mathbf{u}_{\theta^{k}}]-\mathcal{R}_{c}[\mathbf{u}_{\theta^{k-1}}]\right\|_{2}+\varepsilon}.
\end{align}
This component-wise relaxation provides additional flexibility, as it allows each equation to adapt according to its own scale and local optimization dynamics. 


\begin{figure}[h]
    \centering
    \includegraphics[width=1.0\linewidth]{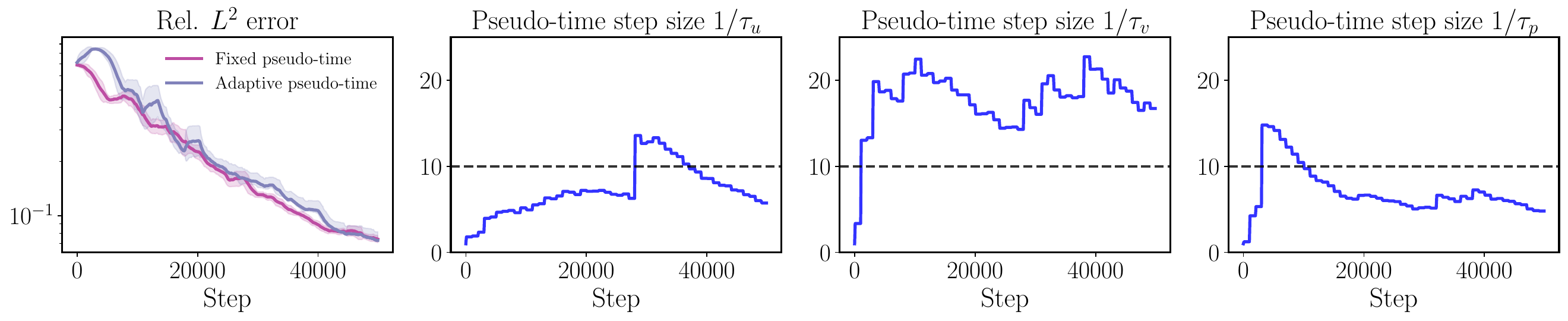}
\caption{{\em Lid-driven cavity}: Comparison of adaptive pseudo-time stepping with fixed pseudo-time stepping using the best tuned shared step size, \(1/\tau = 10\). From left to right: relative \(L^2\) error versus training iterations, and the evolution of the adaptive pseudo-time step sizes \(\tau_u\), \(\tau_v\), and \(\tau_p\). In the last three panels, the adaptive step sizes are shown together with the reference fixed-step value.}
    \label{fig:ldc_method_tau}
\end{figure}

In Figure~\ref{fig:ldc_method_tau}, we compare conventional pseudo-time stepping with a fixed shared step size, using the best-tuned value for this example ($1 / \tau=10$), against the proposed adaptive variant. The adaptive variant achieves error convergence comparable to that of the optimally tuned fixed-step method. In addition, starting from their initial value of 1, the adaptive pseudo-time step sizes remain roughly close to the reference value throughout training. In the Results section, we further assess the effectiveness of the adaptive strategy on a wider range of problems.

\section{Results}
\label{sec:results}

In this section, we aim to evaluate the effectiveness of the proposed adaptive pseudo-time stepping scheme in PINNs for 
solving forward PDE problems. To this end,  we consider a diverse set of 10 representative and challenging benchmarks arising from fundamental physical phenomena. These examples span shock formation, chaotic dynamics, reaction--diffusion systems, fluid mechanics, and heat transfer. Detailed descriptions of all problem setups, including PDE parameters, initial and boundary conditions, numerical implementations are provided in Appendix~\ref{appendix:experiments}.

Here, we view PINNs as an integrated computational framework whose performance is governed by the interaction of multiple design choices. In practice, training failures often arise from the coupling among these components, and no single modification is sufficient to resolve all difficulties. Therefore, to fairly assess the value of a particular method, it is more meaningful to compare it against a strong baseline that already incorporates established improvements. Following this principle, we construct a competitive baseline by integrating several techniques that have been widely studied and shown to improve PINN training. Specifically, our baseline combines improvements in architecture design, loss formulation, and optimization:
\begin{itemize}[leftmargin=1.2em,itemsep=0.3em,topsep=0.3em]
\item \textbf{PirateNet \cite{wang2024piratenets}:} a physics-informed residual adaptive network designed to improve the training of deep PINNs by mitigating spectral bias and better capturing sharp gradients and transition regions;

    \item \textbf{Causal training \cite{wang2024respecting}:} a PDE residual weighting strategy that respects temporal causality and improves robustness for time-dependent problems;
    
    \item \textbf{Self-adaptive loss weighting \cite{wang2022and,wang2021understanding}:} a global loss weighting scheme for balancing gradient contributions from loss terms with different scales;
    
    \item \textbf{SOAP \cite{vyas2024soap,wang2025gradient}:} a quasi-second-order optimization method that alleviates directional gradient conflicts among multiple loss terms.
\end{itemize}

We believe this baseline represents one of the most accurate and scalable PINN pipelines currently available. It has achieved state-of-the-art performance across a broad range of PDE benchmarks \cite{wang2023expert,wang2025gradient} and has also enabled successful simulations of three-dimensional turbulent flows \cite{wang2025simulating}. A comprehensive description of the techniques incorporated into this baseline  is provided in Appendices~\ref{appendix:arch}, \ref{appendix:optim},  and \ref{appendix:weighitng}, respectively.

Unless otherwise stated, we use the same training protocol and hyperparameter settings for all experiments. Our goal is to evaluate the methods under a robust and unified baseline while minimizing problem-specific tuning. This is particularly important for forward PDE problems, where the reference solution is unavailable during training and hyperparameter selection cannot rely on test-time accuracy. The resulting setup is given below.

We use PirateNet \cite{wang2024piratenets} as the backbone architecture, with three residual blocks (nine layers in total), a hidden width of 256, and \texttt{Tanh} activation functions. When applicable, we impose exact periodic boundary conditions following \cite{dong2021method}.
We train all models for $10^5$ iterations using mini-batch gradient descent, with 4,096 randomly sampled collocation points per iteration for 2D problems and 8,192 for 3D problems. The learning rate schedule begins with a linear warm-up over the first 2,000 iterations, increasing from 0 to $10^{-3}$, followed by exponential decay with a factor of 0.9 every 2,000 iterations. To further improve stability and convergence, we employ learning-rate annealing for loss balancing \cite{wang2021understanding,wang2023expert}, updating the loss weights every 1,000 iterations using a moving average. For time-dependent PDEs, we additionally apply causal training \cite{wang2024respecting,wang2023expert} with a causal tolerance of 1.0. Optimization is performed using the SOAP optimizer \cite{vyas2024soap,wang2025gradient} with $\beta_1=0.9$ and $\beta_2=0.999$, while the covariance matrix is updated every two iterations. The full set of hyperparameters is reported in Table~\ref{tab:hyper-parameters}, and largely follows the configurations established in \cite{wang2023expert,wang2024piratenets,wang2025gradient}.

To ensure a fair comparison, we evaluate both conventional pseudo-time stepping and the proposed adaptive pseudo-time stepping on top of the constructed baseline using identical hyperparameter settings in all experiments. For conventional pseudo-time stepping, we report the best result obtained over step sizes spanning several orders of magnitude, \(\tau \in \{0.01, 0.1, 1, 10, 100\}\). We emphasize, however, that this tuning is performed only for benchmarking purposes, where the reference solution is available. In practical settings, where the ground-truth solution is unknown, such tuning is generally not feasible. For the adaptive variant, we initialize all step sizes to 1 and  update the step size every 1{,}000 iterations  unless otherwise stated. Moreover, 
we enable step size shrinkage with \(s_{\mathrm{start}} = 2\), \(s_{\mathrm{end}} = 6\), and \(\gamma_{\min} = 0.1\). This choice is empirical and is based on our observation that, with the constructed baseline, the PINN loss typically decreases by about six orders of magnitude from its initial value during training.

\begin{table}[t]

    \vspace{-1mm}
    \centering
    \renewcommand{\arraystretch}{1.2}
    \caption{Comparison of the baseline PINN and the same baseline augmented with either fixed or adaptive pseudo-time stepping. The evaluation metric is the relative \(L^2\) error over the full space--time domain. For fixed pseudo-time stepping, we report the best result over \(\tau \in \{0.01, 0.1, 1, 10, 100\}\), together with the corresponding \(\tau_{\text{best}}\).}
    \label{tab:sota}
    \resizebox{\textwidth}{!}{
    \begin{tabular}{lcccc}
        \toprule
        \multirow{2}{*}{\textbf{Benchmark}}
        & \multirow{2}{*}{\textbf{Baseline}}
        & \multicolumn{2}{c}{\textbf{Fixed pseudo-time}}
        & \multirow{2}{*}{\textbf{Adaptive pseudo-time}} \\
        \cmidrule(lr){3-4}
        & & \textbf{Rel. \(L^2\) error} & \(\boldsymbol{\tau_{\textbf{best}}}\) & \\
        \midrule
        Allen--Cahn  
        & \(5.17 \times 10^{-6}\)
        & \(3.26 \times 10^{-6}\)
        & \({1}\)
        & \(\mathbf{3.05\times 10^{-6}}\) \\
        
        Korteweg–De Vries & \(3.55 \times 10^{-4}\)& \(2.92 \times 10^{-4}\)& 1& \(\mathbf{2.28\times 10^{-4}}\) \\
        
        Inviscid Burgers
        & \(2.13 \times 10^{-1}\)
        & \(1.01 \times 10^{-1}\)
        & \({10}\)
        & \(\mathbf{6.47\times 10^{-2}}\) \\
        Kuramoto--Sivashinsky
        & \(1.06 \times 10^{-1}\)
        & \(3.19 \times 10^{-2}\)
        & \({10}\)
        & \(\mathbf{1.11\times 10^{-2}}\) \\

        Gray--Scott
        & \(4.14 \times 10^{-1}\)
        & \(1.07 \times 10^{-1}\)
        & \({100}\)
        & \(\mathbf{1.52 \times 10^{-2}}\) \\

        Ginzburg--Landau
        & \(1.74 \times 10^{-1}\)
        & \(5.46 \times 10^{-2}\)
        & \({1}\)
        & \(\mathbf{7.75 \times 10^{-3}}\) \\

        Lid-driven cavity \((\mathrm{Re}=5 \times 10^3)\)
        & \(6.33 \times 10^{-1}\)
        & \(4.93 \times 10^{-2}\)
        & \({10}\)
        & \(\mathbf{4.13 \times 10^{-2}}\) \\

        Backward-facing step \((\mathrm{Re}=8\times 10^2)\)
        & \(3.02 \times 10^{-1}\)
        & \(1.34 \times 10^{-2}\)
        & \({0.1}\)
        & \(\mathbf{2.32\times 10^{-3}}\) \\

        Kolmogorov flow \((\mathrm{Re}=10^4)\)
        & \(7.66 \times 10^{-1}\)
        & \(9.12 \times 10^{-2}\)
        & \({10}\)
        & \(\mathbf{7.05\times 10^{-2}}\) \\

        Rayleigh--Taylor instability \((\mathrm{Ra}=10^6)\)
        & \(3.98 \times 10^{-1}\)
        & \(5.61 \times 10^{-3}\)
        & \({1}\)
        & \(\mathbf{3.82\times 10^{-3}}\) \\
        \bottomrule
    \end{tabular}
    }
\end{table}

\begin{figure}[t]
    \centering

    \begin{subfigure}[t]{1.0\linewidth}
        \centering
        \includegraphics[width=\linewidth]{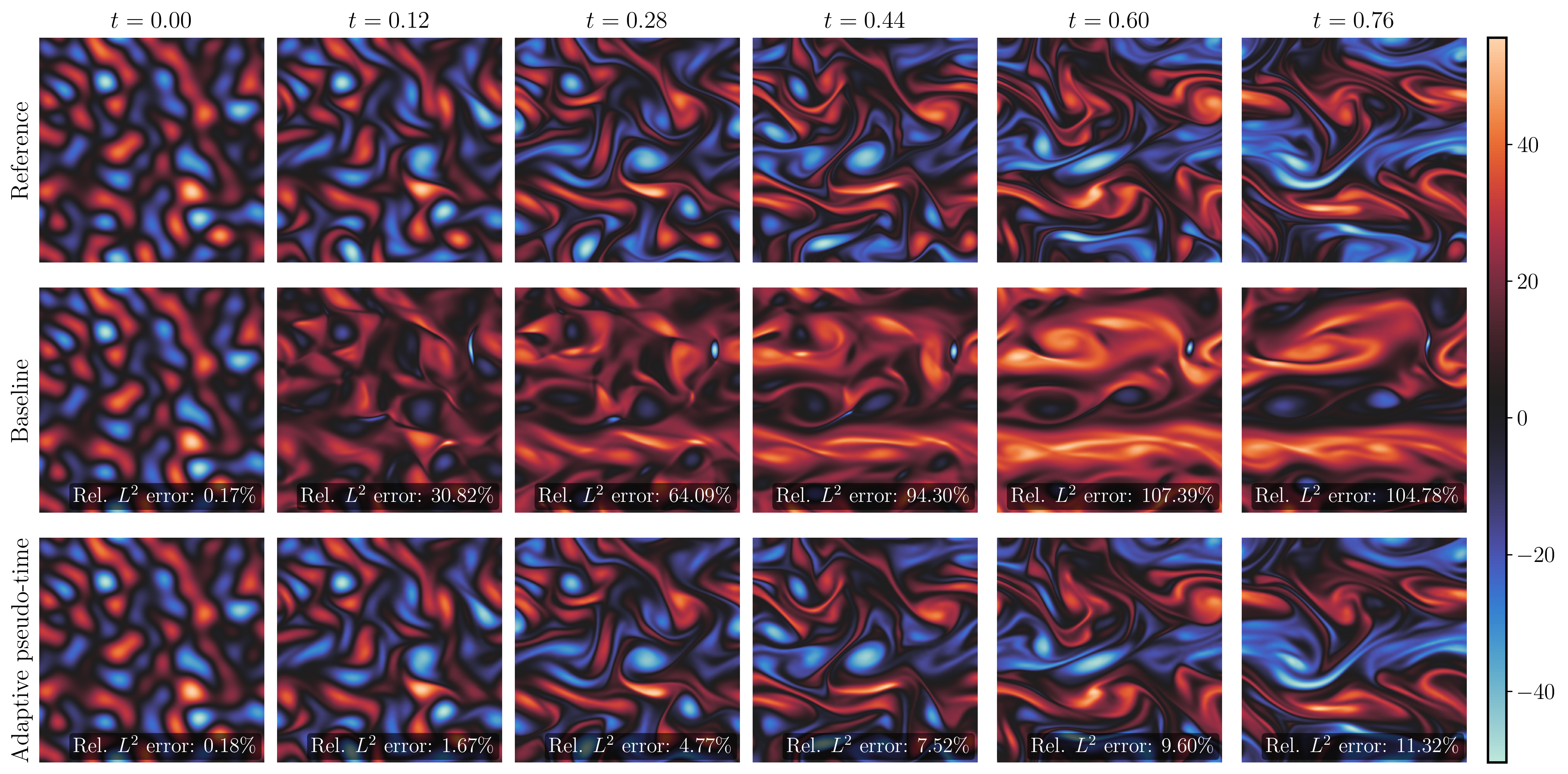}
    \end{subfigure}

    \vspace{0.1em}

    \begin{subfigure}[t]{1.0\linewidth}
        \centering
        \includegraphics[width=\linewidth]{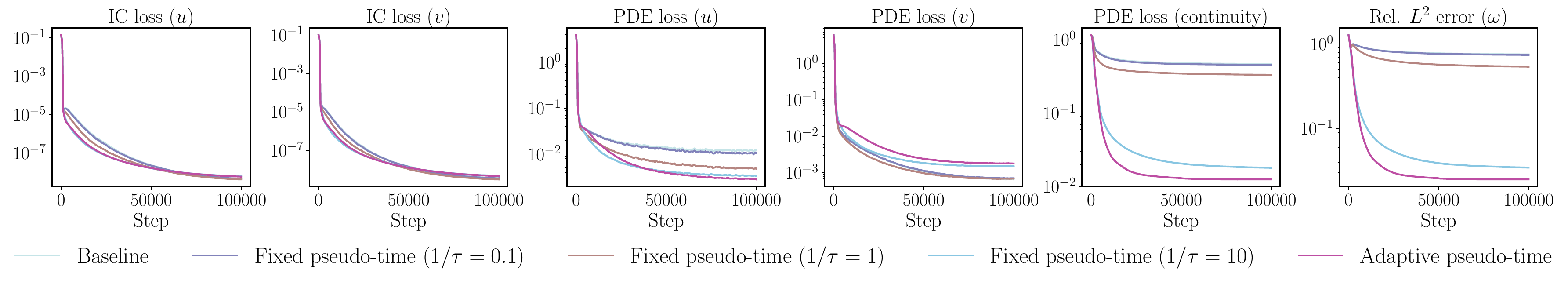}
    \end{subfigure}

    \caption{{\em Kolmogorov flow.}  Top: Comparison of the reference solution, the baseline PINN, and the baseline PINN with adaptive pseudo-time stepping. Bottom: training loss and relative \(L^2\) error histories.}
    \label{fig:kf_results}
\end{figure}

\paragraph{Main results.}
Table~\ref{tab:sota} reports the relative \(L^2\) error between the ground-truth PDE solutions and the corresponding PINN predictions across a diverse set of benchmark problems. Overall, our baseline PINN equipped with adaptive pseudo-time stepping achieves the best performance in every case, outperforming both the strong baseline and fixed pseudo-time stepping with carefully tuned \(\tau\). These quantitative gains are further supported by the visual results in Fig.~\ref{fig:allen_cahn_results} to Fig.~\ref{fig:rt_results}. We can observe that the predicted solutions by adaptive pseudo-time stepping are in good agreement with the reference numerical solutions and yield consistently better error convergence.

A key observation is that, for conventional pseudo-time stepping, the optimal choice of \(\tau\) varies substantially across problems, often spanning several orders of magnitude. This highlights the practical difficulty of selecting \(\tau\). We emphasize that the fixed \(\tau\) values reported in Table~\ref{tab:sota} are included only for benchmarking purposes. While better values may exist, identifying them would require more extensive hyperparameter tuning. More critically, they cannot be identified reliably from the training losses alone, making fixed pseudo-time stepping difficult to use in realistic applications. This point is further illustrated by the inviscid Burgers and Gray--Scott examples (Figs.\ref{fig:gs_results} and ~\ref{fig:inviscid_burgers_results}), where the best predictive performance is obtained with choices of \(\tau\) that are associated with noticeably larger training losses. Therefore, the training loss is not a reliable criterion for tuning \(\tau\), even when exhaustive benchmarking is feasible.

Another important observation is that the loss convergence provides further evidence for our interpretation of pseudo-time stepping. As we argued in Section~\ref{sec:pseduo_time}, its main benefit does not appear to come from improved conditioning alone, but rather from residual amplification under collocation-point resampling, which helps reveal and suppress spurious solutions. This point is illustrated by the inviscid Burgers, KS, and backward-facing step examples (Fig~\ref{fig:gs_results}, ~\ref{fig:inviscid_burgers_results},  and \ref{fig:bfs}), where the pseudo-time stepping produces significantly lower test errors even though its training losses are comparable to, and in some cases higher than, those of the competing methods. 

We also observe the previously identified failure mode in the Kolmogorov flow problem (Fig.~\ref{fig:kf_results}) and the Rayleigh--Taylor instability problem (Fig.~\ref{fig:rt_results}). In both cases, the baseline PINN performs poorly. More importantly, after some time, its prediction becomes nearly stationary and no longer evolves with time, suggesting that the model has converged to a nonphysical spurious solution rather than capturing the true dynamics. For the boundary-value problem of backward-facing step flow, the failure takes a different form: the baseline PINN learns a flow that is largely unidirectional and gradually decays toward the outlet. Taken together, these results provide further evidence that the PINN converges to the spurious solutions discussed in the  Section \ref{sec: failure_mode}.

Finally, we note that pseudo-time stepping is most beneficial for problems in which PINN models are susceptible to converging to spurious solutions, a phenomenon that typically arises in PDEs with sharp transition regions or long-time evolution. In contrast, when the PINN training already performs well, the improvement brought by pseudo-time stepping is naturally more limited. This is consistent with our observations on the Allen--Cahn and wave equation benchmarks, where the gains are less pronounced. We attribute this to the fact that these problems have been studied extensively and that our baseline already reaches accuracy close to the \texttt{float32} precision limit used in training. Nevertheless, Fig.~\ref{fig:allen_cahn_results} shows that pseudo-time stepping does not hurt training and still yields faster convergence during the early stages.

\paragraph{Effect of pseudo-time step size update frequency.}
We also note that the proposed adaptive pseudo-time stepping exhibits some sensitivity to the update frequency, and we report this effect for completeness. To investigate this, we consider four benchmark problems and train PINN models with adaptive pseudo-time stepping using different update frequencies, with each setting repeated over five random seeds. The resulting relative \(L^2\) errors are summarized in the left panel of Fig.~\ref{fig:ablation_freq_shrink}. We observe that more frequent updates generally lead to worse predictive accuracy. We attribute this behavior to the fact that overly frequent changes in the pseudo-time step sizes can destabilize the training process. In contrast, updating the step size at a moderate frequency provides a better balance between adaptivity and stability, leading to more reliable performance across different runs. Based on this observation, we update the pseudo-time step size \(\tau\) every 1{,}000 iterations. An additional advantage of this choice is that its computational overhead is negligible relative to the overall training cost.

\begin{figure}
    \centering
    \includegraphics[width=1.0\linewidth]{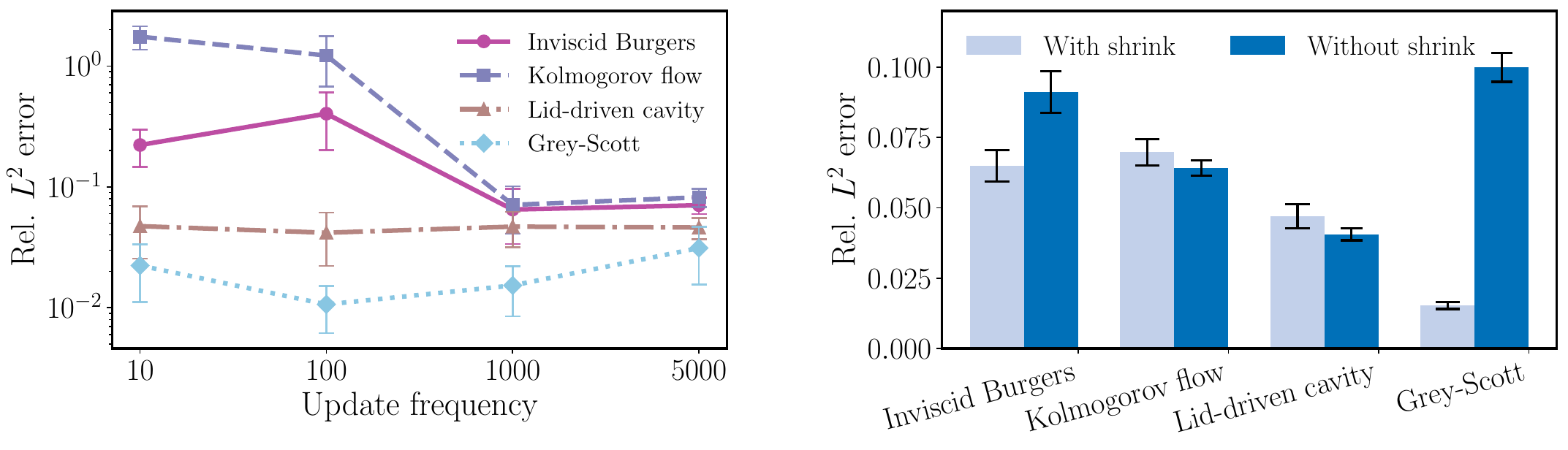}
    \caption{Ablation studies of the update frequency (left) and step size shrinkage (right) in adaptive pseudo-time stepping. The relative \(L^2\) errors are averaged over five random seeds for four benchmark problems. }
    \label{fig:ablation_freq_shrink}
\end{figure}

\paragraph{Effect of the shrink factor in adaptive pseudo-time stepping.}
Another important ablation investigates the proposed step size shrinkage during training. To assess its effect, we compare adaptive pseudo-time stepping with and without shrinkage on the same four benchmark problems, and report the relative \(L^2\) error averaged over five random seeds. The results are shown in the right panel of Fig.~\ref{fig:ablation_freq_shrink}. Overall, the effect is not entirely uniform across problems. However, disabling shrinkage can lead to substantially worse performance in some cases, most notably for the Gray--Scott equation. Since our goal is to develop an adaptive pseudo-time stepping strategy that is robust across a wide range of problems, we enable shrinkage in all experiments. This choice leads to more reliable training behavior overall, even if its benefit is not equally evident on every benchmark.

\begin{figure}
    \centering
       \includegraphics[width=0.48\linewidth]{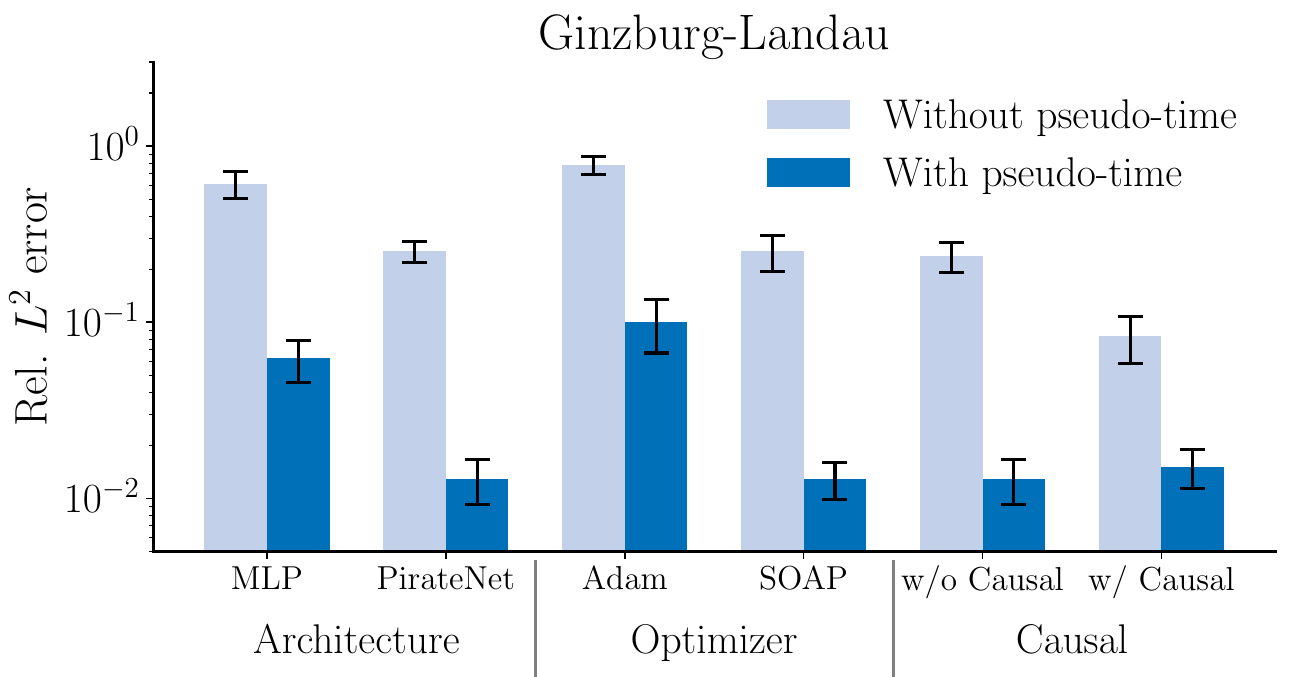}
    \includegraphics[width=0.48\linewidth]{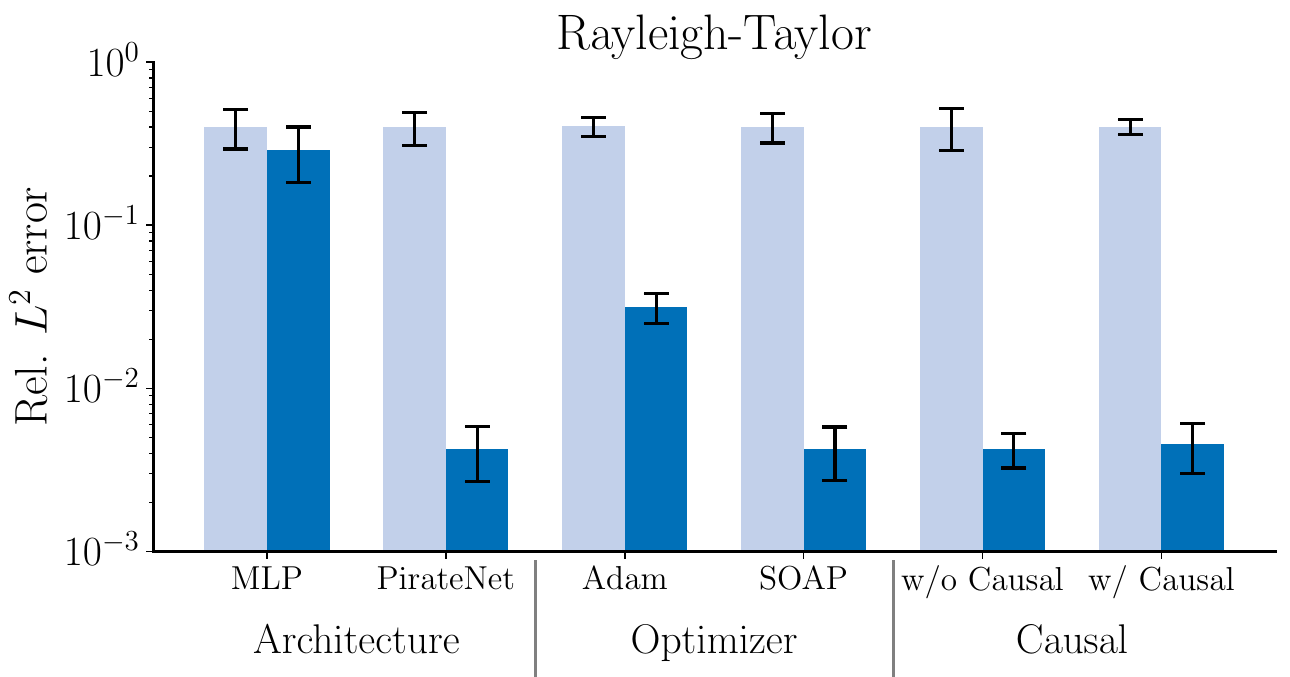}
 \caption{Ablation studies of the baseline and robustness of pseudo-time stepping on the Ginzburg--Landau equation (left) and the Rayleigh--Taylor instability (right). Relative \(L^2\) errors are reported for different baseline variants, each obtained by removing one key component from the full baseline while keeping the others unchanged. For each variant, results are shown both with and without pseudo-time stepping, and are averaged over five random seeds.}
    \label{fig:ablation}
\end{figure}

\paragraph{Robustness of adaptive pseudo-time stepping.}
As a final ablation study, we examine both the role of individual components in the baseline and the robustness of pseudo-time stepping when combined with different design choices. To this end, we consider two representative benchmark problems, namely the Ginzburg--Landau equation and the Rayleigh--Taylor instability. For each problem, we start from the full baseline and disable one key component at a time while keeping all other components unchanged. In particular, we consider ablations of the PirateNet architecture, the SOAP optimizer, and causal training, and for each resulting variant we evaluate the model both with and without pseudo-time stepping.  The relative \(L^2\) errors, averaged over five random seeds, are reported in Fig.~\ref{fig:ablation}. Overall, pseudo-time stepping improves performance consistently across all settings, regardless of the choice of architecture, optimizer, or causal training. In addition, replacing Adam with SOAP and MLP with PirateNet generally improves accuracy. However, without pseudo-time stepping, these challenging problems still yield errors above 
10 \%, even with the stronger baseline components.
By contrast, causal training does not significantly improve the final accuracy in these examples, although in our experience it mainly helps make training more robust. 
We also observe that the network architecture is particularly important for the Rayleigh--Taylor example: using an MLP, with or without pseudo-time stepping, leads to similarly poor performance.

\paragraph{Computational costs.}
We summarize the computational overhead of pseudo-time stepping in Table~\ref{tab:cost}, which reports the wall-clock training time per 100 iterations for the baseline PINN and the same baseline augmented with pseudo-time stepping on a single H200 GPU. Overall, the additional cost is modest across most benchmarks. This is because pseudo-time stepping only requires an extra evaluation of the network outputs using the parameters from the previous iteration, which are treated as constants and therefore do not contribute to backpropagation. These results suggest that pseudo-time stepping can substantially improve training behavior while introducing only a small computational overhead in most cases. For the proposed adaptive pseudo-time stepping, the cost of updating the step size is negligible relative to the total training cost, since it is performed only once every few thousand iterations.

\begin{table}[h]
    \centering
    \renewcommand{\arraystretch}{1.2}
    \caption{Computational cost of the baseline PINN and the same baseline augmented with fixed pseudo-time stepping. We report the wall-clock training time per 100 iterations (in seconds), measured on a single H200 GPU, as mean $\pm$ standard deviation.}
    \label{tab:cost}
    \begin{tabular}{lcc}
        \toprule
        \multirow{2}{*}{\textbf{Benchmark}} & \multicolumn{2}{c}{\textbf{Training time per 100 iterations (s)}} \\
        \cmidrule(lr){2-3}
        & \textbf{Baseline} & \textbf{Pseudo-time stepping} \\
        \midrule
        Allen--Cahn & $2.50 \pm 0.08$ & $2.63 \pm 0.07$ \\
        Korteweg--De Vries & $2.39 \pm 0.06$ & $2.42 \pm 0.08$ \\
        Inviscid Burgers & $1.07 \pm 0.03$ & $1.08 \pm 0.04$ \\
        Kuramoto--Sivashinsky & $3.03 \pm 0.11$ & $5.50 \pm 0.18$ \\
        Gray--Scott & $4.26 \pm 0.13$ & $4.56 \pm 0.17$ \\
        Ginzburg--Landau & $4.33 \pm 0.19$ & $4.67 \pm 0.16$ \\
        Lid-driven cavity \((\mathrm{Re}=5 \times 10^3)\) & $1.09 \pm 0.03$ & $1.13 \pm 0.04$ \\
        Backward-facing step \((\mathrm{Re}=8\times 10^2)\) & $1.39 \pm 0.05$ & $1.41 \pm 0.06$ \\
        Kolmogorov flow \((\mathrm{Re}=10^4)\) & $4.90 \pm 0.20$ & $5.02 \pm 0.17$ \\
        Rayleigh--Taylor instability \((\mathrm{Ra}=10^6)\) & $6.65 \pm 0.21$ & $6.81 \pm 0.27$ \\
        \bottomrule
    \end{tabular}
\end{table}

\section{Discussion}
\label{sec:discussion}


In this work, we identify a fundamental challenge in PINN training: the tendency to converge to spurious solutions. We argue that this failure mode originates from an inherent weakness in the loss formulation, namely that the PDE residual is enforced only at a finite set of collocation points, which can make the training objective effectively ill-posed. Through both theoretical analysis and empirical evidence, we show that this issue can be substantially mitigated by introducing pseudo-time stepping into the PDE residual loss. At the same time, we demonstrate that the effectiveness of pseudo-time stepping depends critically on the choice of the pseudo-time step $\tau$. To overcome this limitation, we propose an adaptive pseudo-time stepping strategy that automatically selects $\tau$ using a cheap estimate of the PDE operator’s local Jacobian stiffness. We validate the proposed method across a broad set of PDE benchmarks, and the results show that it consistently outperforms conventional pseudo-time stepping with manually tuned $\tau$, while also exhibiting strong robustness.

More broadly, we believe this work provides a clearer perspective on why PINNs fail and helps open new directions for improving their reliability. In particular, our analysis suggests that the spurious solutions are a central obstacle in physics-informed learning, and that understanding how to detect and avoid them is essential for building more robust methods. From a methodological standpoint, there is still considerable room to improve pseudo-time stepping, for example by incorporating longer optimization histories, introducing spatially or temporally local step sizes, or developing more principled adaptive strategies that better exploit the structure of the underlying PDE operator. Moreover, the proposed idea is not limited to standard PINNs, and can be extended naturally to the training of physics-informed neural operators, where related optimization difficulties may also arise. On the application side, our current study mainly validates the method on fluid-related problems; it would be equally interesting to investigate its effectiveness in computational solid mechanics, including elasticity, fracture, and other challenging settings. Overall, we hope this work serves as a useful step toward more robust and scalable physics-informed machine learning.

\section*{Acknowledgments}
This work was supported by the U.S. Department of Energy, Office of Science, Advanced Scientific Computing Research program, under Award No. DE-SC0024563. We also gratefully acknowledge support from the NVIDIA Academic Grant Program.

\bibliographystyle{unsrt}  
\bibliography{references}  

\appendix
\input{appendix}

\end{document}

%% file: appendix.tex
\clearpage
\section{Nomenclature}


\definecolor{tableblue}{RGB}{33,76,140}
\definecolor{tablebluebg}{RGB}{235,242,252}
\definecolor{headergray}{RGB}{245,247,250}

\newcolumntype{L}[1]{>{\RaggedRight\arraybackslash}p{#1}}
\newcommand{\notationgroup}[1]{%
\addlinespace[4pt]
\rowcolor{tablebluebg}
\multicolumn{3}{@{}l}{\textbf{\textcolor{tableblue}{#1}}}
\\[-2pt]
}

\begingroup
\setlength{\tabcolsep}{8pt}
\renewcommand{\arraystretch}{1.18}
\small

\begin{longtable}{@{} L{3.2cm} L{2.0cm} L{8.8cm} @{}}
\caption{{Notation used throughout the paper. }}
\label{tab:notation} \\

\toprule
\rowcolor{headergray}
\textbf{Symbol} & \textbf{Type} & \textbf{Description} \\
\midrule
\endfirsthead

\multicolumn{3}{@{}l}{\small\itshape Table \thetable\ continued from previous page.} \\
\toprule
\rowcolor{headergray}
\textbf{Symbol} & \textbf{Type} & \textbf{Description} \\
\midrule
\endhead

\midrule
\multicolumn{3}{r@{}}{\small\itshape Continued on next page.} \\
\endfoot

\bottomrule
\endlastfoot

\notationgroup{Domain and geometry}
$\Omega \subset \mathbb{R}^d$ & domain & Bounded spatial domain with regular boundary $\partial\Omega$ \\
$\partial\Omega$ & boundary & Boundary of the spatial domain \\
$[0,T]$ & time interval & Physical time horizon for parabolic or hyperbolic problems \\
$(t,\mathbf{x})$ & coordinates & Physical time and spatial coordinate \\
$d$ & integer & Spatial dimension \\

\notationgroup{PDE operators and data}
$\mathcal{D}[\,\cdot\,]$ & operator & Linear or nonlinear spatial differential operator \\
$\mathcal{B}[\,\cdot\,]$ & operator & Abstract boundary operator (e.g., Dirichlet, Neumann, Robin) \\
$\mathbf{f}(t,\mathbf{x})$ & forcing & Right-hand-side source term of the PDE \\
$\mathbf{g}(\mathbf{x})$ & initial data & Prescribed initial condition at $t=0$ \\
$\mathbf{u}^*(t,\mathbf{x})$ & solution & Exact solution of the PDE \\

\notationgroup{Neural network approximation and residuals}
$\mathbf{u}_\theta(t,\mathbf{x})$ & network output & Neural-network approximation of the solution parameterized by $\theta$ \\
$\theta$ & parameters & Trainable weights and biases of the network \\
$\mathcal{R}_{\mathrm{int}}[\mathbf{u}_\theta]$ & residual & Interior PDE residual, e.g.,
$\partial_t \mathbf{u}_\theta + \mathcal{D}[\mathbf{u}_\theta] - \mathbf{f}$ \\
$\mathcal{R}_{\mathrm{bc}}[\mathbf{u}_\theta]$ & residual & Boundary-condition residual on $\partial\Omega$ \\
$\mathcal{R}_{\mathrm{ic}}[\mathbf{u}_\theta]$ & residual & Initial-condition residual $\mathbf{u}_\theta(0,\mathbf{x}) - \mathbf{g}(\mathbf{x})$ \\

\notationgroup{Collocation sets and counts}
$X_{\mathrm{int}}$ & set & Interior collocation points in $[0,T]\times\Omega$, with cardinality $N_{\mathrm{int}}$ \\
$X_{\mathrm{bc}}$ & set & Boundary collocation points in $[0,T]\times\partial\Omega$, with cardinality $N_{\mathrm{bc}}$ \\
$X_{\mathrm{ic}}$ & set & Initial collocation points in $\Omega$, with cardinality $N_{\mathrm{ic}}$ \\
$N_{\mathrm{int}},N_{\mathrm{bc}},N_{\mathrm{ic}}$ & integers & Numbers of interior, boundary, and initial collocation points \\

\notationgroup{Loss functions}
$\mathcal{L}_{\mathrm{int}}(\theta;X_{\mathrm{int}})$ & scalar loss & Mean-squared PDE residual over interior collocation points \\
$\mathcal{L}_{\mathrm{bc}}(\theta;X_{\mathrm{bc}})$ & scalar loss & Mean-squared boundary-condition residual \\
$\mathcal{L}_{\mathrm{ic}}(\theta;X_{\mathrm{ic}})$ & scalar loss & Mean-squared initial-condition residual \\
$\mathcal{L}(\theta)$ & scalar loss & Total PINN loss:
$\lambda_{\mathrm{int}}\mathcal{L}_{\mathrm{int}}
+\lambda_{\mathrm{bc}}\mathcal{L}_{\mathrm{bc}}
+\lambda_{\mathrm{ic}}\mathcal{L}_{\mathrm{ic}}$ \\
$\lambda_{\mathrm{int}},\lambda_{\mathrm{bc}},\lambda_{\mathrm{ic}}>0$ & scalars & Positive weights assigned to each loss component \\

\notationgroup{Pseudo-time stepping}
$s$ & artificial time & Auxiliary variable parameterizing training dynamics \\
$\tau$ & step size &  Pseudo-time step size \\
$\mathcal{L}_{\mathrm{pts}}(\theta;\theta^{k-1})$ & scalar loss & Pseudo-time-relaxed residual loss evaluated at iterate $k$ \\
$\mathbf{u}^\dagger(t,\mathbf{x})$ & function & Spurious solution that satisfies the PDE at finite collocation points \\
$\mathbf{u}^{\dagger,+}$ & function & One pseudo-time update applied to $\mathbf{u}^\dagger$ \\
$\alpha_h(t)$ & cutoff & Smooth time cutoff with $\alpha_h(t)=1$ for $t\le t_0$ and $\alpha_h(t)=0$ for $t\ge t_0+h$ \\
$I_h=[t_0,t_0+h]$ & interval & Transition layer of width $h$ in the spurious-solution construction \\

\notationgroup{Adaptive pseudo-time stepping}
$\Delta\mathbf{u}^k$ & vector & Change in network output, $\mathbf{u}_{\theta^k}-\mathbf{u}_{\theta^{k-1}}$ \\
$\Delta\mathbf{r}^k$ & vector & Change in PDE residual between consecutive iterates \\
$\widehat{\tau}^k$ & scalar & Raw adaptive estimate:
$\gamma^k \|\Delta\mathbf{u}^k\|_2 / \bigl(\|\Delta\mathbf{r}^k\|_2 + \varepsilon\bigr)$ \\
$\gamma^k\in[\gamma_{\min},1]$ & shrink factor & Cosine-decay factor that reduces $\widehat{\tau}^k$ as training progresses \\
$\beta\in(0,1]$ & scalar & Exponential smoothing factor used to update $\tau^k$ \\
$\tau_{\min},\tau_{\max}$ & scalars & Lower and upper clipping bounds for the adaptive pseudo-time step \\
$s_{\mathrm{start}},s_{\mathrm{end}}$ & scalars & Decade thresholds controlling the cosine-decay schedule for $\gamma^k$ \\
$m\in\mathbb{N}$ & integer & Update frequency for $\tau^k$ during training \\
$J_k$ & Jacobian & Local Jacobian of $\mathcal{R}_{\mathrm{int}}$ evaluated at $\mathbf{u}_{\theta^k}$ \\

\notationgroup{Optimization}
$\eta>0$ & learning rate & Step size of the gradient-based optimizer \\
$g^k$ & gradient & Gradient $\nabla_\theta \mathcal{L}(\theta^k)$ used to update the parameters \\
$K$ & integer & Total number of training iterations \\

\end{longtable}
\endgroup

\section{Proofs}

\subsection{Proof of Theorem \ref{thm1}}
\label{app:proof1}

\begin{proof}
Let
\begin{align}
X_{\mathrm{int}}=\{(t^i_{\mathrm{int}},\mathbf{x}^i_{\mathrm{int}})\}_{i=1}^{N_{\mathrm{int}}}
\subset [0,T]\times\Omega
\end{align}
be the given finite set of interior collocation points. Define
\begin{align}
J_-&:=\{i\in\{1,\dots,N_{\mathrm{int}}\}: t^i_{\mathrm{int}}<t_0\},\\
J_+&:=\{i\in\{1,\dots,N_{\mathrm{int}}\}: t^i_{\mathrm{int}} \ge t_0\}.
\end{align}

We will construct a smooth time cutoff \(\alpha\in C^\infty(\mathbb{R})\) such that
\begin{equation}
\label{eq:alpha}
\alpha(t)=
\begin{cases}
1, & t=0,\\[2mm]
1, & \text{in a neighborhood of each } t^i_{\mathrm{int}} \text{ with } i \in J_-,\\[2mm]
0, & \text{in a neighborhood of each } t^i_{\mathrm{int}} \text{ with } i \in J_+,\\[2mm]
0, & \forall\, t\ge t_0.
\end{cases}
\end{equation}

Since the set \(\{t^i_{\mathrm{int}}\}_{i=1}^{N_{\mathrm{int}}}\) is finite, we can choose \(\delta>0\) such that
\begin{align}
[\,t^i_{\mathrm{int}} - \delta,\; t^i_{\mathrm{int}}+\delta\,]\subset (0,t_0)
\qquad \text{for all } i\in J_-.
\end{align}
Let
\begin{align}
K:=\bigcup_{i\in J_-}[\,t^i_{\mathrm{int}} -\delta,\; t^i_{\mathrm{int}} +\delta\,]\cup\{0\}.
\end{align}
Then \(K\) is a compact subset of \((-\infty,t_0)\).

Choose an open set \(U\subset \mathbb{R}\) such that
\begin{align}
K\subset U
\qquad\text{and}\qquad
\overline{U}\subset (-\infty,t_0).
\end{align}
Let \(\chi_U\) be the indicator function of \(U\), and let \(\rho\in C_c^\infty(\mathbb{R})\) be a standard mollifier satisfying
\begin{align}
\rho\ge 0,\qquad \operatorname{supp}\rho\subset (-1,1),\qquad \int_{\mathbb{R}}\rho(s)\,ds=1.
\end{align}
For \(\eta>0\), define
\begin{align}
\rho_\eta(s)&:=\frac{1}{\eta}\rho\!\left(\frac{s}{\eta}\right),\\
\alpha&:=\rho_\eta * \chi_U.
\end{align}
For sufficiently small \(\eta>0\), the function \(\alpha\in C^\infty(\mathbb{R})\) satisfies
\begin{align}
0\le \alpha\le 1,
\end{align}
and, because \(K\) lies a positive distance away from \(U^c\), we have \(\alpha=1\) on \(K\). Also, since \(\overline{U}\subset (-\infty,t_0)\), we have \(\alpha(t)=0\) for all \(t\ge t_0\). Hence \eqref{eq:alpha} holds.

Now define
\begin{align}
\mathbf{u}^\dagger(t,\mathbf{x}) := \alpha(t)\,\mathbf{u}^\ast(t,\mathbf{x}).
\end{align}
Since \(\alpha\in C^\infty(\mathbb{R})\) and \(\mathbf{u}^\ast\) is classical, it follows that \(\mathbf{u}^\dagger\) is classical on \([0,T]\times\Omega\).

We verify the three claims.

\medskip
\noindent\textbf{(i) Initial and boundary conditions.}
Since $\alpha(0) = 1$,
\begin{align}
\mathbf{u}^\dagger(0,\mathbf{x})
&=
\alpha(0)\,\mathbf{u}^\ast(0,\mathbf{x})
=
\mathbf{u}^\ast(0,\mathbf{x})
=
\mathbf{g}(\mathbf{x}).
\end{align}
Moreover, on \([0,T]\times\partial\Omega\),
\begin{align}
\mathbf{u}^\dagger(t,\mathbf{x})
=
\alpha(t)\,\mathbf{u}^\ast(t,\mathbf{x})
=
0,
\end{align}
because \(\mathbf{u}^\ast\) satisfies the homogeneous Dirichlet boundary condition.

\medskip
\noindent\textbf{(ii) Triviality for \(t\ge t_0\).}
By \eqref{eq:alpha},
\begin{align}
\mathbf{u}^\dagger(t,\mathbf{x})
=
\alpha(t)\,\mathbf{u}^\ast(t,\mathbf{x})
=
0,
\qquad \forall\, t\ge t_0,\ \mathbf{x}\in\Omega.
\end{align}

\medskip
\noindent\textbf{(iii) Vanishing empirical residual.}
Fix any collocation point \((t_j,\mathbf{x}_j)\in X_{\mathrm{int}}\).

If \(j\in J_-\), then by \eqref{eq:alpha}, \(\alpha\equiv 1\) in a neighborhood of \(t_j\). Hence \(\mathbf{u}^\dagger=\mathbf{u}^\ast\) in a neighborhood of \((t_j,\mathbf{x}_j)\), and therefore
\begin{align}
\mathcal{R}_{\mathrm{int}}[\mathbf{u}^\dagger](t_j,\mathbf{x}_j)
=
\mathcal{R}_{\mathrm{int}}[\mathbf{u}^\ast](t_j,\mathbf{x}_j)
=
0,
\end{align}
since \(\mathbf{u}^\ast\) is a classical solution of the homogeneous PDE.

If \(j\in J_+\), then by \eqref{eq:alpha}, \(\alpha\equiv 0\) in a neighborhood of \(t_j\). Hence \(\mathbf{u}^\dagger\equiv 0\) in a neighborhood of \((t_j,\mathbf{x}_j)\). Since the PDE is homogeneous (\(\mathbf{f}=0\)), the zero function has zero interior residual, and thus
\begin{align}
\mathcal{R}_{\mathrm{int}}[\mathbf{u}^\dagger](t_j,\mathbf{x}_j)=0.
\end{align}

Therefore,
\begin{align}
\mathcal{R}_{\mathrm{int}}[\mathbf{u}^\dagger](t_j,\mathbf{x}_j)=0,
\qquad j=1,\dots,N_{\mathrm{int}},
\end{align}
and so
\begin{align}
\mathcal{L}_{\mathrm{int}}(\mathbf{u}^\dagger;X_{\mathrm{int}})
&=
\frac{1}{N_{\mathrm{int}}}\sum_{j=1}^{N_{\mathrm{int}}}
\bigl|\mathcal{R}_{\mathrm{int}}[\mathbf{u}^\dagger](t_j,\mathbf{x}_j)\bigr|^2
=
0.
\end{align}

This proves the theorem.
\end{proof}

\subsection{Proof of Theorem \ref{thm:unstable}}
\label{app:proof2}

\begin{proof}

We first recall a standard construction of the cutoff \(\alpha_h\).

\medskip
\noindent
\textbf{Step 1: Standard smooth cutoff construction.}
Define
\begin{align}
\phi(s):=
\begin{cases}
e^{-1/s}, & s>0,\\[2mm]
0, & s\le 0.
\end{cases}
\end{align}
Then \(\phi\in C^\infty(\mathbb R)\), \(\phi(s)>0\) for \(s>0\), and all derivatives of \(\phi\) vanish at \(s=0\). Now set
\begin{align}
\eta(s):=
\frac{\phi(1-s)}{\phi(s)+\phi(1-s)}.
\end{align}
Then \(\eta\in C^\infty(\mathbb R)\) and satisfies
\begin{align}
\eta(s)=1 \quad\text{for } s\le 0,
\qquad
\eta(s)=0 \quad\text{for } s\ge 1,
\qquad
0\le \eta(s)\le 1.
\end{align}
Since \(\eta\) is smooth and constant outside \([0,1]\), all of its derivatives are bounded, and moreover
\begin{align}
\eta'(0)=\eta'(1)=0.
\end{align}
For the given \(t_0\) and \(h>0\), define
\begin{align}
\alpha_h(t):=\eta\!\left(\frac{t-t_0}{h}\right).
\end{align}
Then \(\alpha_h\in C^\infty(\mathbb R)\) and
\begin{align}
\alpha_h(t)=1 \quad\text{for } t\le t_0,
\qquad
\alpha_h(t)=0 \quad\text{for } t\ge t_0+h.
\end{align}
By the chain rule,
\begin{align}
\alpha_h^{(k)}(t)=h^{-k}\eta^{(k)}\!\left(\frac{t-t_0}{h}\right).
\end{align}
Recall that the transition layer is defined by
\begin{align}
I_h:=[t_0,t_0+h],
\end{align}
so in particular
\begin{align}
\operatorname{supp}(\alpha_h')\subset I_h,
\qquad
\operatorname{supp}(\alpha_h'')\subset I_h.
\end{align}

\medskip
\noindent
\textbf{Step 2: Expectation as a space-time average.}
Since the fresh collocation points
\((\tilde t_j,\tilde{\mathbf x}_j)\), \(j=1,\dots,M\),
are i.i.d.\ and uniformly distributed on \((0,T)\times\Omega\), for any measurable function \(F\) on \((0,T)\times\Omega\),
\begin{align}
\mathbb E\!\left[\frac1M\sum_{j=1}^M F(\tilde t_j,\tilde{\mathbf x}_j)\right]
=
\frac{1}{T|\Omega|}
\int_0^T \int_\Omega F(t,\mathbf x)\,d\mathbf x\,dt.
\end{align}
Therefore, for any sufficiently regular \(v\),
\begin{align}
\mathbb E\,\mathcal L_{\mathrm{int}}^{\mathrm{new}}(v)
=
\frac{1}{T|\Omega|}
\int_0^T \int_\Omega
\left|\mathcal R_{\mathrm{int}}[v](t,\mathbf x)\right|^2
\,d\mathbf x\,dt.
\end{align}

\medskip
\noindent
\textbf{Step 3: Residual of the spurious solution.}
Since \(\mathbf{u}^\ast\) is a classical solution of \eqref{eq:problem},
\begin{align}
\mathcal R_{\mathrm{int}}[\mathbf{u}^\ast]=0.
\end{align}
Because \(\mathcal D\) is linear and acts only on the spatial variable \(\mathbf x\), while \(\alpha_h\) depends only on \(t\),
\begin{align}
\mathcal D[\alpha_h \mathbf{u}^\ast]=\alpha_h\,\mathcal D[\mathbf{u}^\ast].
\end{align}
Hence
\begin{align}
\mathcal R_{\mathrm{int}}[\mathbf{u}^\dagger]
&=
\partial_t(\alpha_h \mathbf{u}^\ast)+\mathcal D[\alpha_h \mathbf{u}^\ast] \notag\\
&=
\alpha_h' \mathbf{u}^\ast+\alpha_h \partial_t \mathbf{u}^\ast+\alpha_h \mathcal D[\mathbf{u}^\ast] \notag\\
&=
\alpha_h' \mathbf{u}^\ast+\alpha_h\,\mathcal R_{\mathrm{int}}[\mathbf{u}^\ast] \notag\\
&=
\alpha_h' \mathbf{u}^\ast,
\end{align}
so \(\mathcal R_{\mathrm{int}}[\mathbf{u}^\dagger]\) is supported in \(I_h\times\Omega\). Let
\begin{align}
G(t):=\int_\Omega |\mathbf{u}^\ast(t,\mathbf x)|^2\,d\mathbf x.
\end{align}
Since \(\mathbf{u}^\ast\) is classical, \(G\) is continuous on \([0,T]\). Moreover, since \(\mathbf{u}^\ast\) is continuous on the compact set \([0,T]\times\overline\Omega\), there exists
\begin{align}
K:=\|\mathbf{u}^\ast\|_{L^\infty((0,T)\times\Omega)}<\infty,
\end{align}
and hence \(0\le G(t)\le |\Omega|K^2\) for all \(t\in[0,T]\). Using Step 2,
\begin{align}
\mathbb E\,\mathcal L_{\mathrm{int}}^{\mathrm{new}}(\mathbf{u}^\dagger)
=
\frac{1}{T|\Omega|}
\int_{I_h} |\alpha_h'(t)|^2\,G(t)\,dt.
\end{align}
Now
\begin{align}
\int_{I_h} |\alpha_h'(t)|^2\,dt
=
\int_{t_0}^{t_0+h}
h^{-2}\left|\eta'\!\left(\frac{t-t_0}{h}\right)\right|^2 dt
=
h^{-1}\int_0^1 |\eta'(s)|^2\,ds,
\end{align}
and therefore
\begin{align}
\mathbb E\,\mathcal L_{\mathrm{int}}^{\mathrm{new}}(\mathbf{u}^\dagger)
\le
\frac{K^2}{T}
\int_{I_h} |\alpha_h'(t)|^2\,dt
=
\frac{K^2}{T}
\left(\int_0^1 |\eta'(s)|^2\,ds\right)h^{-1},
\end{align}
which proves \(\mathbb E\,\mathcal L_{\mathrm{int}}^{\mathrm{new}}(\mathbf{u}^\dagger)=O(h^{-1})\).

\medskip
\noindent
\textbf{Step 4: Residual after one pseudo-time stepping update.}
By definition,
\begin{align}
\mathbf{u}^{\dagger,+}
=
\mathbf{u}^\dagger-\tau\,\mathcal R_{\mathrm{int}}[\mathbf{u}^\dagger]
=
\alpha_h \mathbf{u}^\ast-\tau \alpha_h' \mathbf{u}^\ast
=
(\alpha_h-\tau\alpha_h')\mathbf{u}^\ast.
\end{align}
Applying the same product rule as above gives
\begin{align}
\mathcal R_{\mathrm{int}}[\mathbf{u}^{\dagger,+}]
&=
\partial_t\!\bigl((\alpha_h-\tau\alpha_h')\mathbf{u}^\ast\bigr)
+\mathcal D\!\bigl((\alpha_h-\tau\alpha_h')\mathbf{u}^\ast\bigr) \notag\\
&=
(\alpha_h'-\tau\alpha_h'')\mathbf{u}^\ast
+(\alpha_h-\tau\alpha_h')\mathcal R_{\mathrm{int}}[\mathbf{u}^\ast] \notag\\
&=
(\alpha_h'-\tau\alpha_h'')\mathbf{u}^\ast.
\end{align}
Hence
\begin{align}
\mathbb E\,\mathcal L_{\mathrm{int}}^{\mathrm{new}}(\mathbf{u}^{\dagger,+})
=
\frac{1}{T|\Omega|}
\int_{I_h}
|\alpha_h'(t)-\tau\alpha_h''(t)|^2\,G(t)\,dt
\le
\frac{K^2}{T}
\int_{I_h}
|\alpha_h'(t)-\tau\alpha_h''(t)|^2\,dt.
\end{align}
Expanding the square,
\begin{align}
\int_{I_h} |\alpha_h'-\tau\alpha_h''|^2\,dt
=
\int_{I_h} |\alpha_h'|^2\,dt
-
2\tau \int_{I_h} \alpha_h'\alpha_h''\,dt
+
\tau^2 \int_{I_h} |\alpha_h''|^2\,dt.
\end{align}
The cross term vanishes:
\begin{align}
\int_{I_h} \alpha_h'(t)\alpha_h''(t)\,dt
=
\frac12\Bigl[(\alpha_h'(t))^2\Bigr]_{t_0}^{t_0+h}
=0,
\end{align}
because \(\alpha_h'(t_0)=\alpha_h'(t_0+h)=0\). Therefore
\begin{align}
\int_{I_h} |\alpha_h'-\tau\alpha_h''|^2\,dt
=
\int_{I_h} |\alpha_h'|^2\,dt
+
\tau^2 \int_{I_h} |\alpha_h''|^2\,dt.
\end{align}
Also,
\begin{align}
\int_{I_h} |\alpha_h''(t)|^2\,dt
=
\int_{t_0}^{t_0+h}
h^{-4}\left|\eta''\!\left(\frac{t-t_0}{h}\right)\right|^2 dt
=
h^{-3}\int_0^1 |\eta''(s)|^2\,ds.
\end{align}
Hence
\begin{align}
\mathbb E\,\mathcal L_{\mathrm{int}}^{\mathrm{new}}(\mathbf{u}^{\dagger,+})
\le
\frac{K^2}{T}
\left(
\int_0^1 |\eta'(s)|^2\,ds\cdot h^{-1}
+
\tau^2 \int_0^1 |\eta''(s)|^2\,ds\cdot h^{-3}
\right),
\end{align}
which implies
\begin{align}
\mathbb E\,\mathcal L_{\mathrm{int}}^{\mathrm{new}}(\mathbf{u}^{\dagger,+})
=
O\!\left(h^{-1}+\tau^2 h^{-3}\right).
\end{align}

\medskip
\noindent
\textbf{Step 5: Sharpness under the nondegeneracy assumption.}
Assume now that \(\mathbf{u}^\ast(t_0,\cdot)\not\equiv 0\), so that
\begin{align}
G(t_0)=\int_\Omega |\mathbf{u}^\ast(t_0,\mathbf x)|^2\,d\mathbf x>0.
\end{align}
Since \(G\) is continuous, there exist constants \(c_0>0\) and \(h_0>0\) such that
\begin{align}
G(t)\ge c_0
\qquad\text{for all } t\in[t_0,t_0+h],\quad 0<h\le h_0.
\end{align}
Using this lower bound,
\begin{align}
\mathbb E\,\mathcal L_{\mathrm{int}}^{\mathrm{new}}(\mathbf{u}^\dagger)
\ge
\frac{c_0}{T|\Omega|}
\int_{I_h} |\alpha_h'(t)|^2\,dt
=
\frac{c_0}{T|\Omega|}
\left(\int_0^1 |\eta'(s)|^2\,ds\right) h^{-1}.
\end{align}
Combined with the upper bound from Step 3, this yields
\begin{align}
\mathbb E\,\mathcal L_{\mathrm{int}}^{\mathrm{new}}(\mathbf{u}^\dagger)\asymp h^{-1}.
\end{align}
Similarly,
\begin{align}
\mathbb E\,\mathcal L_{\mathrm{int}}^{\mathrm{new}}(\mathbf{u}^{\dagger,+})
\ge
\frac{c_0}{T|\Omega|}
\int_{I_h} |\alpha_h'(t)-\tau\alpha_h''(t)|^2\,dt
=
\frac{c_0}{T|\Omega|}
\left[
\left(\int_0^1 |\eta'(s)|^2\,ds\right) h^{-1}
+
\tau^2
\left(\int_0^1 |\eta''(s)|^2\,ds\right) h^{-3}
\right],
\end{align}
where we used the identity from Step 4. Combining with the upper bound from Step 4, we conclude
\begin{align}
\mathbb E\,\mathcal L_{\mathrm{int}}^{\mathrm{new}}(\mathbf{u}^\dagger)\asymp h^{-1},
\qquad
\mathbb E\,\mathcal L_{\mathrm{int}}^{\mathrm{new}}(\mathbf{u}^{\dagger,+})
\asymp h^{-1}+\tau^2 h^{-3}.
\end{align}
This completes the proof.
\end{proof}

\section{Experimental Details}
\label{appendix:experiments}
In this section, we summarize the experimental setup used throughout the paper. We first describe the neural architectures and boundary-condition treatments considered in our study, followed by the optimizers and weighting strategies used during training. We then present the benchmark-specific hyperparameter settings and the numerical configurations used to generate the reference solutions for all PDE problems, and finally provide additional visualizations to complement the quantitative comparisons in the main text. 

\subsection{Architectures}
\label{appendix:arch}

This section presents the network architectures considered in our work.

\paragraph{PirateNet.} We also consider PirateNet \cite{wang2024piratenets}, which was proposed to improve the stability and efficiency of training deep PINNs. The architecture first maps the input coordinate \(\mathbf{x}\) into a higher-dimensional feature space using random Fourier features \cite{tancik2020fourier}:
\begin{align*}
    \Phi(\mathbf{x})=
    \begin{bmatrix}
        \cos(\mathbf{B}\mathbf{x}) \\
        \sin(\mathbf{B}\mathbf{x})
    \end{bmatrix},
\end{align*}
where \(\mathbf{B}\in\mathbb{R}^{m\times d}\) has entries sampled i.i.d. from \(\mathcal{N}(0,s^2)\) with user-specified \(s>0\). This embedding helps mitigate spectral bias by enriching the frequency content of the corresponding neural tangent kernel, thereby improving the approximation of high-frequency and multiscale solution components \cite{wang2021eigenvector}.

The embedded input is then passed through two dense layers that serve as gates:
\begin{align*}
    \mathbf{U} &= \sigma(\mathbf{W}_1 \Phi(\mathbf{x}) + \mathbf{b}_1), \qquad
    \mathbf{V} = \sigma(\mathbf{W}_2 \Phi(\mathbf{x}) + \mathbf{b}_2),
\end{align*}
where \(\sigma\) is a pointwise activation function. This gating mechanism mirrors that used in the modified MLP.

Let \(\mathbf{x}^{(1)}=\Phi(\mathbf{x})\), and let \(\mathbf{x}^{(l)}\) denote the input to the \(l\)-th residual block for \(1\le l\le L\). Each block is defined as
\begin{align}
    \mathbf{f}^{(l)}  &= \sigma\big(\mathbf{W}_1^{(l)} \mathbf{x}^{(l)} + \mathbf{b}_1^{(l)}\big), \label{eq:step1} \\
    \mathbf{z}_1^{(l)} &= \mathbf{f}^{(l)} \odot \mathbf{U} + \bigl(1-\mathbf{f}^{(l)}\bigr)\odot \mathbf{V}, \label{eq:gate1} \\
    \mathbf{g}^{(l)}  &= \sigma\big(\mathbf{W}_2^{(l)} \mathbf{z}_1^{(l)} + \mathbf{b}_2^{(l)}\big), \\
    \mathbf{z}_2^{(l)} &= \mathbf{g}^{(l)} \odot \mathbf{U} + \bigl(1-\mathbf{g}^{(l)}\bigr)\odot \mathbf{V}, \label{eq:gate2} \\
    \mathbf{h}^{(l)}  &= \sigma\big(\mathbf{W}_3^{(l)} \mathbf{z}_2^{(l)} + \mathbf{b}_3^{(l)}\big), \\
    \mathbf{x}^{(l+1)} &= \alpha^{(l)} \mathbf{h}^{(l)} + \bigl(1-\alpha^{(l)}\bigr)\mathbf{x}^{(l)}.
    \label{eq:skip}
\end{align}
Thus, each block contains three dense layers, two gating operations, and an adaptive residual connection. The trainable coefficient \(\alpha^{(l)}\) controls the degree of nonlinearity in the block: \(\alpha^{(l)}=0\) corresponds to an identity map, whereas \(\alpha^{(l)}=1\) yields a fully nonlinear transformation.

For a PirateNet with \(L\) residual blocks, the final output is
\begin{align}
    \mathbf{u}_{\theta} = \mathbf{W}^{(L+1)} \mathbf{x}^{(L)}.
\end{align}
In our implementation, we initialize \(\alpha^{(l)}=0\), so the network starts as a shallow model whose output is a linear combination of first-layer embeddings. This initialization alleviates optimization difficulties in deep PINN models, while allowing the effective depth to increase adaptively during training as the \(\alpha^{(l)}\) values evolve.

\paragraph{Exact imposition of periodic boundary conditions.} To impose periodic boundary conditions exactly, we follow the construction of \cite{dong2021method}, which enforces periodicity as a hard constraint and can improve both optimization and accuracy. Consider a one-dimensional periodic function with period \(P\), satisfying
\begin{align}
    u^{(l)}(a)=u^{(l)}(a+P), \qquad l=0,1,2,\dots.
    \label{eq:periodic_constraint}
\end{align}
We introduce the Fourier embedding
\begin{align}
    \mathbf{v}(x)=\bigl(\cos(\omega x),\,\sin(\omega x)\bigr),
    \label{eq:1D_Fourier}
\end{align}
where \(\omega=\frac{2\pi}{P}\). Any neural network of the form \(u_{\theta}(\mathbf{v}(x))\) then satisfies the periodic boundary condition by construction.

This idea extends directly to higher dimensions. For a two-dimensional domain with periods \(P_x\) and \(P_y\), the periodicity conditions are
\begin{align}
    \frac{\partial^l}{\partial x^l}u(a,y) &= \frac{\partial^l}{\partial x^l}u(a+P_x,y),
    \qquad y\in [b,b+P_y], \\
    \frac{\partial^l}{\partial y^l}u(x,b) &= \frac{\partial^l}{\partial y^l}u(x,b+P_y),
    \qquad x\in [a,a+P_x],
\end{align}
for \(l=0,1,2,\dots\). These constraints can be encoded through the embedding
\begin{align}
    \mathbf{v}(x,y)=
    \begin{bmatrix}
        \cos(\omega_x x), \sin(\omega_x x), \cos(\omega_y y), \sin(\omega_y y)
    \end{bmatrix},
\end{align}
where \(\omega_x=\frac{2\pi}{P_x}\) and \(\omega_y=\frac{2\pi}{P_y}\).

For time-dependent problems, we simply concatenate the time variable with the spatial embedding, for example
\(u_{\theta}([t,\mathbf{v}(x)])\) in one spatial dimension or
\(u_{\theta}([t,\mathbf{v}(x,y)])\) in two spatial dimensions.

\subsection{Optimizers}
\label{appendix:optim}

We consider the following optimizers in our experiments:
\begin{itemize}
    \item \textbf{Adam.} We use Adam \cite{kingma2014adam} with the standard hyperparameter setting \(\beta_1 = 0.9\) and \(\beta_2 = 0.999\). Owing to its robustness and computational efficiency, Adam has become a standard choice for training PINNs.
    
\item \textbf{SOAP \cite{vyas2024soap}.} SOAP is a recently proposed optimizer that integrates Shampoo-style second-order preconditioning with Adam-style adaptive updates in a rotated eigenbasis. In our experiments, we use \(\beta_1 = 0.9\) and \(\beta_2 = 0.999\), which provide the best overall performance across our benchmark problems. We update the covariance matrices every 2 iterations.
\end{itemize}

\paragraph{SOAP update.} For completeness, we briefly outline the SOAP update rule used during training. At iteration $k$, 
for a parameter matrix \(\mathbf{W}^k\) with gradient \(\mathbf{G}^k\), SOAP maintains Shampoo-style preconditioner statistics and periodically computes the corresponding left and right eigenspaces, denoted by \(\mathbf{Q}_L^k\) and \(\mathbf{Q}_R^k\), respectively. At iteration \(k\), the gradient is first projected into this rotated basis:
\[
\widetilde{\mathbf{G}}^k = (\mathbf{Q}_L^k)^\top \mathbf{G}^k \mathbf{Q}_R^k .
\]
SOAP then applies an Adam-style update in the rotated coordinates:
\[
\widetilde{\mathbf{m}}^k
=
\beta_1 \widetilde{\mathbf{m}}^{k-1}
+
(1-\beta_1)\widetilde{\mathbf{G}}^k,
\qquad
\widetilde{\mathbf{v}}^k
=
\beta_2 \widetilde{\mathbf{v}}^{k-1}
+
(1-\beta_2)\bigl(\widetilde{\mathbf{G}}^k \odot \widetilde{\mathbf{G}}^k\bigr),
\]
from which the preconditioned update is formed as
\[
\widetilde{\mathbf{U}}^k
=
\frac{\widetilde{\mathbf{m}}^k}{\sqrt{\widetilde{\mathbf{v}}^k}+\epsilon}.
\]
Finally, the update is mapped back to the original parameter space:
\[
\mathbf{U}^k
=
\mathbf{Q}_L^k \widetilde{\mathbf{U}}^k (\mathbf{Q}_R^k)^\top,
\qquad
\mathbf{W}^{k+1}
=
\mathbf{W}^k - \eta \mathbf{U}^k .
\]
In this sense, SOAP may be viewed as performing Adam in the eigenspace induced by Shampoo, while updating this basis only periodically according to a prescribed preconditioning frequency.

\subsection{Weighting schemes}
\label{appendix:weighitng}

\paragraph{Causal training.}
Recent work \cite{wang2024respecting} has shown that, when solving time-dependent PDEs, PINNs can violate temporal causality by reducing the residual at later times before accurately resolving the solution at earlier times. To alleviate this issue, we adopt a causality-aware training strategy that reweights the residual loss across time.

Specifically, we partition the temporal domain into \(M\) equal subintervals and denote by \(\mathcal{L}_{\text{int}}^i(\mathbf{\theta})\) the PDE residual loss over the \(i\)-th subinterval. The weighted residual loss is then defined as
\begin{align}
     \mathcal{L}_{\text{int}}
    =
    \frac{1}{M}\sum_{i=1}^{M} w_i  \mathcal{L}_{\text{int}}^i(\mathbf{\theta}).
\end{align}
The temporal weights are chosen as
\begin{align}
    \label{eq:temporal_update}
    w_i
    =
    \exp\left(
    -\epsilon \sum_{k=1}^{i-1}  \mathcal{L}_{\text{int}}^k(\mathbf{\theta})
    \right),
    \qquad i=2,3,\dots,M,
\end{align}
with \(w_1=1\). Substituting these weights into the residual loss yields
\begin{align}
     \mathcal{L}_{\text{int}}
    =
    \frac{1}{M}\sum_{i=1}^{M}
    \exp\left(
    -\epsilon \sum_{k=1}^{i-1}  \mathcal{L}_{\text{int}}^k(\mathbf{\theta})
    \right)
     \mathcal{L}_{\text{int}}^i(\mathbf{\theta}).
\end{align}
This construction assigns each time segment a weight that decays exponentially with the accumulated residual loss over all preceding segments. Consequently, the residual loss in the \(i\)-th segment is minimized only after the earlier residual losses \(\{ \mathcal{L}_{\text{int}}^k(\mathbf{\theta}) \}_{k=1}^{i-1}\) have been sufficiently reduced. In this way, the optimization encourages the PINN model to gradually learn the PDE solution following  the temporal ordering of the underlying dynamics.


\paragraph{Learning rate annealing.} 

Another major challenge in training PINNs is balancing the different loss components, which often have very different scales and can therefore produce highly unbalanced gradients. This imbalance may cause optimization to favor certain terms while neglecting others, ultimately leading to training failure.

To address this issue, we employ the self-adaptive learning rate annealing strategy proposed in \cite{wang2021understanding}, which automatically balances the weighted loss
\begin{align}
    \mathcal{L}(\mathbf{\theta}) =  \lambda_{ic} \mathcal{L}_{ic}(\mathbf{\theta}) + \lambda_{bc} \mathcal{L}_{bc}(\mathbf{\theta}) +  \lambda_\text{int}  \mathcal{L}_{\text{int}}, 
\end{align}
The global weights are dynamically computed to equalize the gradient norms of each loss component:
  \begin{align}
  \label{eq: lambda_ic_update}
     \hat{\lambda}_{ic} &=  \frac{  \|\nabla_{\theta}  \mathcal{L}_{ic}(\theta)\| +  \|\nabla_{\theta}  \mathcal{L}_{bc}(\theta)\| +  \|\nabla_{\theta}   \mathcal{L}_{\text{int}}(\theta)\|   } {\|\nabla_{\theta}  \mathcal{L}_{ic}(\theta)\|}, \\
     \label{eq: lambda_bc_update}
      \hat{\lambda}_{bc} &= \frac{  \|\nabla_{\theta}  \mathcal{L}_{ic}(\theta)\| +  \|\nabla_{\theta}  \mathcal{L}_{bc}(\theta)\| +  \|\nabla_{\theta}   \mathcal{L}_{\text{int}}(\theta)\|   } {\|\nabla_{\theta}  \mathcal{L}_{bc}(\theta)\|},  \\
      \label{eq: lambda_r_update}
       \hat{\lambda}_{\text{int}} &=  \frac{  \|\nabla_{\theta}  \mathcal{L}_{ic}(\theta)\| +  \|\nabla_{\theta}  \mathcal{L}_{bc}(\theta)\| +  \|\nabla_{\theta}   \mathcal{L}_{\text{int}}(\theta)\|   } {\|\nabla_{\theta}   \mathcal{L}_{\text{int}}(\theta)\|}, 
  \end{align}
where $\|\cdot\|$ denotes the $L^2$ norm.
Then we obtain
\begin{align}
  \| \hat{\lambda}_{ic} \nabla_\theta \mathcal{L}_{ic} (\theta) \| =   \| \hat{\lambda}_{bc} \nabla_\theta \mathcal{L}_{bc} (\theta) \| = \| \hat{\lambda}_{\text{int}} \nabla_\theta \mathcal{L}_{\text{int}} (\theta) \| = \| \nabla_\theta \mathcal{L}_{ic} (\theta) \| +  \| \nabla_\theta \mathcal{L}_{bc} (\theta) \|  +  \| \nabla_\theta  \mathcal{L}_{\text{int}} (\theta) \|. 
\end{align}
Thus, the weighted losses contribute gradients of comparable magnitude, which helps prevent the optimization from being dominated by any single term. In practice, the weights are updated using running averages of their previous values, which improves stability during stochastic gradient descent. Moreover, these updates are performed only at user-specified intervals, typically every 100--1000 iterations, so the additional computational cost is minimal.

Finally, we emphasize that when loss weighting is used together with pseudo-time stepping, the pseudo-time step should be updated \textbf{before} the loss weights are updated. This ordering is important because the pseudo-time-step update depends on the magnitude of the most recent residual-loss gradient. Reversing the order can noticeably degrade training performance.

\subsection{Hyper-parameters}
\label{appendix:training}
Table~\ref{tab:hyper-parameters} summarizes the hyperparameter settings used for all benchmark PDEs. It includes the backbone architecture, learning rate schedule, training setup, and weighting strategies used in our experiments. Most settings are shared across problems, with only a few benchmark-specific choices such as activation function, batch size, and causal-weighting parameters.

\begin{table}[h]
\renewcommand{\arraystretch}{1.2}
\centering
\caption{{\em Hyperparameter configurations for benchmark PDEs.} Hyperparameter settings used to reproduce our experimental results. The backbone architecture is PirateNet, where Depth indicates the number of adaptive residual blocks, and Width denotes the number of neurons per hidden layer. RFF represents Random Fourier Features.}
\label{tab:hyper-parameters}
\vspace{1mm}
\resizebox{\textwidth}{!}{
\begin{tabular}{l cccccccccc}
\toprule
\textbf{Parameter} & \textbf{Inviscid Burgers} & \textbf{AC} & \textbf{KdV} & \textbf{KS} & \textbf{GS} & \textbf{GL} & \textbf{LDC} & \textbf{BFS} & \textbf{KF} & \textbf{RT} \\
\midrule
\textbf{Architecture} & \multicolumn{10}{c}{PirateNet}  \\
\midrule
Depth & \multicolumn{10}{c}{3}  \\
Width & \multicolumn{10}{c}{256} \\
Activation & Tanh & Tanh & Tanh & Tanh & Swish & Swish & Tanh & Swish & Tanh & Swish \\
RFF scale & 2.0 & 2.0 & 2.0 & 2.0 & 2.0 & 2.0 & 10.0 & 2.0 & 2.0 & 2.0 \\
\midrule
\textbf{Learning rate schedule} & & & & & & & & & & \\
\midrule
Initial learning rate & \multicolumn{10}{c}{$10^{-3}$} \\
Decay rate & \multicolumn{10}{c}{0.9} \\
Decay steps & \multicolumn{10}{c}{$2 \times 10^3 $}  \\
Warmup steps & \multicolumn{10}{c}{$2 \times 10^{3}$} \\
\midrule
\textbf{Training} & & & & & & & & & & \\
\midrule
Iterations  & \multicolumn{10}{c}{$10^5$}  \\
Batch size & 4,096 & 4,096& 4,096 & 4,096& 8,192 & 8,192 & 4,096& 4,096& 8,192& 8,192\\
\midrule
\textbf{Weighting Scheme} & \multicolumn{10}{c}{Grad Norm} \\
\midrule
\textbf{Causal weighting} & & & & & & & & & & \\
\midrule
Tolerance & 1.0 & 1.0 & 1.0 & 1.0 & 1.0 & 1.0 & N / A & N / A & 1.0 & 1.0 \\
\# Chunks & 16 & 16 & 16 & 16 & 16 & 16 & N / A & N / A & 16 & 16 \\
\bottomrule
\end{tabular}
}
\end{table}

\subsection{Benchmarks}
\label{appendix:benchmarks}

Compared with the benchmark settings commonly used in the current PINN literature, where long-time dynamics are often handled through time-marching or temporal decomposition, our problem setup adopts slightly shorter final times for these physical systems. This choice is intentional, since our goal is not to rely on time-marching strategies. Although such strategies can improve long-time prediction accuracy, they also introduce substantial computational overhead and can obscure the intrinsic capability of the underlying method. Instead, we aim to assess how far in time PINNs, equipped with modern training techniques, can remain accurate without temporal decomposition. The parameter settings and numerical configurations used to generate the reference solutions for all benchmarks are summarized in Table~\ref{tab: data_gen}.

\begin{table}[h]
\renewcommand{\arraystretch}{1.2}
\centering
\caption{Parameter settings, numerical configurations, and corresponding software packages used to generate the reference solutions for the PDE benchmarks; see Chebfun~\cite{driscoll2014chebfun}, PyClaw~\cite{pyclaw-sisc}, SU2~\cite{Economon2016SU2}, and JAX-Fluids~\cite{Bezgin2023}.}
\label{tab: data_gen}
\vspace{1mm}
\resizebox{\textwidth}{!}{
\begin{tabular}{l|l c c}
\toprule
\textbf{PDE} & Parameter & Package & Resolution  \\
\midrule
\textbf{AC} & $\epsilon = 10^{-4}, a=5$ & Chebfun  & $200 \times 512$ \\
\textbf{Inviscid Burgers} & $-$ & PyClaw & $200 \times 512$\\
\textbf{KdV} & $\eta=1, \mu=0.022$ & Chebfun & $200 \times 512$\\
\textbf{KS} & $\alpha = 100/ 16, \beta=100 / 16^2, \gamma=100 / 16^4$ & Chebfun & $250 \times 512$\\
\textbf{GS} & $\epsilon_1=0.2, \epsilon_2=0.1, b_1=40, b_2=100, c_1=c_2=1{,}000$ & Chebfun & $100 \times 200 \times 200$\\
\textbf{GL} & $\epsilon=0.004, \mu=10, \gamma=10 + 15i$ & Chebfun & $100 \times 200 \times 200$\\
\textbf{LDC} & $\text{Re}{=}5{\times}10^3$ & JAX-Fluids   & $128 \times 128$\\
\textbf{BFS} & $\text{Re}{=}8{\times}10^2$ & SU2  & $485 \times 65$\\
\textbf{KF} & $\text{Re}{=}10^4$ & IncompressibleNavierStokes & $50 \times 512 \times 512$\\
\textbf{RT} & $\text{Ra}{=}10^6, \text{Pr}=0.71$ & IncompressibleNavierStokes & $40 \times 100 \times 200$\\
\bottomrule
\end{tabular}
}
\end{table}

\paragraph{Allen--Cahn equation.}
The Allen--Cahn equation is a prototypical reaction--diffusion model arising in phase-field descriptions of interface motion and phase separation. Because it involves both diffusive smoothing and nonlinear bistable dynamics, it often develops sharp transition layers and is therefore widely used as a benchmark for PINNs.

We consider the one-dimensional Allen--Cahn equation
\begin{align}
    u_t - \epsilon u_{xx} + \alpha \left(u^3 - u\right) = 0,
    \qquad (t,x)\in [0,T]\times \Omega,
\end{align}
where $\Omega=[-1,1]$. In the standard benchmark setting, we take
\begin{align}
    \epsilon = 10^{-4}, \qquad \alpha = 5,
\end{align}
with the initial condition
\begin{align}
    u(0,x)=x^2\cos(\pi x),
\end{align}
and periodic boundary conditions
\begin{align}
    u(t,-1)=u(t,1), \qquad u_x(t,-1)=u_x(t,1).
\end{align}
We aim to train PINNs to solve this problem accurately up to \(T=1\).

\paragraph{Korteweg--De Vries equation.}
The Korteweg--De Vries (KdV) equation is a prototypical nonlinear dispersive PDE arising in the study of shallow-water waves. It captures the interplay between nonlinear wave steepening and dispersive spreading, and is widely known for its solitary-wave behavior. As such, it provides a useful benchmark for evaluating the ability of PINNs to learn nonlinear wave dynamics.

We consider the one-dimensional KdV equation
\begin{align}
    u_t + \eta u u_x + \mu^2 u_{xxx} = 0,
    \qquad (t,x)\in [0,T]\times \Omega,
\end{align}
where \(\Omega=[-1,1]\), with initial condition
\begin{align}
    u(0,x)=\cos(\pi x),
\end{align}
and periodic boundary condition
\begin{align}
    u(t,-1)=u(t,1).
\end{align}
Following the classical setting, we take \(\eta=1\) and \(\mu=0.022\) \cite{zabusky1965interaction}. Under this evolution, the initial wave develops into a sequence of solitary-type waves. We aim to train PINNs to solve this problem accurately up to \(T=1\).

\paragraph{Inviscid Burgers' equation.}
The inviscid Burgers equation is one of the simplest nonlinear hyperbolic PDEs and serves as a classical prototype for nonlinear wave propagation and shock formation. Despite its simple form, the nonlinear convection term causes initially smooth solutions to steepen over time and eventually form an \emph{exact discontinuity} in finite time. This makes the problem substantially more challenging than the viscous Burgers equation more commonly studied in the PINN literature, where the viscosity term smooths the solution and prevents true shock discontinuities. 

In this work, we consider the one-dimensional inviscid Burgers equation
\begin{align}
    u_t + u u_x = 0,
    \qquad (t,x)\in [0,T]\times \Omega,
\end{align}
with \(\Omega=[-1,1]\). Following the standard benchmark setting, we impose the initial condition
\begin{align}
    u(0,x) = -\sin(\pi x),
\end{align}
together with periodic boundary conditions
\begin{align}
    u(t,-1)=u(t,1).
\end{align}
Under this evolution, the solution steepens and develops a shock, creating a challenging test case for PINN training. Our goal is to train PINNs to accurately solve this problem up to the final time \(T=1.5\).

\paragraph{Kuramoto-Sivashinsky equation.} 
The Kuramoto--Sivashinsky (KS) equation is a classical nonlinear PDE arising in pattern-forming dissipative systems such as flame fronts, thin film flows, and plasma instabilities. Its dynamics reflect a balance among nonlinear advection, destabilization, and higher-order dissipation, leading to complex multiscale spatiotemporal behavior even in one dimension. This makes it a challenging benchmark for PINNs.

The one-dimensional equation takes the form:
\begin{align*}
    &u_t+\alpha u u_x+\beta u_{x x}+\gamma u_{x x x x}=0, \quad t \in[0,T], x \in[0,2 \pi], \\
    & u(0, x)=u_0(x),
\end{align*}
where $u$ represents the height of a thin film or flame front. This equation arises in various physical contexts, including flame front propagation, thin film flows, and plasma instabilities.

In this example, we take  $T=0.35$, $\alpha=100 / 16, \beta=100 / 16^2, \gamma=100 / 16^4$ and $u_0(x)=\cos (x)(1+\sin (x))$.

\paragraph{Grey-Scott equation.}  The Gray--Scott equation is a classical reaction--diffusion system that models the interaction of two chemical species undergoing diffusion and nonlinear reactions. It is well known for producing rich pattern formation phenomena, including spots, stripes, and self-organized structures, and is therefore a standard benchmark for learning complex spatiotemporal dynamics.

The system is described by the following coupled PDEs:
\begin{align*}
    u_t &=\epsilon_1 \Delta u + b_1(1-u) - c_1 u v^2,  \quad t \in (0, 0.8)\,, \ (x, y) \in (-1, 1)^2\,, \\
    v_t &=\epsilon_2 \Delta v - b_2 v + c_2 u v^2\,, \quad t \in (0, 0.8)\,, \ (x, y) \in (-1, 1)^2\,,
\end{align*}
With periodic boundary conditions, the initial conditions are:
\begin{align*}
    &u_0(x, y) = 1 - \exp(-10 ((x + 0.05)^2 + (y + 0.02)^2))\,, \\
    &v_0(x, y) = 1 - \exp(-10 ((x - 0.05)^2 + (y - 0.02)^2))\,.
\end{align*}
where $u$ and $v$ represent activator and inhibitor concentrations respectively, $\varepsilon_1$ and $\varepsilon_2$ are diffusion coefficients, and $(b_1, b_2, c_1, c_2)$ control reaction kinetics. This system generates diverse spatial patterns including spots and stripes.

We set parameters $\epsilon_1=0.2$, $\epsilon_2=0.1$, $b_1=40$, $b_2=100$, and $c_1=c_2=1{,}000$, which generates characteristic pattern formations. Due to the similar behavior of $u$ and $v$, we report only the relative $L^2$ error of $u$ in Table \ref{tab:sota}.

The corresponding PDE residuals are defined as
\begin{align}
    \mathcal{R}_u[\mathbf{u}_\theta](t,x,y)
    &:=
    \partial_t u_\theta
    - \epsilon_1 \left(\partial_{xx}u_\theta + \partial_{yy}u_\theta\right)
    - b_1(1-u_\theta)
    + c_1 u_\theta v_\theta^2,\\
    \mathcal{R}_v[\mathbf{u}_\theta](t,x,y)
    &:=
    \partial_t v_\theta
    - \epsilon_2 \left(\partial_{xx}v_\theta + \partial_{yy}v_\theta\right)
    + b_2 v_\theta
    - c_2 u_\theta v_\theta^2 .
\end{align}
The PDE residual loss relaxed by pseudo-time stepping is given by
\begin{align}
    \mathcal{L}_{ru}^{\mathrm{pts}}(\theta^k;\theta^{k-1},X_{\mathrm{int}})
    &:=
    \frac{1}{N_{\mathrm{int}}}
    \sum_{i=1}^{N_{\mathrm{int}}}
    \left|
    \frac{
    u_{\theta^k}(t^i_{\mathrm{int}},x^i_{\mathrm{int}},y^i_{\mathrm{int}})
    -
    u_{\theta^{k-1}}(t^i_{\mathrm{int}},x^i_{\mathrm{int}},y^i_{\mathrm{int}})
    }{\tau_u}
    +
    \mathcal{R}_u[\mathbf{u}_{\theta^k}](t^i_{\mathrm{int}},x^i_{\mathrm{int}},y^i_{\mathrm{int}})
    \right|^2,\\
    \mathcal{L}_{rv}^{\mathrm{pts}}(\theta^k;\theta^{k-1},X_{\mathrm{int}})
    &:=
    \frac{1}{N_{\mathrm{int}}}
    \sum_{i=1}^{N_{\mathrm{int}}}
    \left|
    \frac{
    v_{\theta^k}(t^i_{\mathrm{int}},x^i_{\mathrm{int}},y^i_{\mathrm{int}})
    -
    v_{\theta^{k-1}}(t^i_{\mathrm{int}},x^i_{\mathrm{int}},y^i_{\mathrm{int}})
    }{\tau_v}
    +
    \mathcal{R}_v[\mathbf{u}_{\theta^k}](t^i_{\mathrm{int}},x^i_{\mathrm{int}},y^i_{\mathrm{int}})
    \right|^2.
\end{align}

\paragraph{Ginzburg-Landau equation.}  The Ginzburg--Landau equation is a classical model for pattern formation and weakly nonlinear dynamics near instability onset. It arises in many physical settings, including superconductivity, nonlinear waves, and oscillatory media, and serves as a standard benchmark for learning complex spatiotemporal dynamics.

The complex Ginzburg-Landau equation in 2D takes the form
\begin{align*}
    \frac{\partial A}{\partial t}= \epsilon \Delta A + \mu A - \gamma  A|A|^2\,, \quad t \in (0, 0.4)\,,
    \ (x, y) \in (-1, 1)^2\,,
\end{align*}
with periodic boundary conditions, an initial condition
\begin{align*}
    A_0(x, y) = (10y + 10 i  x) \exp\left(-0.01 (2500 x^2 + 2500 y^2)\right)\,,
\end{align*}
where  $A$ is the complex amplitude representing the envelope of oscillations, $\epsilon$ represents the diffusion coefficient, $\mu$ is the linear growth rate, and $\gamma$ controls the nonlinear saturation. For this example, we set $\epsilon=0.004$, $\mu = 10$ and $\gamma= 10 + 15i$.

By denoting $A = u + i v$, we can decompose the equation into real and imaginary components, resulting in the following system of PDEs,
\begin{align*}
     \frac{\partial u}{\partial t} &= \epsilon \Delta u + \mu (u - (u-1.5 v) (u^2 + v^2))\,, \\
      \frac{\partial v}{\partial t} &= \epsilon \Delta v + \mu (v - (v + 1.5 u) (u^2 + v^2))\,.
\end{align*}

Given the coupled dynamics of $u$ and $v$, we present the relative $L^2$ error of $u$ in Table \ref{tab:sota}. The PDE residual loss relaxed by pseudo-time stepping is defined analogously to that for the Gray--Scott equation.

\paragraph{Lid-driven Cavity.} We study the incompressible Navier-Stokes equations in non-dimensional form for a two-dimensional domain:
\begin{align*}
    \mathbf{u} \cdot \nabla \mathbf{u}+\nabla p-\frac{1}{R e} \Delta \mathbf{u}&=0\,, \quad  (x,y) \in (0,1)^2\,, \\
    \nabla \cdot \mathbf{u}&=0\,, \quad  (x,y) \in (0,1)^2\,,
\end{align*}
where $\mathbf{u} = (u,v)$ represents the steady-state velocity field, $p$ is the pressure field, and $Re$ is the Reynolds number which characterizes the ratio of inertial to viscous forces.  This system models the equilibrium state of the flow, which is driven by the top boundary moving at a constant velocity while the other walls are stationary, leading to the formation of characteristic vortical structures whose complexity increases with the Reynolds number.

To ensure continuity at the corner boundaries, we implement a smoothed top-lid boundary condition:
\begin{align}
& u(x, y)=1-\frac{\cosh \left(C_0(x-0.5)\right)}{\cosh \left(0.5 C_0\right)}\,, \quad v(x, y)=0\,,
\end{align}
where $x \in [0, 1], y=1, C_0 = 50$. For the other three walls, we enforce a no-slip boundary condition. Our goal is to obtain the velocity and pressure field corresponding to a Reynolds number of $5{,}000$.  The accuracy of our method is evaluated using the velocity magnitude $\sqrt{u^2 + v^2}$, with results presented in Table \ref{tab:sota}.

\paragraph{Backward facing step flow.}
Backward-facing step flow is a standard benchmark for separated internal flows and is widely used to study the formation of recirculation and reattachment caused by a sudden expansion in channel geometry. Despite its simple configuration, it exhibits nontrivial flow structures, making it a useful test for whether PINNs can capture localized gradients and separation behavior in incompressible viscous flows. 

We consider the steady two-dimensional laminar incompressible flow over a backward-facing step. Let $\mathbf{u}=(u,v)$ denote the velocity field and $p$ the pressure. The governing equations are the steady incompressible Navier--Stokes equations
\begin{align}
\nabla \cdot \mathbf{u} &= 0,\\
(\mathbf{u}\cdot\nabla)\mathbf{u} + \nabla p - \nu \Delta \mathbf{u} &= 0,
\end{align}
in the fluid domain $\Omega$ downstream of the step. The channel has downstream length $15H$, where $H=1$ is the channel height after expansion. The left boundary is divided into two parts: the upper half is the inlet, and the lower half is the vertical step face. At the inlet, we prescribe a fully developed parabolic profile in the positive $x$-direction,
\begin{align}
u(0,y) = 24\,y(0.5-y), \qquad v(0,y)=0, \qquad 0\le y \le 0.5.
\end{align}
The fluid density is fixed to $\rho=1$, and the viscosity is chosen as $\nu=0.0025$ for $\mathrm{Re}=400$ and $\nu=0.00125$ for $\mathrm{Re}=800$. No-slip boundary conditions are imposed on the top and bottom walls as well as on the step face,
\begin{align}
\mathbf{u} = 0 \qquad \text{on walls}.
\end{align}
At the outlet, a reference pressure condition is imposed,
\begin{align}
p = 0.
\end{align}
We report the relative  $L^2$ error of $u$ in Table \ref{tab:sota}.  The PDE residual loss relaxed by pseudo-time stepping is defined analogously to that for the lid-driven cavity flow.

\paragraph{Kolmogorov flow.} 
Kolmogorov flow is a classical benchmark for studying forced incompressible turbulence and nonlinear energy transfer across spatial scales. Owing to its simple geometry and well-controlled sinusoidal forcing, it is widely used to evaluate whether numerical methods can capture complex fluid dynamics and long-range spatiotemporal interactions.

We study the two-dimensional Kolmogorov flow governed by the incompressible Navier-Stokes equations:
\begin{align*}
\mathbf{u}_t + \mathbf{u} \cdot \nabla \mathbf{u} & =- \nabla p + \frac{1}{R e} \Delta \mathbf{u} + \mathbf{f}, \\
\nabla \cdot \mathbf{u} & =0,
\end{align*}
on the unit square domain $(x,y) \in [0, 1]^2$.  

Here $\mathbf{u} = (u,v)$ represents the time-varying velocity field, and $\mathbf{f}$ denotes the external forcing term that maintains the flow structure. The system evolves from a random initial state and develops characteristic flow patterns, where energy transfers between different spatial scales through nonlinear interactions and viscous dissipation.

For our study, the system is driven by a sinusoidal forcing $\mathbf{f} =(2 \sin(4 \pi y), 0)$. The numerical experiment initializes with a random initial condition and evolves until $T=0.8$. The model's performance is quantified by the relative $L^2$ error of vorticity (Table \ref{tab:sota}).  The PDE residual loss relaxed by pseudo-time stepping is defined analogously to that for the lid-driven cavity flow.

\paragraph{Rayleigh--Taylor instability.}
Rayleigh--Taylor instability is a classical buoyancy-driven flow instability that occurs when a heavier fluid lies above a lighter one. Small perturbations grow over time and develop into complex plume-like structures, making this problem a useful benchmark for testing whether PINNs can capture strongly nonlinear flow dynamics. 

We consider a coupled flow--temperature system in the rectangular domain \((x,y)\in[0,1]\times[0,2]\):
\begin{align}
    \mathbf{u}_t + \mathbf{u}\cdot\nabla \mathbf{u} &= -\nabla p + \sqrt{\frac{Pr}{Ra}}\,\Delta \mathbf{u} + T\mathbf{e}_y, \\
    \nabla \cdot \mathbf{u} &= 0, \\
    T_t + \nabla \cdot (\mathbf{u}T) &= \frac{1}{\sqrt{Pr\,Ra}}\, \Delta T.
\end{align}
Here \(T\) denotes the temperature field, which acts as a density proxy under the Boussinesq approximation, \(Pr\) is the Prandtl number, and \(Ra\) is the Rayleigh number. In our experiments, we set \(Pr=0.71\) and \(Ra=10^6\). Periodic boundary conditions are imposed in the horizontal direction for both \(\mathbf{u}\) and \(T\), while Dirichlet conditions \(\mathbf{u}=T=0\) are enforced on the top and bottom boundaries. We evaluate model accuracy using the relative \(L^2\) error of the temperature field, with results reported in Table~\ref{tab:sota}. 

We define the network output as
\[
\mathbf{u}_\theta(t,x,y)
=
\bigl(
u_\theta(t,x,y),\,
v_\theta(t,x,y),\,
p_\theta(t,x,y),\,
T_\theta(t,x,y)
\bigr).
\]
The corresponding PDE residuals are defined as
\begin{align}
    \mathcal{R}_u[\mathbf{u}_\theta](t,x,y)
    &:=
    \partial_t u_\theta
    + u_\theta \partial_x u_\theta
    + v_\theta \partial_y u_\theta
    + \partial_x p_\theta
    - \sqrt{\frac{Pr}{Ra}}
    \left(
    \partial_{xx}u_\theta+\partial_{yy}u_\theta
    \right),\\
    \mathcal{R}_v[\mathbf{u}_\theta](t,x,y)
    &:=
    \partial_t v_\theta
    + u_\theta \partial_x v_\theta
    + v_\theta \partial_y v_\theta
    + \partial_y p_\theta
    - \sqrt{\frac{Pr}{Ra}}
    \left(
    \partial_{xx}v_\theta+\partial_{yy}v_\theta
    \right)
    - T_\theta,\\
    \mathcal{R}_c[\mathbf{u}_\theta](t,x,y)
    &:=
    \partial_x u_\theta + \partial_y v_\theta,\\
    \mathcal{R}_T[\mathbf{u}_\theta](t,x,y)
    &:=
    \partial_t T_\theta
    + \partial_x(u_\theta T_\theta)
    + \partial_y(v_\theta T_\theta)
    - \frac{1}{\sqrt{Pr\,Ra}}\,\partial_{tt}T_\theta .
\end{align}

Let
\[
X_{\mathrm{int}}
=
\{(t^i_{\mathrm{int}},x^i_{\mathrm{int}},y^i_{\mathrm{int}})\}_{i=1}^{N_{\mathrm{int}}}
\subset (0,0.8)\times(0,1)\times(0,2)
\]
denote the set of interior collocation points. Then the PDE residual loss relaxed by pseudo-time stepping is given by
\begin{align}
    \mathcal{L}_{r}^{\mathrm{pts}}(\theta^k;\theta^{k-1},X_{\mathrm{int}})
    &:=
    \lambda_{ru}\mathcal{L}_{ru}^{\mathrm{pts}}
    +\lambda_{rv}\mathcal{L}_{rv}^{\mathrm{pts}}
    +\lambda_{rc}\mathcal{L}_{rc}^{\mathrm{pts}}
    +\lambda_{rT}\mathcal{L}_{rT}^{\mathrm{pts}},
\end{align}
where
\begin{align}
    \mathcal{L}_{ru}^{\mathrm{pts}}(\theta^k;\theta^{k-1},X_{\mathrm{int}})
    &:=
    \frac{1}{N_{\mathrm{int}}}
    \sum_{i=1}^{N_{\mathrm{int}}}
    \left|
    \frac{
    u_{\theta^k}(t^i_{\mathrm{int}},x^i_{\mathrm{int}},y^i_{\mathrm{int}})
    -
    u_{\theta^{k-1}}(t^i_{\mathrm{int}},x^i_{\mathrm{int}},y^i_{\mathrm{int}})
    }{\tau_u}
    +
    \mathcal{R}_u[\mathbf{u}_{\theta^k}](t^i_{\mathrm{int}},x^i_{\mathrm{int}},y^i_{\mathrm{int}})
    \right|^2,\\
    \mathcal{L}_{rv}^{\mathrm{pts}}(\theta^k;\theta^{k-1},X_{\mathrm{int}})
    &:=
    \frac{1}{N_{\mathrm{int}}}
    \sum_{i=1}^{N_{\mathrm{int}}}
    \left|
    \frac{
    v_{\theta^k}(t^i_{\mathrm{int}},x^i_{\mathrm{int}},y^i_{\mathrm{int}})
    -
    v_{\theta^{k-1}}(t^i_{\mathrm{int}},x^i_{\mathrm{int}},y^i_{\mathrm{int}})
    }{\tau_v}
    +
    \mathcal{R}_v[\mathbf{u}_{\theta^k}](t^i_{\mathrm{int}},x^i_{\mathrm{int}},y^i_{\mathrm{int}})
    \right|^2,\\
    \mathcal{L}_{rc}^{\mathrm{pts}}(\theta^k;\theta^{k-1},X_{\mathrm{int}})
    &:=
    \frac{1}{N_{\mathrm{int}}}
    \sum_{i=1}^{N_{\mathrm{int}}}
    \left|
    \frac{
    p_{\theta^k}(t^i_{\mathrm{int}},x^i_{\mathrm{int}},y^i_{\mathrm{int}})
    -
    p_{\theta^{k-1}}(t^i_{\mathrm{int}},x^i_{\mathrm{int}},y^i_{\mathrm{int}})
    }{\tau_p}
    +
    \mathcal{R}_c[\mathbf{u}_{\theta^k}](t^i_{\mathrm{int}},x^i_{\mathrm{int}},y^i_{\mathrm{int}})
    \right|^2,\\
    \mathcal{L}_{rT}^{\mathrm{pts}}(\theta^k;\theta^{k-1},X_{\mathrm{int}})
    &:=
    \frac{1}{N_{\mathrm{int}}}
    \sum_{i=1}^{N_{\mathrm{int}}}
    \left|
    \frac{
    T_{\theta^k}(t^i_{\mathrm{int}},x^i_{\mathrm{int}},y^i_{\mathrm{int}})
    -
    T_{\theta^{k-1}}(t^i_{\mathrm{int}},x^i_{\mathrm{int}},y^i_{\mathrm{int}})
    }{\tau_T}
    +
    \mathcal{R}_T[\mathbf{u}_{\theta^k}](t^i_{\mathrm{int}},x^i_{\mathrm{int}},y^i_{\mathrm{int}})
    \right|^2.
\end{align}

\clearpage
\subsection{Additional visualizations}

\begin{figure}[h]
    \centering
    \includegraphics[width=1.0\linewidth]{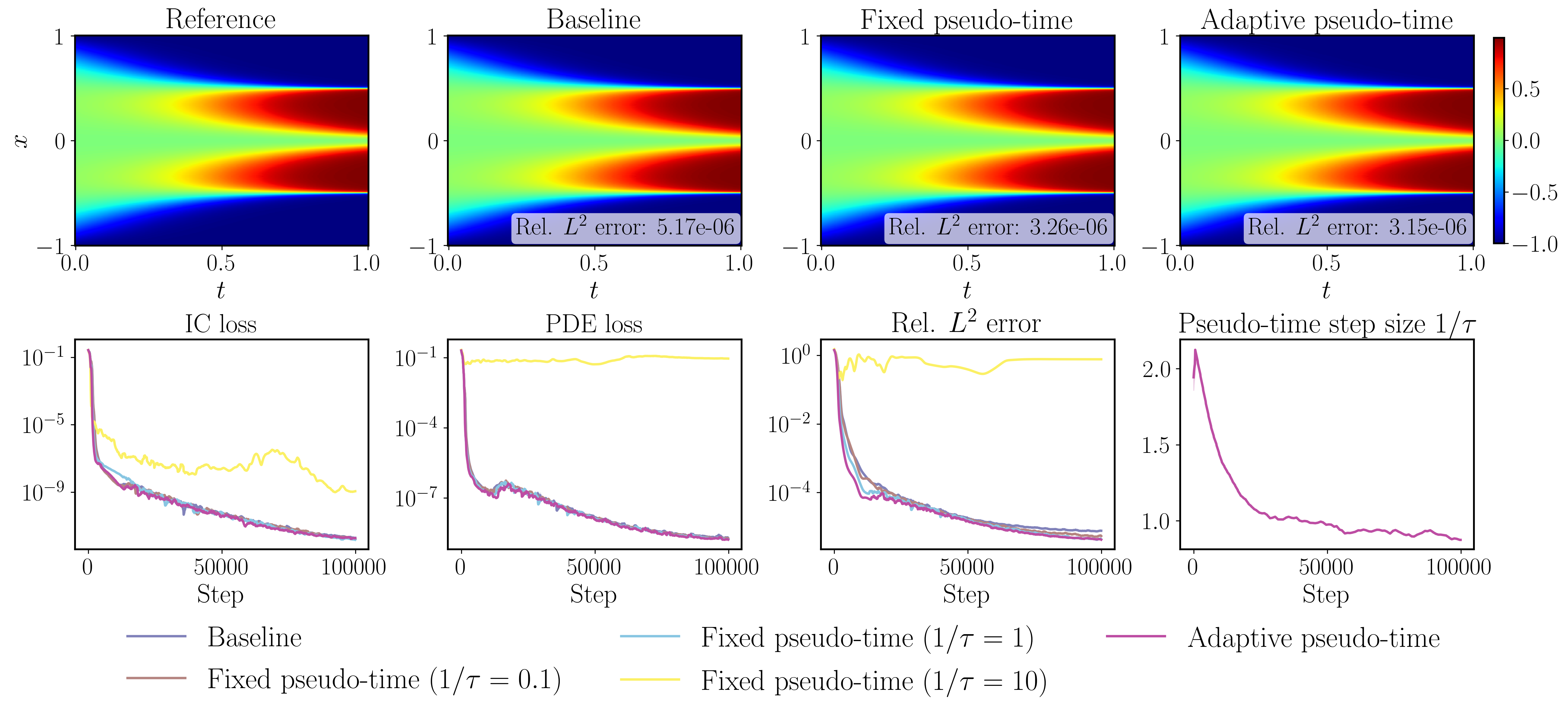}
\caption{{\em Allen--Cahn equation.} Comparison of the reference solution, the baseline PINN, and the baseline PINN with fixed and adaptive pseudo-time stepping. Top: predicted solutions. Bottom: training loss, relative \(L^2\) error, and pseudo-time step size histories.}
    \label{fig:allen_cahn_results}
\end{figure}

\begin{figure}[h]
    \centering
    \includegraphics[width=1.0\linewidth]{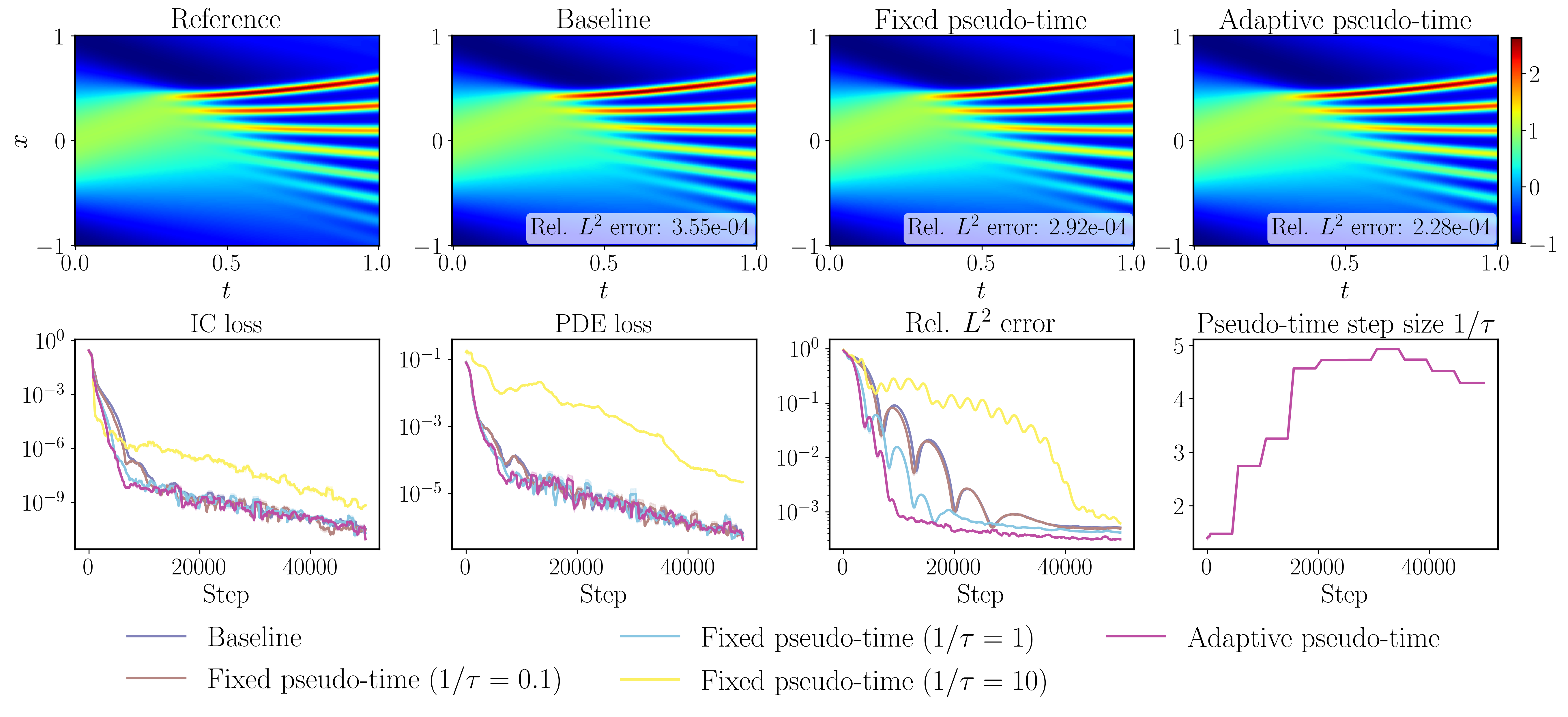}
\caption{{\em Korteweg–De Vries equation.} Comparison of the reference solution, the baseline PINN, and the baseline PINN with fixed and adaptive pseudo-time stepping. Top: predicted solutions. Bottom: training loss, relative \(L^2\) error, and pseudo-time step size histories.}
    \label{fig:kdv_results}
\end{figure}

\begin{figure}[h]
    \centering
    \includegraphics[width=1.0\linewidth]{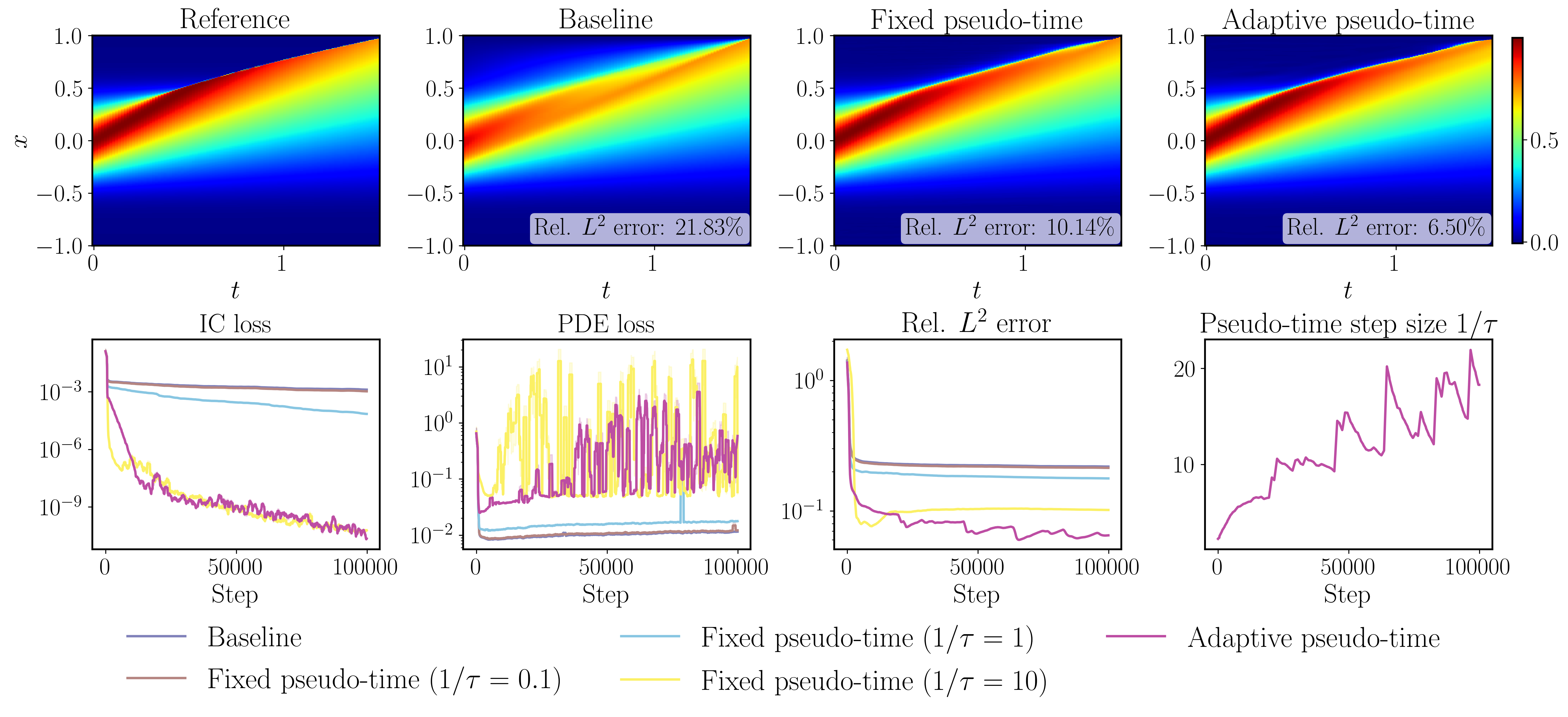}
\caption{{\em Inviscid Burgers' equation.} Comparison of the reference solution, the baseline PINN, and the baseline PINN with fixed and adaptive pseudo-time stepping. Top: predicted solutions. Bottom: training loss, relative \(L^2\) error, and pseudo-time step size histories.}
    \label{fig:inviscid_burgers_results}
\end{figure}

\begin{figure}[h]
    \centering
    \includegraphics[width=1.0\linewidth]{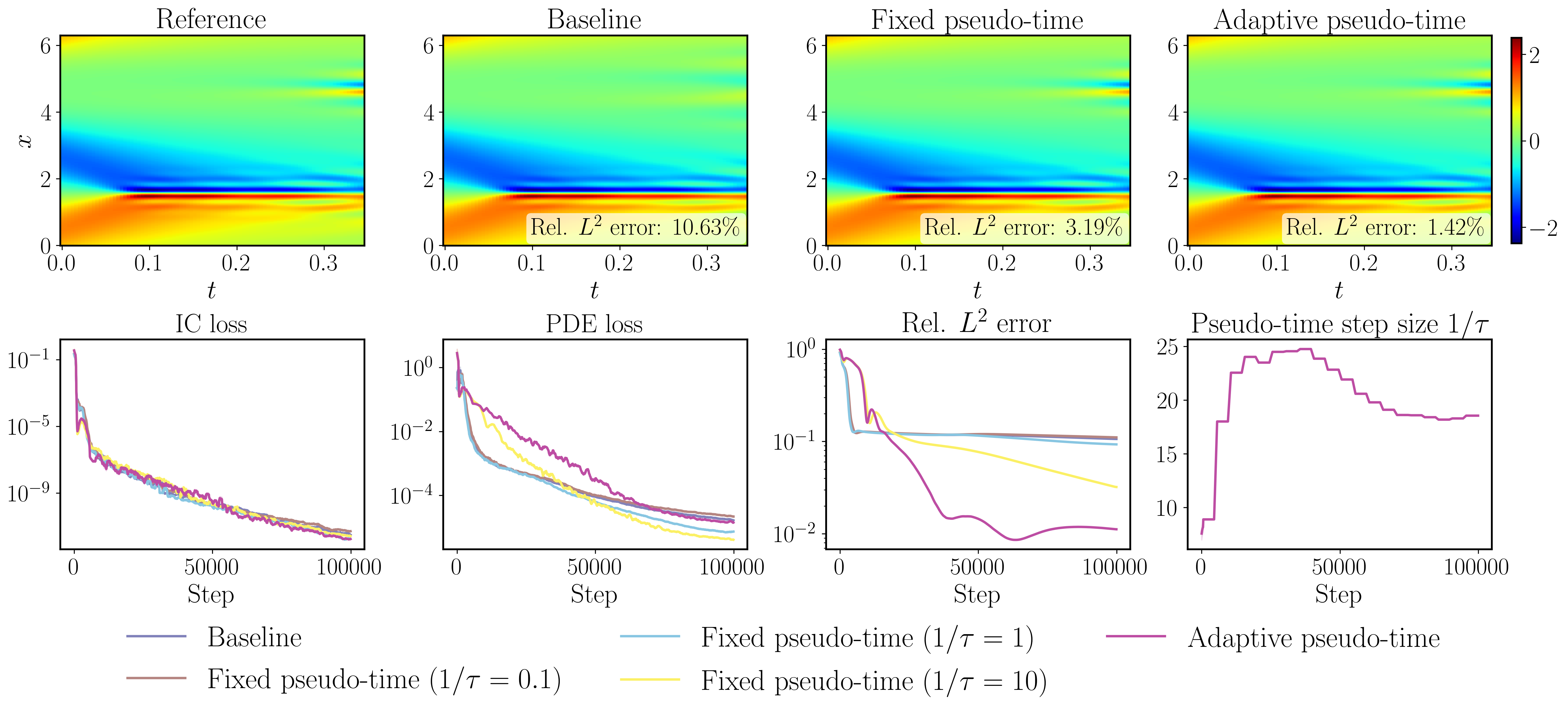}
\caption{{\em Kuramoto–Sivashinsky equation.} Comparison of the reference solution, the baseline PINN, and the baseline PINN with fixed and adaptive pseudo-time stepping. Top: predicted solutions. Bottom: training loss, relative \(L^2\) error, and pseudo-time step size histories.}
    \label{fig:ks_results}
\end{figure}

\begin{figure}[h]
    \centering

    \begin{subfigure}{1.0\linewidth}
        \centering
        \includegraphics[width=\linewidth]{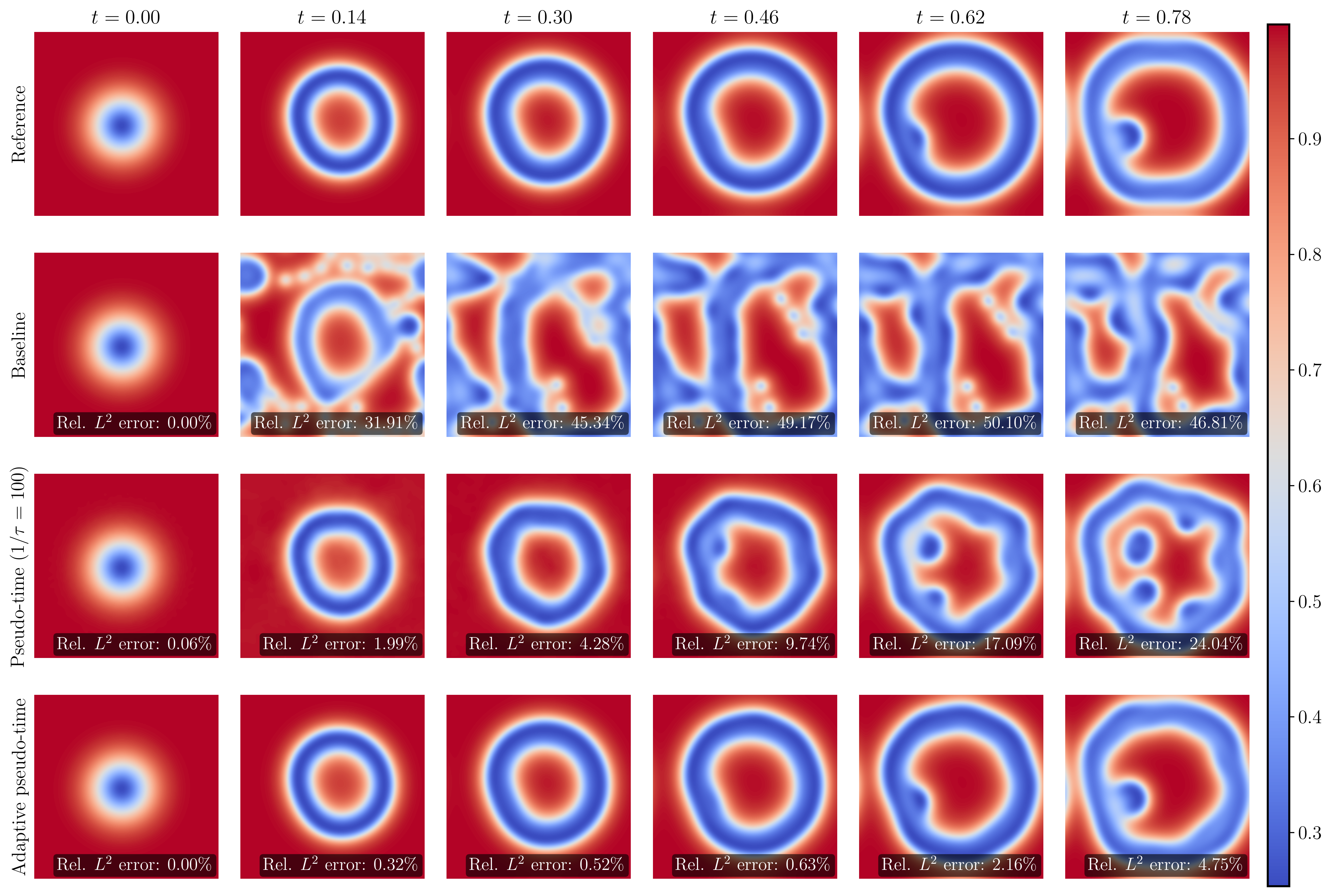}
    \end{subfigure}

    \vspace{0.1em}

    \begin{subfigure}{1.0\linewidth}
        \centering
        \includegraphics[width=\linewidth]{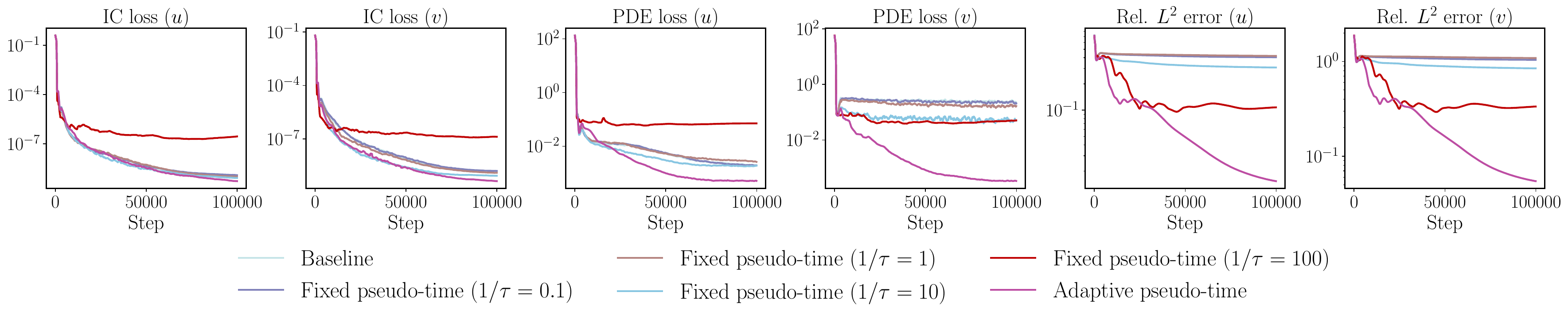}
    \end{subfigure}

        \caption{{\em Grey-Scott equations}. Top: Comparison of the reference solution, the baseline PINN, and the baseline PINN with fixed and adaptive pseudo-time stepping. Bottom: training loss and relative \(L^2\) error histories.}
    \label{fig:gs_results}
\end{figure}

\begin{figure}[h]
    \centering

    \begin{subfigure}[t]{1.0\linewidth}
        \centering
        \includegraphics[width=\linewidth]{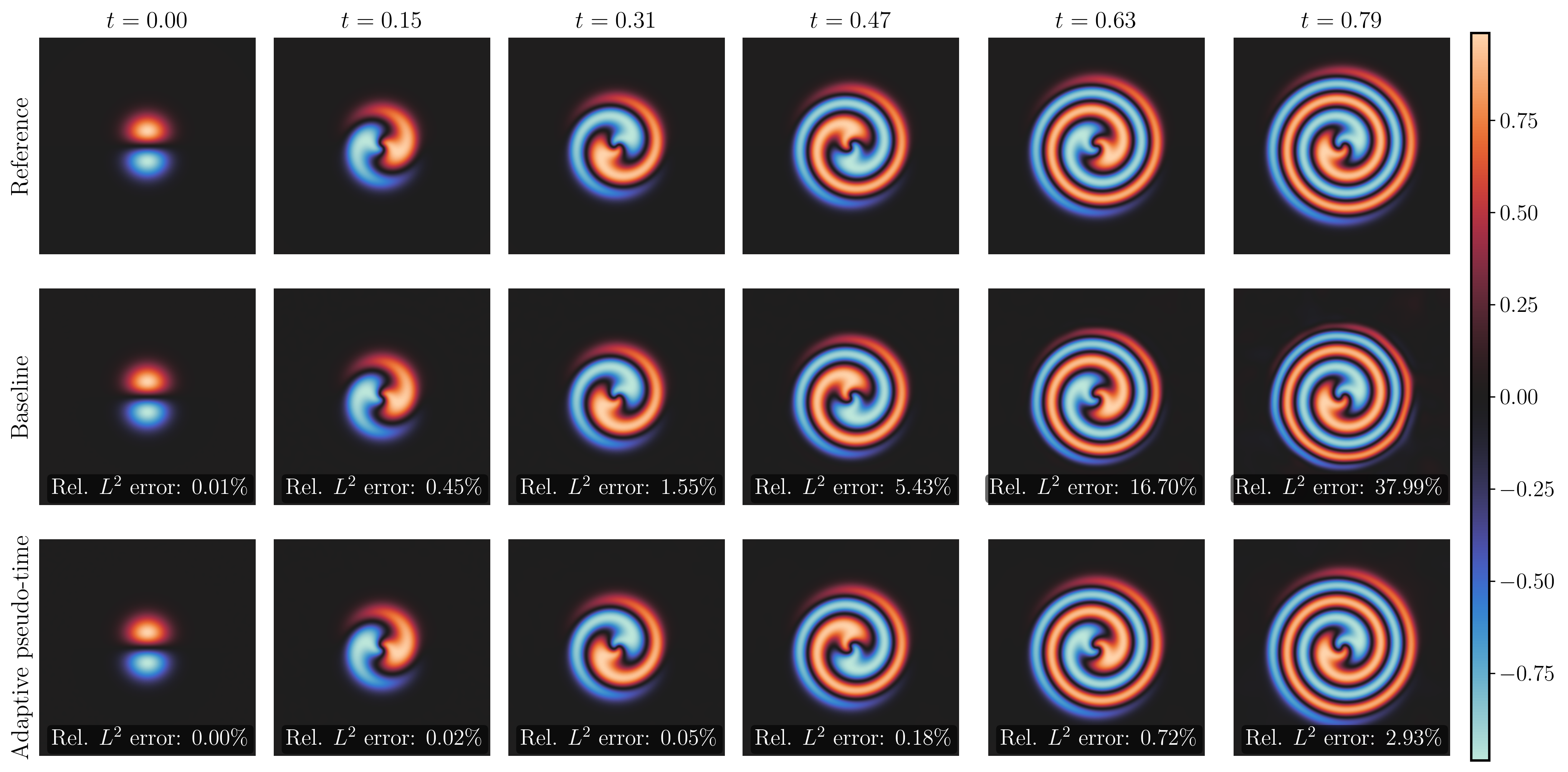}
    \end{subfigure}

    \vspace{0.1em}

    \begin{subfigure}[t]{1.0\linewidth}
        \centering
        \includegraphics[width=\linewidth]{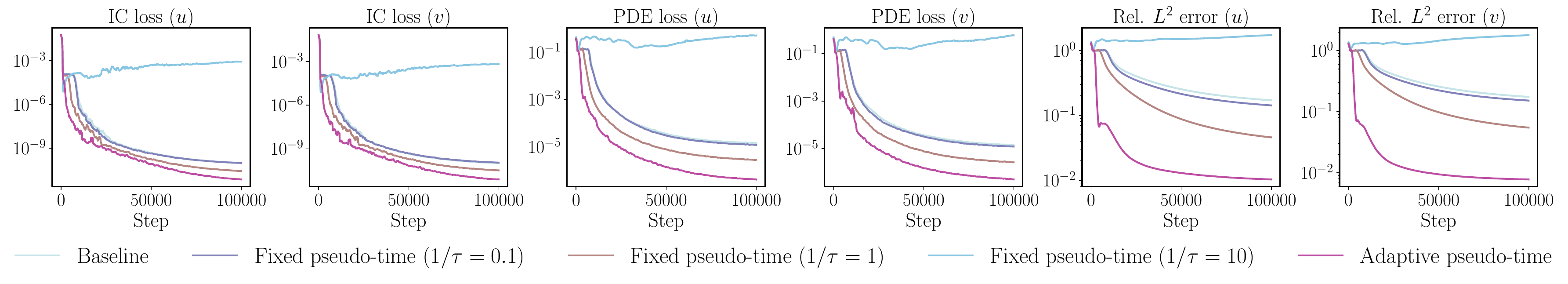}
    \end{subfigure}
    \caption{{\em Ginzburg–Landau equations}. Top: Comparison of the reference solution, the baseline PINN, and the baseline PINN with adaptive pseudo-time stepping. Bottom: training loss and relative \(L^2\) error histories.}
    \label{fig:gl_results}
\end{figure}

\begin{figure}[h]
    \centering
    \begin{subfigure}{\linewidth}
        \centering
        \includegraphics[width=\linewidth]{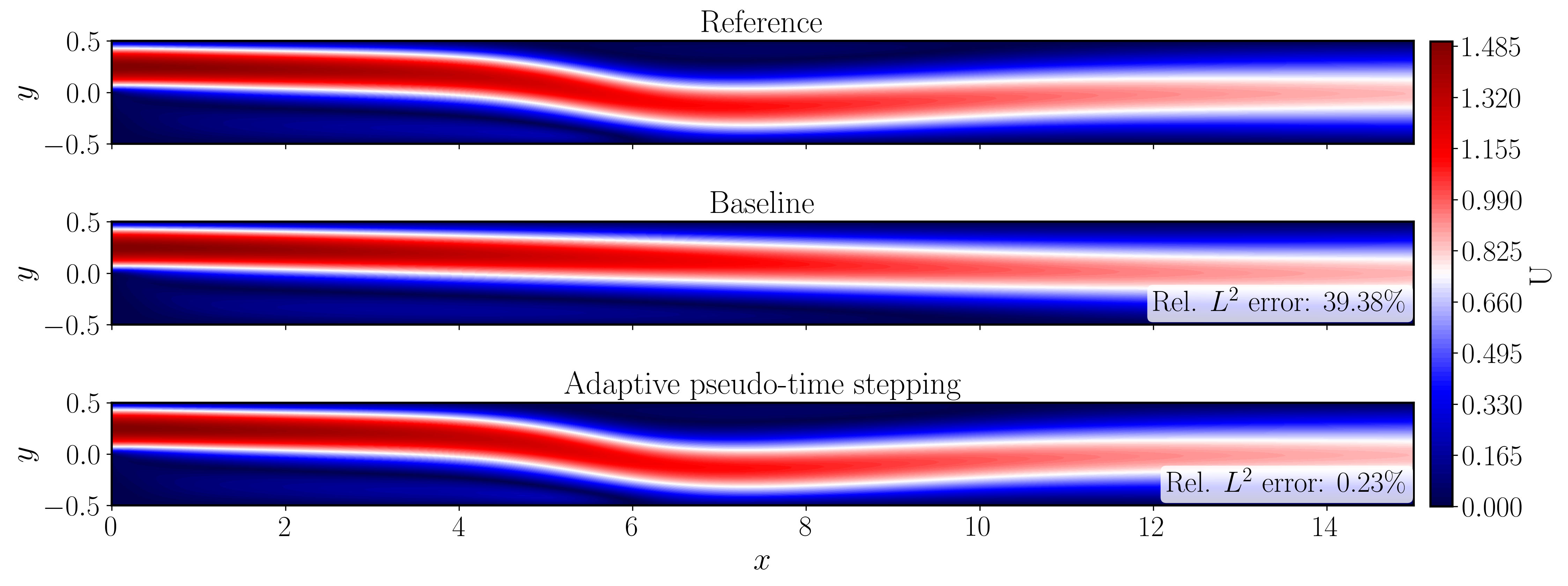}
    \end{subfigure}

    \begin{subfigure}{\linewidth}
        \centering
        \includegraphics[width=\linewidth]{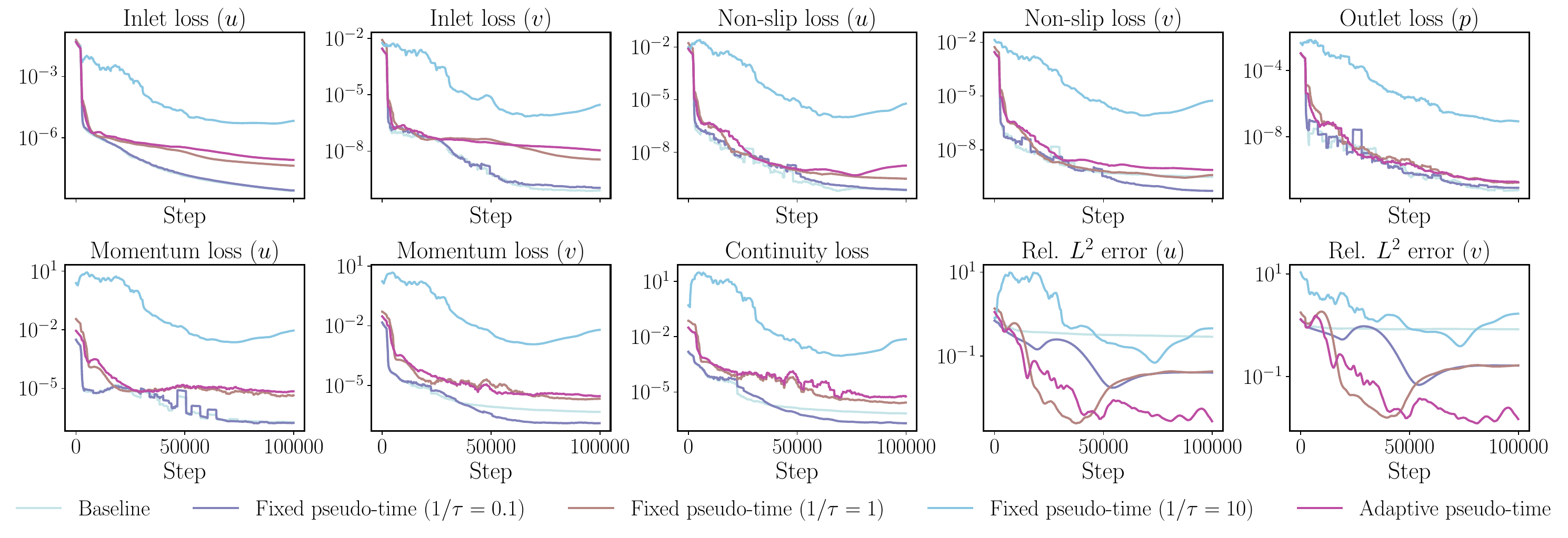}

    \end{subfigure}
\caption{{\em Backward-facing step flow}. Top: Comparison of the reference velocity field $U = \sqrt{u^2 + v^2}$, the baseline PINN, and the baseline PINN with adaptive pseudo-time stepping. Bottom: training loss and relative \(L^2\) error histories.}
    \label{fig:bfs}
\end{figure}

\begin{figure}[h]
    \centering

    \begin{subfigure}[t]{1.0\linewidth}
        \centering
        \includegraphics[width=\linewidth]{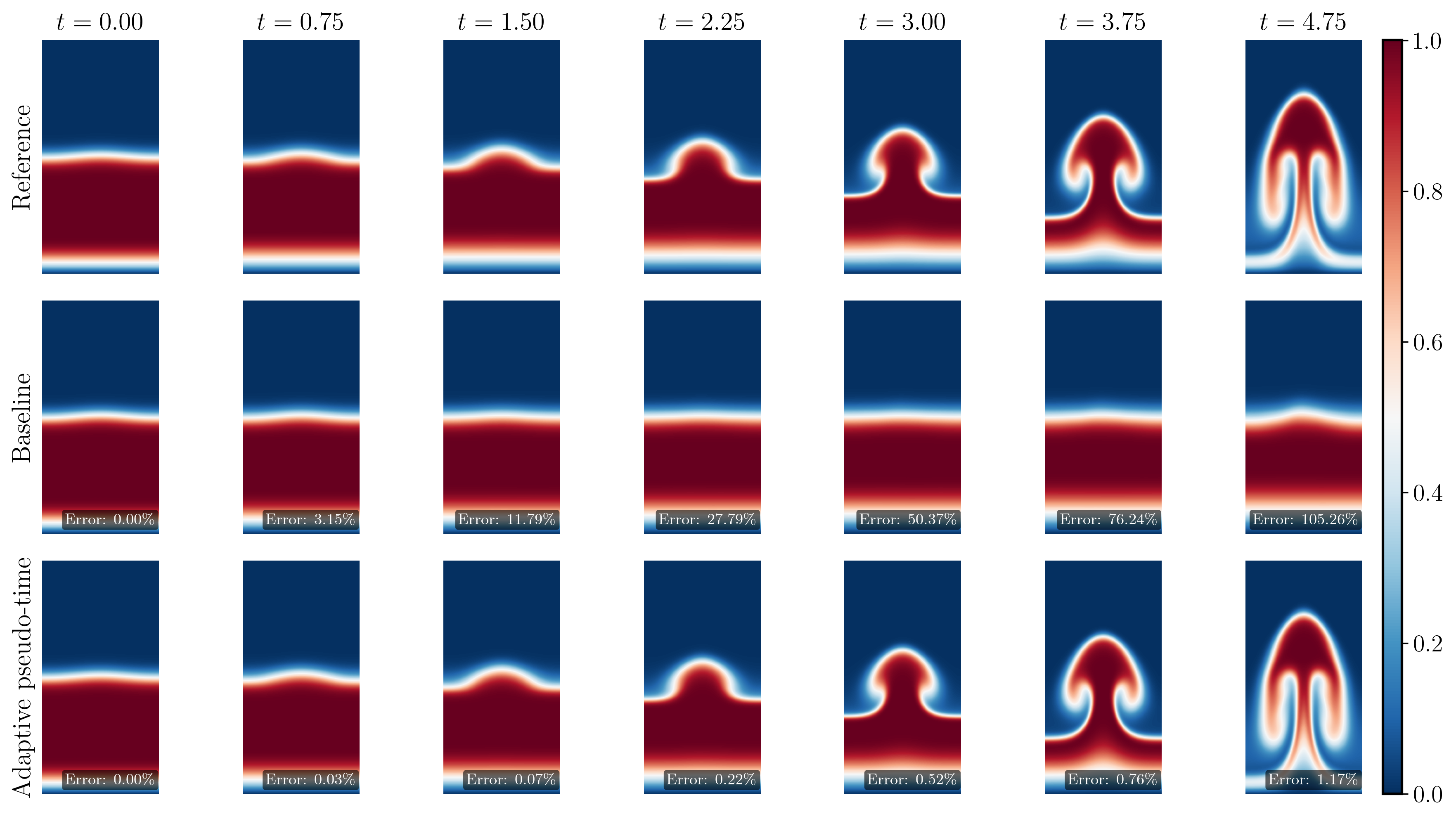}
    \end{subfigure}

    \vspace{0.1em}

    \begin{subfigure}[t]{1.0\linewidth}
        \centering
        \includegraphics[width=\linewidth]{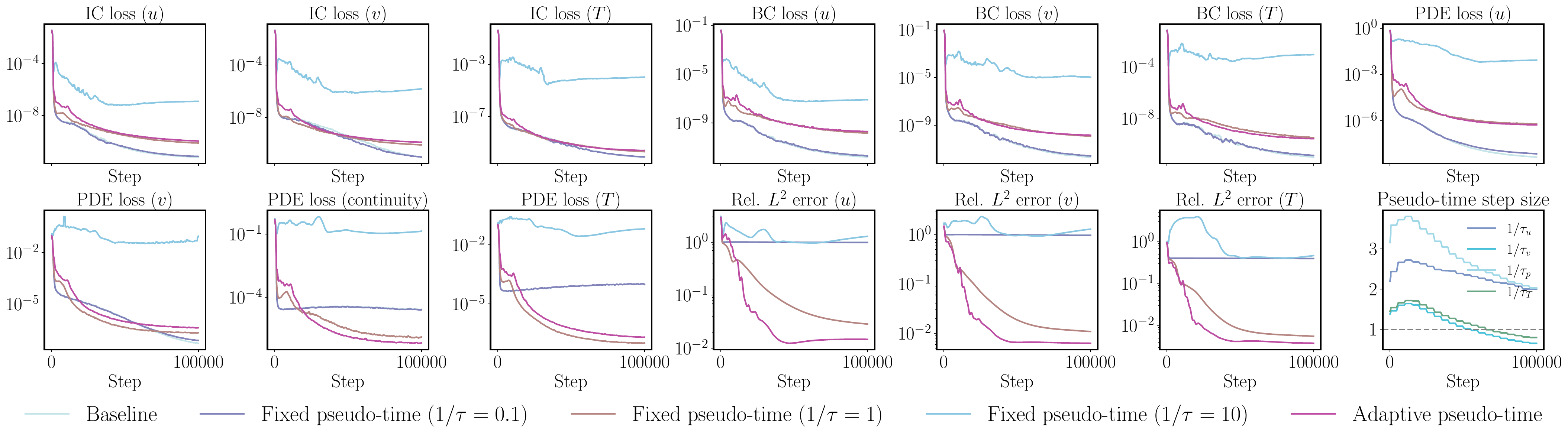}
    \end{subfigure}

    \caption{{\em Rayleigh-Taylor instability}. Top: Comparison of the reference temperature field, the baseline PINN, and the baseline PINN with adaptive pseudo-time stepping. Bottom: training loss and relative \(L^2\) error histories.}
    \label{fig:rt_results}
\end{figure}